\author{Antiquus S.~Hippocampus, Natalia Cerebro \& Amelie P. Amygdale \thanks{ Use footnote for providing further information
about author (webpage, alternative address)---\emph{not} for acknowledging
funding agencies. Funding acknowledgements go at the end of the paper.} \\
Department of Computer Science\\
Cranberry-Lemon University\\
Pittsburgh, PA 15213, USA \\
\texttt{\{hippo,brain,jen\}@cs.cranberry-lemon.edu} \\
\And
Ji Q. Ren \& Yevgeny LeNet \\
Department of Computational Neuroscience \\
University of the Witwatersrand \\
Joburg, South Africa \\
\texttt{\{robot,net\}@wits.ac.za} \\
\AND
Coauthor \\
Affiliation \\
Address \\
\texttt{email}
\AND
Kumar Kshitij Patel \\
TTIC \\
\texttt{kkpatel@ttic.edu}
}
\newtheorem{theorem}{Theorem}
\newtheorem{definition}{Definition}
\newtheorem{assumption}{Assumption}
\newtheorem{lemma}{Lemma}
\newtheorem{observation}{Observation}
\newtheorem*{remark}{Remark}
\newcommand{\R}{\mathbb{R}}
\newcommand{\ind}[1]{\mathds{1}[#1]}
\newcommand{\abs}[1]{\left|#1\right|}
\newcommand{\cA}{\mathcal{A}}
\newcommand{\cM}{\mathcal{M}}
\newcommand{\cP}{\mathcal{P}}
\newcommand{\be}{{\bf e}}
\renewcommand{\hat}{\widehat}
\renewcommand{\tilde}{\widetilde}
\newcommand{\nothere}[1]{}
\newcommand{\sign}{\mathrm{sign}}
\newcommand{\norm}[1]{\left\lVert#1\right\rVert_2}
\newcommand\inner[2]{\left\langle #1, #2 \right\rangle}
\renewcommand\u[0]{\boldsymbol{u}}
\renewcommand\v[0]{\boldsymbol{v}}
\newcommand\x[0]{\boldsymbol{x}}
\newcommand\y[0]{\boldsymbol{y}}
\newcommand\ee[0]{\mathbb{E}}
\newcommand\ii[0]{\mathbb{I}}
\newcommand\rr[0]{\mathbb{R}}
\newcommand\aaa[0]{\mathcal{A}}
\newcommand\ddd[0]{\mathcal{D}}
\newcommand\fff[0]{\mathcal{F}}
\newcommand\mmm[0]{\mathcal{M}}
\newcommand\ooo[0]{\mathcal{O}}
\newcommand\ppp[0]{\mathcal{P}}
\newcommand\www[0]{\mathcal{W}}
\newcommand\rb[1]{\left(#1\right)}
\newcommand\cb[1]{\left\{#1\right\}}
\newcommand{\ngrad}[1]{\nabla \tilde F_{#1}}
\title{On the Effect of Defections in Federated Learning\\ and How to Prevent Them}
\author{
  Minbiao Han\thanks{Equal Contribution, the author names are in alphabetical order.} \\ \small University of Chicago \\ \small minbiaohan@uchicago.edu
  \and
  Kumar Kshitij Patel$^*$ \\ \small TTIC \\ \small kkpatel@ttic.edu
  \and
    Han Shao$^*$ \\ \small TTIC \\ \small han@ttic.edu
  \and
  Lingxiao Wang$^*$ \\ \small TTIC \\ \small lingxw@ttic.edu 
}
\begin{document}

\maketitle

\begin{abstract}
Federated learning is a machine learning protocol that enables a large population of agents to collaborate over multiple rounds to produce a single consensus model. There are several federated learning applications where agents may choose to defect permanently—essentially withdrawing from the collaboration—if they are content with their instantaneous model in that round. This work demonstrates the detrimental impact of such defections on the final model's robustness and ability to generalize. We also show that current federated optimization algorithms fail to disincentivize these harmful defections. We introduce a novel optimization algorithm with theoretical guarantees to prevent defections while ensuring asymptotic convergence to an effective solution for all participating agents. We also provide numerical experiments to corroborate our findings and demonstrate the effectiveness of our algorithm.  
\end{abstract}

\section{Introduction}\label{sec:intro}
Collaborative machine learning protocols have not only fueled significant scientific discoveries \citep{bergen2012genome} but are also gaining traction in diverse sectors like healthcare networks \citep{li2019privacy, powell2019nvidia, roth2020federated}, mobile technology \citep{mcmahan_ramage_2017, apple, paulik2021federated}, and financial institutions \citep{shiffman_zarate_deshpande_yeluri_peiravi_2021}. A key factor propelling this widespread adoption is the emerging field of federated learning \citep{mcmahan2016federated}. Federated Learning allows multiple agents (also called devices) and a central server to tackle a learning problem collaboratively without exchanging or transferring any agent's raw data, generally over a series of communication rounds. 

Federated learning comes in various forms, ranging from models trained on millions of peripheral devices like Android smartphones to those trained on a limited number of large data repositories. The survey by \citet{kairouz2019advances} categorizes these two contrasting scales of collaboration, which come with distinct system constraints, as \textit{``cross-device''} and \textit{``cross-silo''} federated learning, respectively. This paper concentrates on a scenario that falls between these two extremes. For example, consider a nationwide medical study led by a government agency to explore the long-term side effects of COVID-19. This agency selects several dozen hospitals to partake in the study to develop a robust model applicable to the entire national populace. Specifically, the server (the governmental agency) has a distribution $\cP$ over agents (hospitals), and each agent $m$ maintains a local data distribution $\ddd_m$ (the distribution of its local patients). The server's goal is to find a model $w$ with low \textbf{population loss}, i.e., 
\begin{equation}
    \mathbb{E}_{m\sim \cP}[F_m(w):= \ee_{z\sim\ddd_m}[f(w;z)]]\,,
\end{equation}
where \(f(w;z)\) represents the loss of model \(w\) at data point \(z\).

Due to constraints like communication overhead, latency, and limited bandwidth, training a model across all agents (say all the hospitals in a country) is infeasible. The server, therefore, aims to achieve its goal by sampling $M$ agents from $\cP$ and minimizing the ``average loss'', given by
\begin{align}\label{eq:relaxed}
    F(w) :=  \frac{1}{M}\sum_{m=1}^M F_m(w) \,.
\end{align}

\begin{figure*}
     \centering
     \begin{subfigure}[b]{0.32\textwidth}
         \centering
         \includegraphics[width=\textwidth]{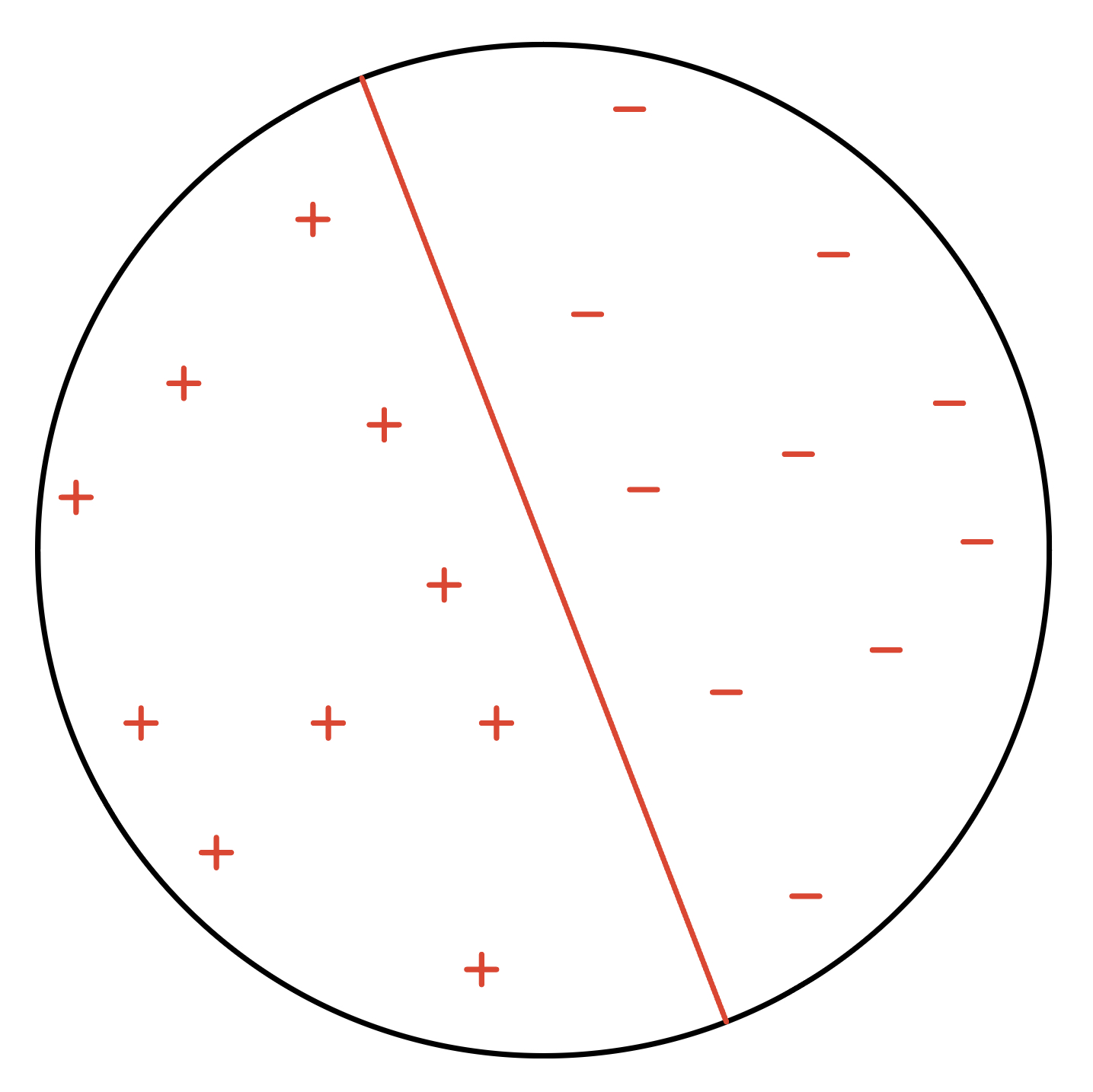}
         \caption{$\ddd_{red}$}
         \label{fig:red}
     \end{subfigure}
     \hfill
     \begin{subfigure}[b]{0.32\textwidth}
         \centering
         \includegraphics[width=\textwidth]{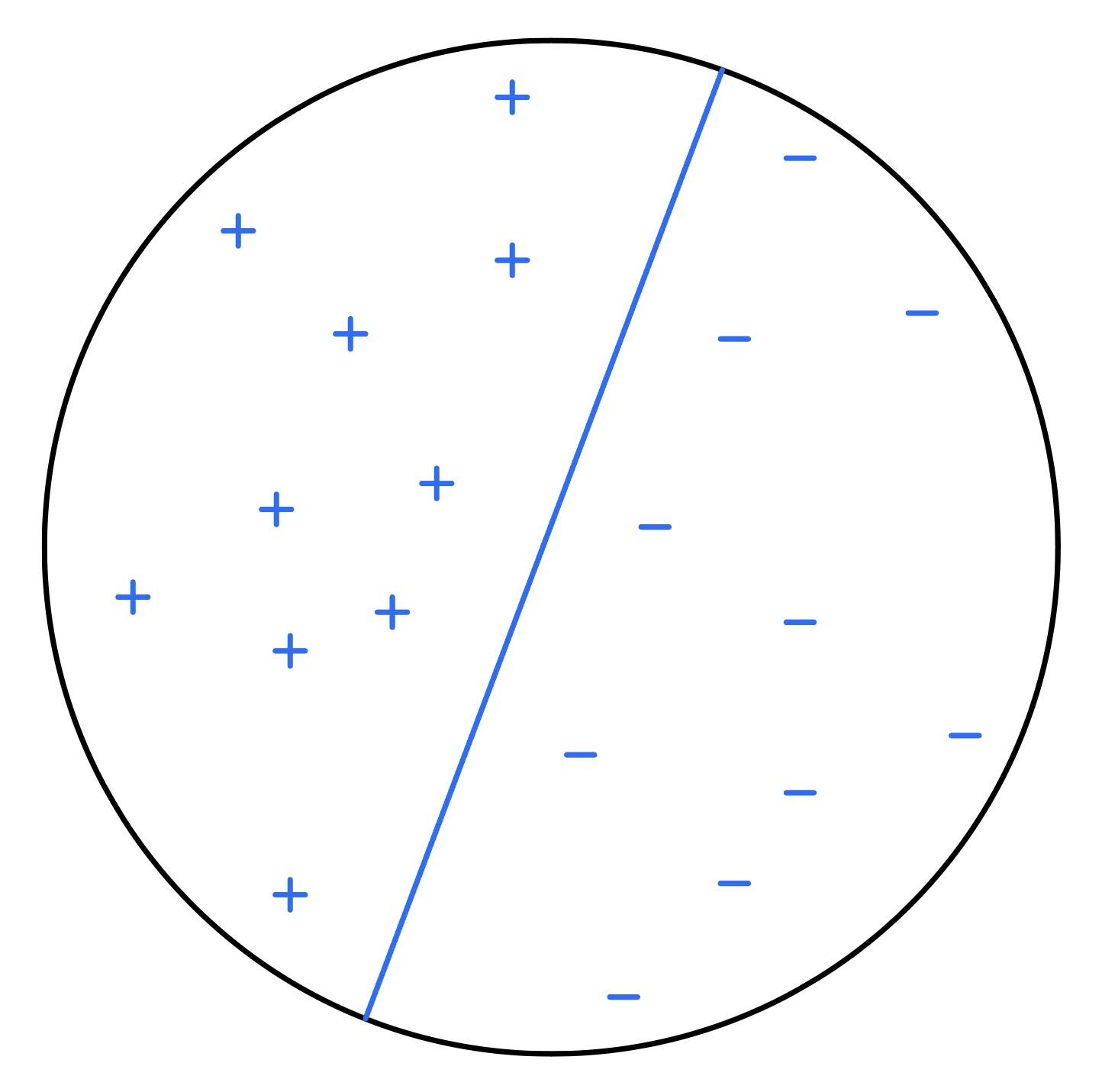}
         \caption{$\ddd_{blue}$}
         \label{fig:blue}
     \end{subfigure}
     \hfill
     \begin{subfigure}[b]{0.32\textwidth}
         \centering
         \includegraphics[width=\textwidth]{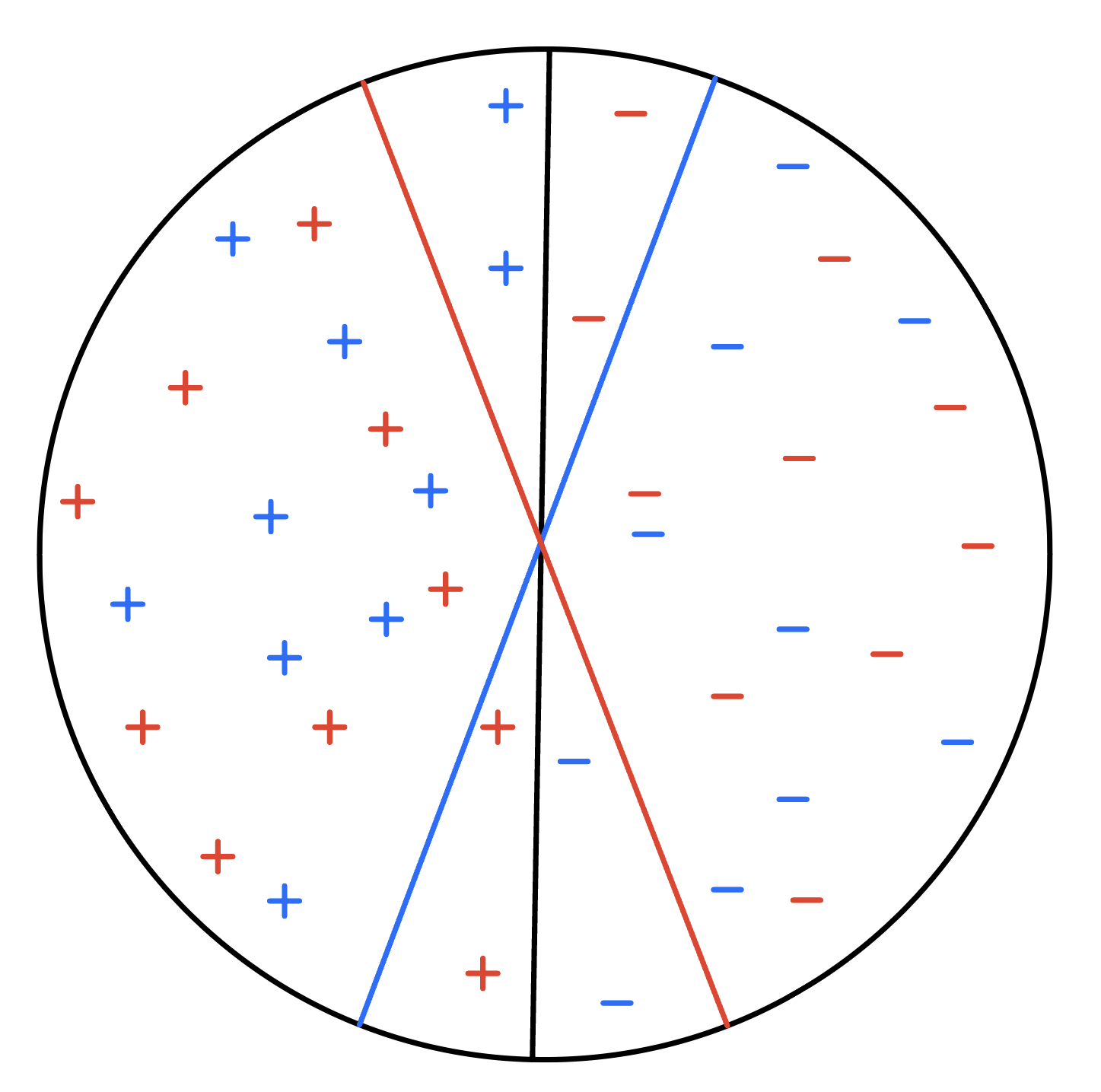}
         \caption{$\ddd_{red}\cup \ddd_{blue}$}
         \label{fig:combined}
     \end{subfigure}
        \caption{ 
        Consider two agents \{red, blue\} with distributions $\ddd_{red}$ and $\ddd_{blue}$ on the ball in $\R^2$. Figures \ref{fig:red} and \ref{fig:blue} depict these distributions, where the number of $+$ and $-$ represent point masses in the density function of each distribution. Note that each distribution has multiple zero-error linear classifiers, but we depict the max-margin classifiers in plots (a) and (b). Figure \ref{fig:combined} shows the combined data from these agents and the best separator (black), classifying the combined data perfectly. However, the red (blue) separator on the blue (red) data has a higher error rate of $20$ percent. Thus, if either agent defects during training, any algorithm converging to the max-margin separator for the remaining agent will incur an average $10$ percent error.
        }
        \label{fig:defection_cartoon}
\end{figure*}

Suppose the sample size \(M\) is sufficiently large, and the server can identify a model with low average loss. In that case, the model will likely also have low population loss, provided the data is only moderately heterogeneous. Hence, our objective (from the agency's perspective) is to find a model with a low average loss. When all $M$ agents share some optima---meaning a model exists that satisfies all participating hospitals---such a model is also universally desirable. However, this paper reveals that this collaborative approach falters when agents aim to reduce their workload or, in the context of hospitals, aim to minimize data collection and local computation.

More specifically, prevalent federated optimization techniques like federated averaging (\textsc{FedAvg}) \citep{mcmahan2016communication} and mini-batch SGD \citep{dekel2012optimal, woodworth2020minibatch} (refer to appendix~\ref{app:setup}), employ a strategy of intermittent communication. Specifically, for each round $r = 1, \ldots, R$:
\begin{itemize}
    \item 
    \textbf{Server-to-Agent Communication:} The server disseminates a synchronized model $w_{r-1}$ to all participating agents;
    \item \textbf{Local Computation:} Every agent initiates local training of its model from $w_{r-1}$ using its own dataset;
    \item \textbf{Agent-to-Server Communication:} Each agent transmits the locally computed updates back to the server;
    \item \textbf{Model Update:} The server compiles these updates and revises the synchronized model to $w_r$.
\end{itemize}

If all agents provide their assigned model updates and collaborate effectively, the synchronized model should converge to a final model $w_R$, with a low average loss \( F(w_R) \). However, agents face various costs, such as data collection, computational effort, and potential privacy risks. Thus, rational agents might exit the process once they are content with the performance of their current model on their local data. For instance, in our example, hospitals may prioritize a model that performs well solely for their local patient demographics.

\begin{figure*}
    \vspace{-1em}
    \centering
    \includegraphics[width=0.6\textwidth]{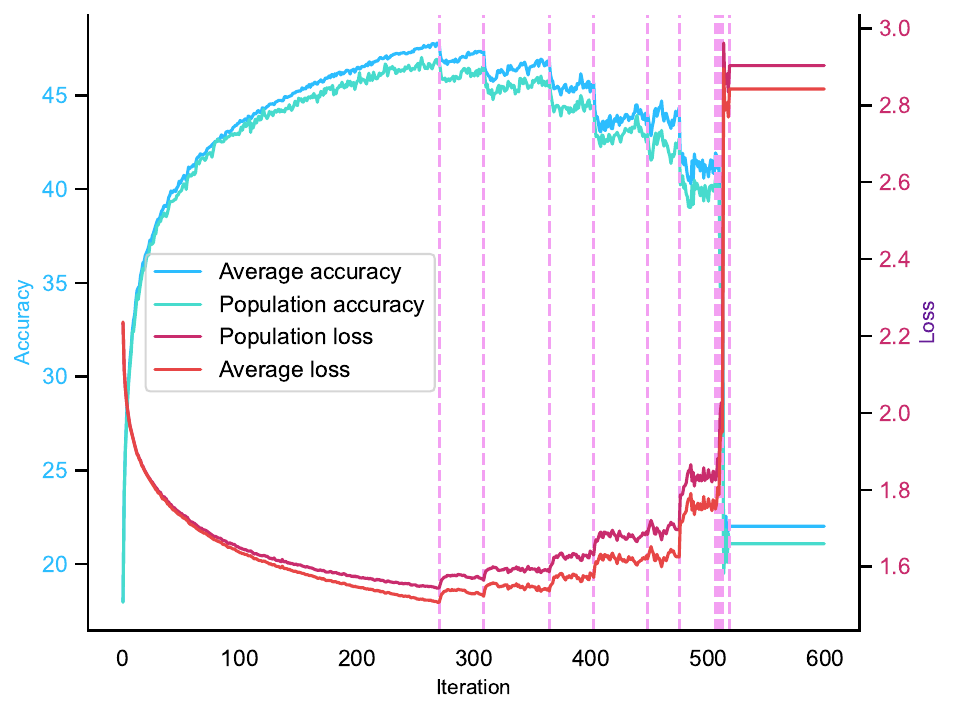}
    \caption{
    Impact of defections on both average and population accuracy metrics when using federated averaging with local update steps $K=5$ and step size $\eta = 0.1$. The CIFAR10 dataset \citep{krizhevsky2009learning} is processed to achieve a heterogeneity level of $q = 0.9$ (refer to appendix \ref{sec:exp} for data generation details). Agents utilizing a two-layer fully connected neural network with a softplus activation function to achieve a precision/loss threshold of $\epsilon = 0.2$. Dashed lines mark the iterations when an agent defects. It is evident that each defection adversely affects the model's accuracy. For example, the peak average accuracy drops from approximately 46\% prior to any defections to around 22\% after 500 iterations. A similar decline is observed in population accuracy.   
     }
    \label{fig:defection_time}
\end{figure*}

\textbf{\textit{Defections}}, or the act of permanently exiting before completing $R$ rounds, can adversely affect the quality of the final model $w_R$. This occurs because defecting agents make their data unavailable, leading to a loss of crucial information for the learning process. 
The impact is particularly significant if the defecting agent had a large or diverse dataset or was the sole contributor to a specific type of data (refer to Figure \ref{fig:defection_cartoon}). Consequently, the final model $w_R$ may fail to achieve a low average loss $F(w_R)$. For empirical evidence, see Figures \ref{fig:defection_time} and \ref{fig:defection_train}, which show the effect of defections on the average objective. Defections can generally lead to the following issues:

\begin{itemize}
    \item \textbf{Suboptimal Generalization.} Even if not all data is lost due to defections, the resulting dataset may become imbalanced. This can happen if the remaining agents do not adequately represent the broader population or their updates are too similar. The model may overfit the remaining data, leading to poor performance for unsampled agents from $\ppp$. This is experimentally simulated in Figure \ref{fig:defection_test}.
    \item \textbf{Inconsistent Final Performance.} When agents belong to protected groups (e.g., based on gender or race), it is crucial for the model to be fair across these groups \citep{ezzeldin2021fairfed}. However, defections can result in a model that performs poorly for some agents. The effects of defection on the performance of the best, worst, and median agents are illustrated in Figure \ref{fig:defection_worker}.
    \item \textbf{Disproportionate Workload.} Even if defections do not directly harm the model's performance, they can increase the workload for the remaining agents. These agents may need to provide additional updates to compensate for the lost data, leading to increased latency and bandwidth usage. This is particularly problematic if agents are geographically dispersed and can discourage them from continued participation due to resource constraints.
\end{itemize}

\begin{figure*}
        \centering
        \begin{subfigure}[b]{\textwidth}
        \centering
        \includegraphics[width=0.9\textwidth]{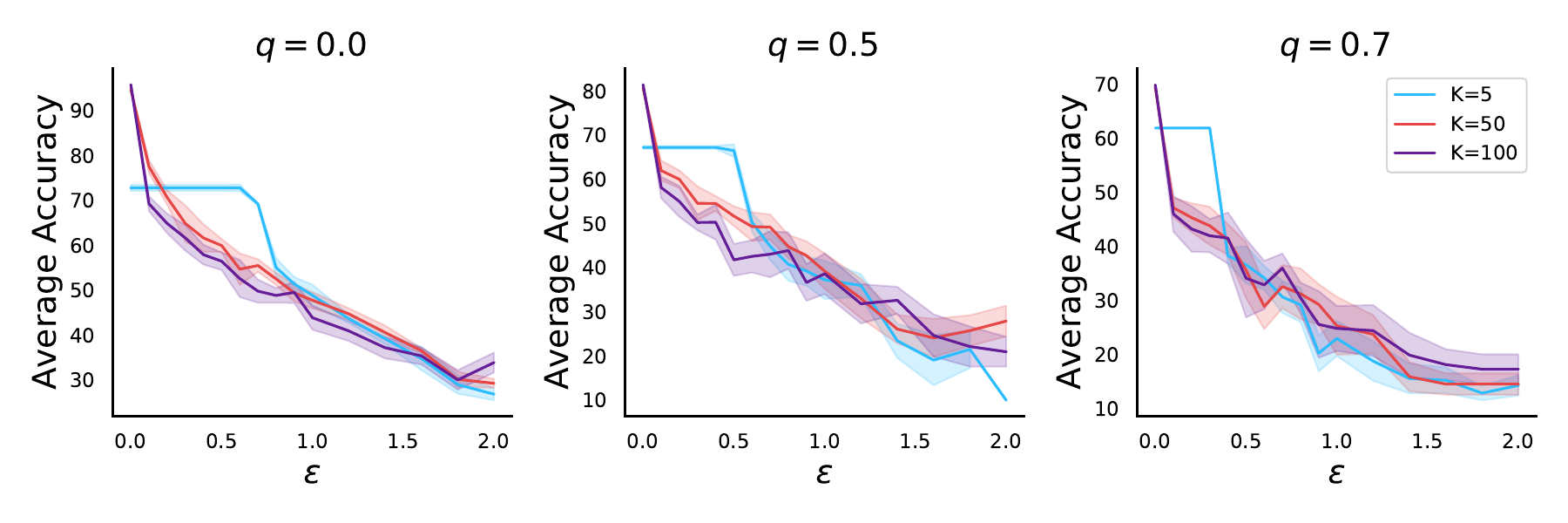}
        \caption{Effect of defection on average accuracy.}    
        \label{fig:defection_train}
        \end{subfigure}
        \begin{subfigure}[b]{\textwidth}
        \centering
        \includegraphics[width=0.9\textwidth]{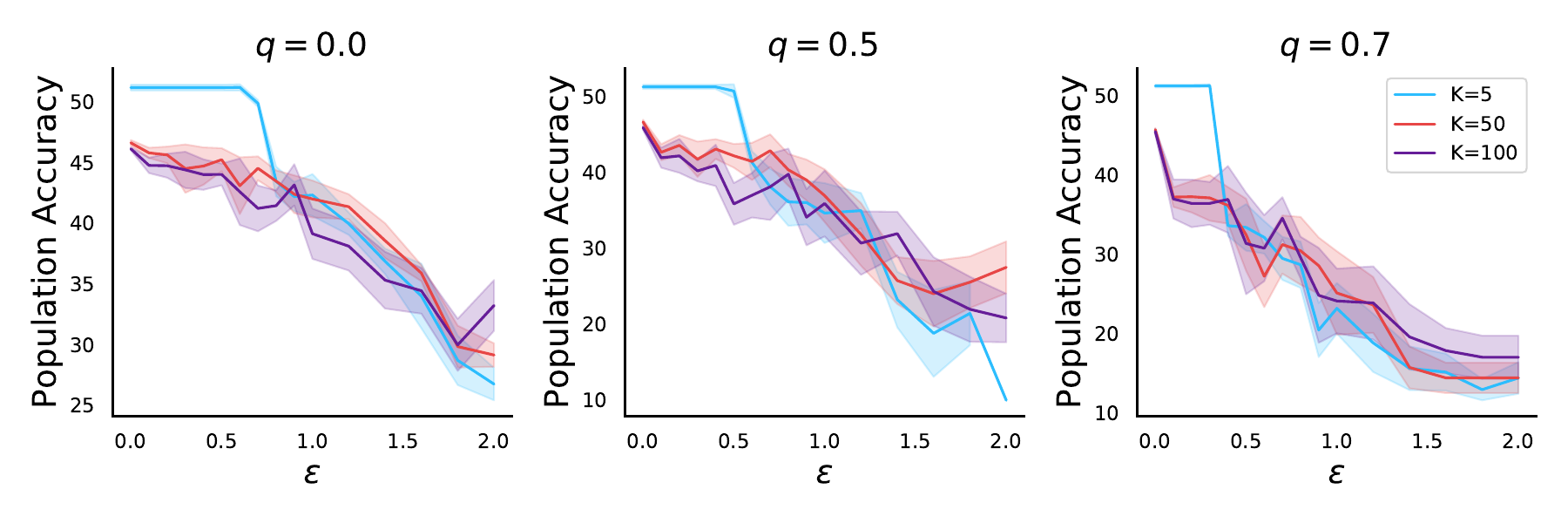}
        \caption{Effect of defection on population accuracy.}    
        \label{fig:defection_test}
        \end{subfigure}
        \begin{subfigure}[b]{\textwidth}
            \centering
        \includegraphics[width=0.9\textwidth]{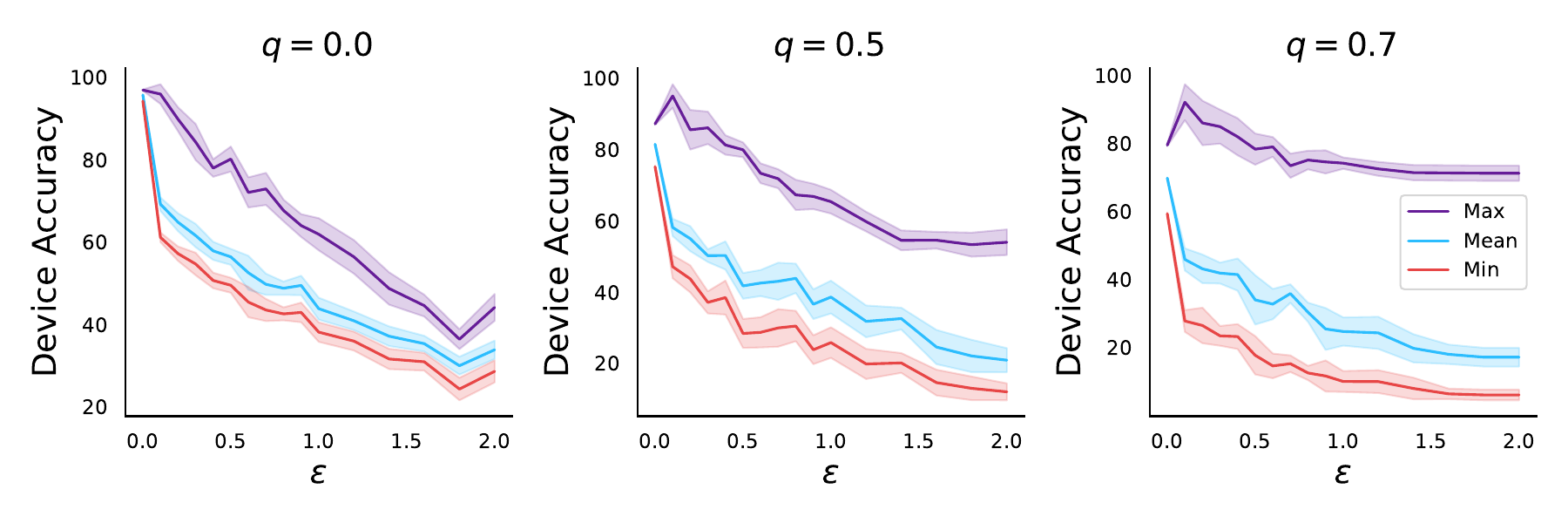}
        \caption{Effect of defection on the min, mean, and max device accuracies.}
        \label{fig:defection_worker}
        \end{subfigure}
        \caption{
        Our study examines how variations in data heterogeneity $q$ (across different plots at each row), required precision  $\epsilon$ (on the $x$-axis in each plot), and the number of local update steps $K$ (different curves in plots \ref{fig:defection_train} and \ref{fig:defection_test}, while $K$ is fixed as $100$ in \ref{fig:defection_worker}) affect multiple accuracy metrics impacted by agents' defection behavior during training with federated averaging (Algorithm \ref{alg:fedavg}). In Figure \ref{fig:defection_train}, we plot the final model's accuracy for the average objective. In Figure \ref{fig:defection_test}, we plot the accuracy for the population objective $\ee_{m\sim \ppp, z\sim \ddd_m}[f(.;z)]$. This captures the generalization ability of the final model. In Figure \ref{fig:defection_worker}, we plot the min, max, and mean device accuracy out of the $M$ devices participating in training for the final model. As $\epsilon$ increases, the likelihood of each device defecting increases, so all the curves almost always decrease. We also note that on increasing the heterogeneity of the task, the accuracies all decrease while the variance between different devices increases. The task is multi-class classification on CIFAR-10. We plot the accuracy instead of the loss value for better interpretability. More details are in appendix \ref{sec:exp}. All experiments are repeated ten times, and the plotted curves are averaged across the runs with error bars for $95\%$ confidence level.}
        \label{fig:defection_combined}
\end{figure*}

Besides the aforementioned issues, defections can also lead to unpredictable outcomes when agents generate data in real-time \citep{huang2021federated, patel2022distributed}, or when they undergo distribution shifts, as seen in self-driving cars \citep{nguyen2022deep}. Inspired by these observations, our work aims to address the following key questions:
\begin{center}
    
    \textbf{\emph{1. Under what conditions are defections detrimental to widely-used FL algorithms like \textsc{FedAvg}? \\
    2. Is it possible to develop an algorithm that mitigates defections while still optimizing effectively?}}
    
\end{center}

\paragraph{Contributions.}
In Section \ref{sec:setup}, we provide a formal framework for understanding defections in federated learning, which often arise due to agents' desire to minimize computational and communication overhead. Specifically, agents will likely exit the training process once they obtain a satisfactory model. In Section \ref{sec:defections_hurt}, we distinguish between benign and harmful defections and explore the influence of (i) \textbf{initial conditions}, (ii) \textbf{learning rates}, and (iii) \textbf{aggregation methods} on the occurrence of harmful defections. Our findings indicate that simply averaging local updates is insufficient to prevent harmful defections. We further validate our theoretical insights with empirical studies (see Figures \ref{fig:defection_time} and \ref{fig:defection_combined}), confirming that defections can substantially degrade the performance of the final model.

Lastly, in Section \ref{sec:alg}, we introduce our novel algorithm, \textsc{ADA-GD}. Under mild (and possibly necessary) conditions, this algorithm converges to the optimal model for problem \eqref{eq:relaxed} without any agent defecting. Unlike simple averaging methods used in \textsc{FedAvg} and mini-batch SGD, our approach tailors the treatment for each device. For devices on the verge of defecting, we utilize their gradient information to define a subspace where the gradients of the remaining devices are projected. This newly aggregated gradient is then employed to update the current model. This nuanced update strategy is designed to improve the average objective while preventing defections, making the algorithm complex to analyze. We tackle this complexity through a unique case-based analysis. We also empirically contrast our algorithm with \textsc{FedAvg}, showing that our method effectively prevents defections and results in a superior final model (see Figure \ref{fig:our_algorithm}). 

\paragraph{Related Work.}
The complexity of managing defections in federated learning arises primarily from two factors: (i) the ongoing interactions between the server and the agents, allowing devices to access all intermediate models, and (ii) the fact that agents do not disclose their raw data to the server. This results in an information asymmetry, as the server can only speculate about potential defections and has no way to retract an intermediate model already exposed to an agent. This sets our problem apart as particularly challenging, and no current theoretical research aims to mitigate agent defections while solving problem \eqref{eq:relaxed}. Most existing studies concentrate on single-round interactions \citep{karimireddy2022mechanisms, blum2021one}, which essentially incentivize agents to gather and relinquish samples to the server. The \textsc{MW-FED} algorithm by \citet{blum2017collaborative} aligns with the intermittent communication model and deliberately decelerates the advancement of agents nearing their target accuracy levels. This makes it potentially useful for preventing defections \citep{blum2021one}. However, its applicability in our context remains unclear, and no convergence analysis exists for this method in heterogeneous settings. There is another line of work which do not explicitly avoid defections but provides agents apriori with incentives not to defect. One such work is by Cho et al. \cite{cho2022federate}, who design an alternative optimization objective to \eqref{eq:relaxed} in order to ensure that each agent ends up with a model that satisfies some prespecified precision threshold for the agent. Unfortunately, their proposed algorithm does not theoretically guarantee this would happen for a wide class of problems, and more importantly, it does not account for the multi-round interaction in FL, necessitating continuous incentivization to avoid defections. Specifically, nothing in their analyses prevents some agents from reaching their learning goals sooner than the other agents, and thus defecting. We provide an exhaustive review of these and other related works, including those concerning fairness in federated learning, in Section \ref{sec:related}.

\section{Problem Setup}\label{sec:setup}
Recall that our (server's/government agency's) learning goal is to minimize $F(w) = \frac{1}{M}\sum_{m\in[M]}F_m(w)$ over all $w\in \www \subseteq \rr^d$, where $F_m(w):=\ee_{z\sim\ddd_m}[f(w;z)]$ is the loss on agent $m$'s data distribution. We assume the functions $F_m$'s are differentiable, convex, Lipschitz, and smooth. 
\begin{assumption}\label{ass:cvx_smth}
    Differentiable $F_m:\www\to\rr$ is convex and $H$-smooth, if for all $u, v\in \www$, $$F_m(v) + \inner{\nabla F_m(v)}{u-v} \leq F_m(u) \leq F_m(v) + \inner{\nabla F_m(v)}{u-v} + \frac{H}{2}\norm{u-v}^2.$$
\end{assumption}

\begin{assumption}[Lipschitzness]\label{ass:lip}
    Function $F_m:\www\to\rr$ is $L$-Lipschitz, if for all $u, v\in \www$, $$|F_m(u)-F_m(v)|\leq L\norm{u-v}.$$
\end{assumption}

Furthermore, the data distributions $\ddd_m$'s have to be ``similar'' so that all agents would benefit from collaboratively learning a single consensus model. Following~\cite {blum2017collaborative}, we capture the similarity among the agents by assuming that a single model works well for all agents.  
\begin{assumption}[Realizability]\label{ass:realizable}
    There exists $w^\star\in \www$ such that $w^*$ is the shared minima for all agents, i.e., $F_m(w^*) = \min_{w\in \www} F_m(w)$ for all $m\in [M]$. We denote the set of shared minima by $\www^\star$.
    For simplicity, we assume that $F_m(w^\star) =0$, for all $m\in [M]$.    
\end{assumption}

This assumption holds when using over-parameterized models that easily fit the combined training data on all the agents. Furthermore, when this assumption is not satisfied, it is unclear if returning a single consensus model, as usual in federated learning, is reasonable. The learning goal is to output a model $w_R$ such that $F(w_R)\leq\epsilon$ for some fixed precision parameter $\epsilon>0$.
We denote the $\epsilon$-sub level sets of the function $F_m$ by $S_m^\star$, i.e., 
$S_m^\star = \{w\in \www| F_m(w)\leq \epsilon\}$. Then our realizability assumption implies that, $S^\star := \cap_{m\in[M]}S_m^\star \neq \emptyset$. The learning goal can be achieved if we output $w_R\in S^*$.

\begin{algorithm}[H]
    \textbf{Parameters:} Initialization $w_0$, step size $\eta_{1:R}$\\
    Initialize with all agents $\cM_0 = [M]$\\
    \For{$r = 1,\ldots,R$}{
        The server sends out $w_{r-1}$ to agents in $\cM_{r-1}$\\
        \For{$m\in \cM_{r-1}$}{
        Agent $m$ decides to drop out or to stay.
        }
        Let $\cM_r\subseteq \cM_{r-1}$ denote the set of agents who want to stay\\
        \For{$m\in \cM_r$}{
        Agent $m$ sends its oracle output $\ooo_m(w_{r-1}) = ( F_m(w_{r-1}), \nabla F_m(w_{r-1}))$ to the server
        }
        The server aggregates the model by $w_r = w_{r-1} - \eta_r\cdot h_r\rb{\{\ooo_m(w_{r-1})\}_{m\in \cM_r}}$
    }
    \Return $\hat{w}= w_R$
    \caption{ICFO algorithm $\aaa(w_0, \eta_{1:R})$}\label{alg:icfo}
\end{algorithm}

\paragraph{Intermittently Communicating First-order (ICFO) Algorithms.}
We focus on intermittently communicating first-order (ICFO) algorithms, described in algorithm \ref{alg:icfo}. One example of such algorithms is \textsc{FedAvg} (see Algorithm \ref{alg:fedavg}). More specifically, at each round $r\in[R]$, the server sends the current model $w_{r-1}$ to all agents participating in the learning in this round. All participating agents $m$ query their first-order oracle \footnote{In practical implementations of these algorithms, usually the agent sends back an unbiased estimate $(\hat F_m(w_{r-1}), \nabla \hat F_m(w_{r-1}))$ of $(F_m(w_{r-1}), \nabla F_m(w_{r-1}))$. 
This is attained by first sampling data-points $\{z^m_{r,k}\sim \ddd_m\}_{k\in K_r^m}$ on machine $m$ and returning $\rb{\frac{1}{K_r^m} \sum_{k\in[K_r^m]}f(w_{r-1}; z_{r,k}^m), \frac{1}{K_r^m} \sum_{k\in[K_r^m]}\nabla f(w_{r-1}; z_{r,k}^m) }$. We focus on the setting with exact gradients and function values in our theoretical results, but our experiments consider stochastic oracles.} at the current model $\ooo_m(w_{r-1}) = (F_m(w_{r-1}), \nabla F_m(w_{r-1}))$ and send it to the server, which updates the model. Note that each ICFO algorithm $\aaa$ has an aggregation rule $h_r(\cdot)$ for each round, which takes input $\{\ooo_m(w_{r-1})\}_{m\in \cM_r}$ and outputs a vector in $Span\{\nabla F_m(w_{r-1})\}_{m\in \cM_r}$. A widespread rule is \textit{uniform aggregation} where the aggregation rule $h_r\rb{\{\ooo_m(w_{r-1})\}_{m\in \cM_r}} = \frac{1}{\abs{\cM_r}}\sum_{m\in \cM_r}\nu(\{\ooo_m(w_{r-1})\}_{m\in \cM_r})\cdot \nabla F_m(w_{r-1})$ puts the same weight on all gradients. Note that the weight $\nu(\{\ooo_m(w_{r-1})\}_{m\in \cM_r})$ can depend on the first order information received. For e.g., \textsc{FedAvg} uses uniform aggregation, as it sets $h_r\rb{\{\ooo_m(w_{r-1})\}_{m\in \cM_r}} = \frac{1}{\abs{\cM_r}}\sum_{m\in \cM_r}\nabla F_m(w_{r-1})$ for all $r\in [R]$.

Furthermore, each algorithm has two parameters: (i) the initialization $w_0$, and (ii) step sizes $\eta_{1:R}$.  We will denote an instance of an ICFO algorithm by $\aaa(w_0, \eta_{1:R})$. 
We say an instance of an ICFO algorithm is \textbf{convergent} if it will converge to optima in $S^\star$ when agents are irrational, i.e., when agents never drop. When we refer to an ICFO algorithm, we are referring to the algorithmic procedure and not an instance of the algorithm. This would be crucial when we discuss the effect of initialization, step size, and aggregation rules in the next section.    

\begin{figure*}
    \centering
    \scalebox{0.9}{
\tikzstyle{startstop} = [rectangle, rounded corners=5mm, 
text centered, 
text width=4cm,
minimum height=2cm,
draw=black, 
fill=orange!30]

\tikzstyle{io} = [rectangle, rounded corners=4mm, 
text centered, 
draw=black, 
fill=blue!20]

\tikzstyle{process} = [rectangle, rounded corners=5mm, 
text centered, 
text width=4cm,
minimum height=2cm,
draw=black, 
fill=teal!30]

\tikzstyle{process2} = [rectangle, rounded corners=5mm, 
text centered, 
text width=3cm,
minimum height=1cm,
draw=black, 
fill=purple!30]

\tikzstyle{decision} = [rectangle, rounded corners=4mm, 
text centered,
minimum height=1cm,
draw=black, 
fill=green!20]

\tikzstyle{arrow} = [thick,->,>=stealth]

\begin{tikzpicture}[node distance=1.2cm]

\node (pro1) [process] {Agents \( m \in \mmm_{r-1} \) evaluate their loss \( F_m(w_{r-1}) \)};
\node (start) [startstop, left=of pro1] {The server sends the synchronized model \( w_{r-1} \) to all remaining agents \( \mmm_{r-1} \)};
\node (dec1) [decision, below=of pro1, yshift=-0.2cm] {Is \( F_m(w_{r-1}) \leq \epsilon \)?};
\node (pro2a) [process, below=of dec1, yshift=-0.2cm] {Let \( \cM_r \subseteq \cM_{r-1} \) denote the set of agents who want to stay. For each \( m \in \cM_r \), agent \( m \) sends \( \ooo_m(w_{r-1}) \) back to the server};
\node (stop) [startstop, right=of pro2a] {The server aggregates the model by \( w_r = w_{r-1} - \eta_r \cdot h_r\rb{\{\ooo_m(w_{r-1})\}_{m\in \cM_r}} \)};
\node (pro2b) [process2, right=of dec1] {Agent \( m \) defects};

\draw [arrow] (start) -- (pro1);
\draw [arrow] (pro1) -- (dec1);
\draw [arrow] (dec1) -- node[anchor=east] {no} (pro2a);
\draw [arrow] (dec1) -- node[anchor=south] {yes} (pro2b);
\draw [arrow] (pro2a) -- (stop);

\end{tikzpicture}}
    \caption{A flowchart illustrating the steps in round $r$ of an ICFO algorithm when rational agents are present.}
    \label{fig:flowchart}
\end{figure*}
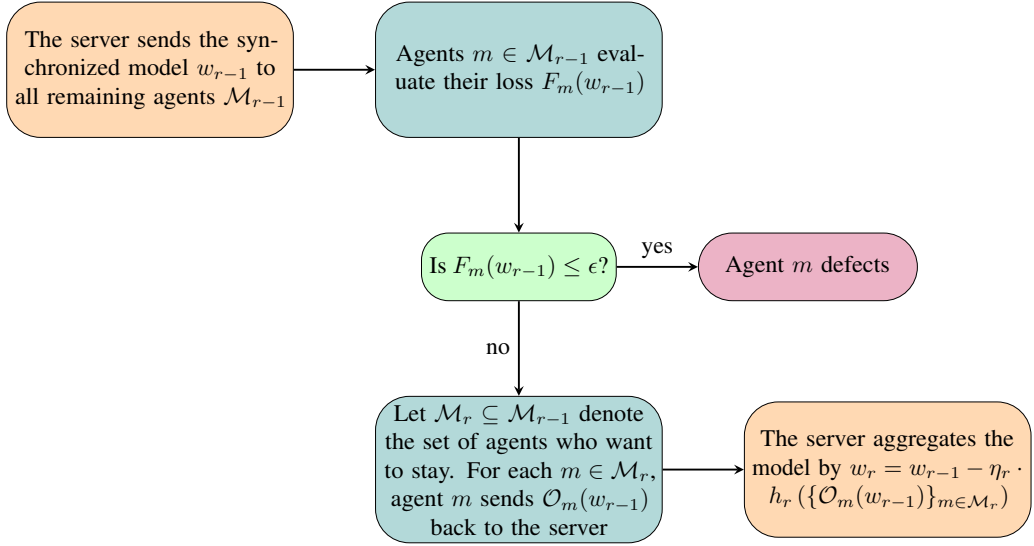

\begin{figure*}
    \centering
    \begin{tikzpicture}[scale=1.5]
  \fill[lime!0] (-0.2,-0.2) rectangle (4.7,4.7);

  \draw[->, thick] (-0.2,0) -- (4.5,0);
  \draw[->, thick] (0,-0.2) -- (0,4.5);

  \filldraw[fill=teal!20, fill opacity=0.5, draw=teal, thick] plot [smooth cycle, tension=0.7] coordinates {(1.5,2.5) (3,2) (3,4) (2,3.5)};
  \node[teal, below right] at (1.5,3) {$\boldsymbol{S_1^\star}$};

  \filldraw[fill=orange!20, fill opacity=0.5, draw=orange, thick] plot [smooth cycle, tension=1] coordinates {(2,1.5) (4,1.7) (3.5,3.5) (2.5,3)};
  \node[orange, above left] at (3.6,1.4) {$\boldsymbol{S_2^\star}$};

    \node[black, above left] at (3.1,3.1) {$\boldsymbol{S^\star}$};

  \filldraw[fill=purple!20, fill opacity=1.0, draw=purple, thick] plot [smooth cycle, tension=0.7] coordinates {(2.5,2.5) (2.7,2.4) (2.7,2.7) (2.4,2.8)};

  \foreach \x/\y/\u/\v in {0/0/0.5/0.3, 0.5/0.3/0.9/0.4, 0.9/0.4/1.6/0.6, 1.6/0.6/2.0/1.0, 2.0/1.0/2.2/1.3, 2.2/1.3/1.9/1.5, 1.9/1.5/1.7/2.0, 1.7/2.0/1.7/2.3, 1.7/2.3/1.9/2.1, 1.9/2.1/2.2/2.2}{
    \draw[->, >=stealth, color=olive] (\x,\y) -- (\u,\v);
    \fill[color=olive] (\u,\v) circle (0.9pt);
  }

  \foreach \x/\y/\u/\v in {2.2/2.2/2.5/2.5 }{
    \draw[->, dashed, color=olive] (\x,\y) -- (\u,\v);
    \fill[color=olive] (\u,\v) circle (0.9pt);
    }

  \foreach \x/\y/\u/\v in {0/0/0.5/0.3, 0.5/0.3/0.9/0.4, 0.9/0.4/1.6/0.6, 1.6/0.6/2.0/1.0, 2.0/1.0/2.5/1.4}{
    \draw[->, >=stealth, color=darkgray] (\x,\y) -- (\u,\v);
    \fill[color=darkgray] (\u,\v) circle (0.9pt);
  }
  \foreach \x/\y/\u/\v in {2.0/1.0/2.5/1.4, 2.5/1.4/2.6/1.9, 2.6/1.9/2.7/2.4}{
    \draw[->, dashed, color=darkgray] (\x,\y) -- (\u,\v);
    \fill[color=darkgray] (\u,\v) circle (0.9pt);
  }

  \node[align=left, anchor=west] at (3.1,4.0) {$\boldsymbol{\mathcal{W}^\star}$};
  \draw[->, thick] (3.3,3.8) to [bend left=30] (2.6,2.6);

  \node[draw, fill=lime!8, thick, matrix of nodes, column sep=5pt, row sep=5pt] at (3.3,0.5) {
    \textcolor{olive}{$\boldsymbol{\bullet}$} & Avoids defection \\
    \textcolor{darkgray}{$\boldsymbol{\bullet}$} & Causes defection \\
  };
\end{tikzpicture}
    \caption{An illustration of two different algorithmic behaviors and agent defections. The $\epsilon$-sublevel sets of two agents are denoted by $S_1^\star$ and $S_2^\star$, and the intersection of their optima is denoted by $\www^\star$. We note that the olive and gray algorithms are convergent, i.e., if the agents were irrational, both algorithms would converge to $\www^\star$ as indicated by the dashed trajectories. For rational agents, both algorithms will cause defections, but the defection with the olive algorithm is benign/not harmful, as the algorithm already converges to the target intersection $S^\star = S_1^\star\cap S_2^\star$ before any agent defects. On the other hand, the gray algorithm causes a harmful defection by causing agent two to defect by entering the set $S_2^\star$ before entering the set $S^\star$. 
    }
    \label{fig:intersection}
\end{figure*}
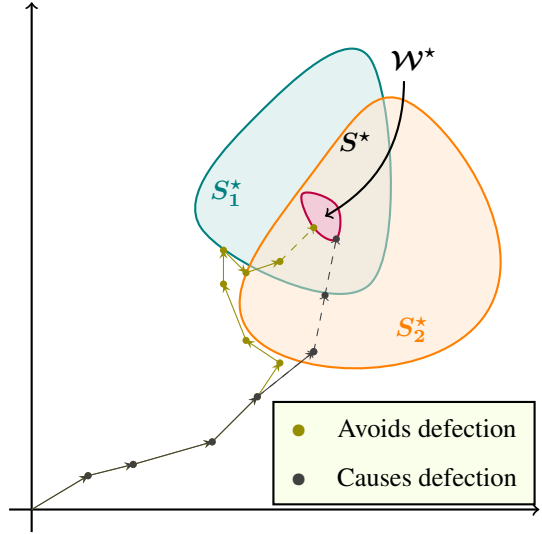

\paragraph{Rational Agents.}
Since agents have computation/communication costs to join this collaboration, they will defect when they are happy with the instantaneous model's performance on their local data. In particular, in the $r$-th round, after receiving a synchronized model $w_{r-1}$ from the server, agent $m$ will defect permanently if the current model $w_{r-1}$ is satisfactory, i.e.,  $F_m(w_{r-1})\leq \epsilon$ for some fixed precision parameter $\epsilon>0$. Then, agent $m$ will not participate in the remaining process of this iteration, including local training, communication, and aggregation. Thus, $m\notin \cM_{r'}$ for $r'\geq r$, where $\cM_{r'}$ is the set of participating agents in round $r'$. The detailed ICFO algorithm framework in the presence of rational agents is described in Figure~\ref{fig:flowchart}. Note that a sequence of agents may defect for an algorithm $\aaa$, running over a set of rational agents. We say the defection behavior (of this sequence of agents) \textbf{is harmful} (for algorithm $\aaa$) if the final output $w_R\notin S^\star$, i.e., we don't find a model in the $\epsilon$ sub-level set of an otherwise realizable problem.

\begin{remark}
In the field of game theory (and economics), the rationality of agents is commonly modeled as aiming to maximize their net utility, which is typically defined as the difference between their payoffs and associated costs, such as the difference between value and payment in auctions or revenue and cost in markets \citep{myerson1981optimal,huber2001gaining,borgers2015introduction}.
In our model, each agent aims to obtain a loss below $\epsilon$ over their local data distribution (and any loss below $\epsilon$ is indifferent), and therefore, their payoff upon receiving a model $w$ is $v(w) =V\cdot \ii[F_m(w)\leq \epsilon]$
for some $V>0$.
Their costs increase linearly with the number of rounds they participate in the training process. Therefore, the cost of participating in training for $r$ rounds is given by $c(r) = C\cdot r$ for some constant $C\ll V$.
In this case, the optimal choice for the agent is to defect once they receive a model with a local loss below $\epsilon$ and never return in the future.    
\end{remark}

\section{When Do Defections Hurt?}\label{sec:defections_hurt}
One might naturally wonder whether defections in an optimization process are universally detrimental, consistently harmless, or fall somewhere in between for any given algorithm. Our experiments, as shown in Figure \ref{fig:defection_combined}, confirm the existence of harmful defections. However, it's worth noting that not all defections have a negative impact; some are actually benign.
\begin{observation}
    There exists a learning problem $\{F_1,F_2\}$ with two agents such that for any convergent ICFO algorithm instance $\cA(w_0, \eta_{1:R})$, any defection will be benign, i.e., $\cA(w_0, \eta_{1:R})$ will output a $w_R\in S^*$.
\end{observation}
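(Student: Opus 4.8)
The statement is an existence claim: I need to exhibit a single two-agent problem $\{F_1,F_2\}$ for which \emph{every} convergent ICFO instance produces benign defections. The natural plan is to engineer the geometry of the sublevel sets so that, for a convergent algorithm, the iterates are forced to enter the shared target set $S^\star = S_1^\star \cap S_2^\star$ \emph{before} either agent can individually enter its own sublevel set $S_m^\star$. In other words, I want to construct $F_1,F_2$ so that $S_1^\star = S_2^\star = S^\star$; if both agents' $\epsilon$-sublevel sets literally coincide, then the moment one agent becomes satisfied ($F_m(w_{r-1})\le\epsilon$) the model already lies in $S^\star$, so the output (whether that agent defects or not) is in $S^\star$ and the defection is benign by definition.

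\textbf{The construction.} The plan is to take $F_1(w) = F_2(w) = F(w)$ for a single convex, $H$-smooth, $L$-Lipschitz function on $\www\subseteq\rr^d$ with a nonempty minimizer set where $F$ attains value $0$, satisfying Assumptions~\ref{ass:cvx_smth}--\ref{ass:realizable}. For instance $F(w) = \frac12\norm{w-w^\star}^2$ restricted to a bounded domain (truncating to keep it Lipschitz), or even a one-dimensional quadratic, works. Because $F_1=F_2$, we have $S_1^\star = S_2^\star = S^\star$, and the average objective is just $F(w_1)=F(w_2)=F$. I would verify that this $F$ meets all three assumptions and that $\www^\star = \{w^\star\}$ (or the common minimizer set) is nonempty, so $S^\star\neq\emptyset$.

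\textbf{Why every convergent instance gives benign defection.} Fix any convergent ICFO instance $\cA(w_0,\eta_{1:R})$. Run it against these rational agents. At each round $r$, both agents see the same loss value $F(w_{r-1})$, so either both are satisfied simultaneously or neither is. Consider the first round $r$ in which some agent wishes to defect: this is exactly the first round with $F(w_{r-1})\le\epsilon$, i.e.\ $w_{r-1}\in S^\star$. Up to that point no agent has defected, so $\cM_{r-1}=[M]=\{1,2\}$ and the trajectory coincides with the no-defection (irrational-agent) run. I would argue that once $w_{r-1}\in S^\star$ the relevant output already satisfies the benign condition: the iterate presented at the round of first defection lies in $S^\star$, so returning $\hat w = w_R$ with $w_R\in S^\star$ requires only that subsequent updates keep the iterate inside $S^\star$ (or that we identify the output with this first satisfactory model). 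This is where I expect the main subtlety: I must reconcile the definition of ``harmful'' (which is stated in terms of the \emph{final} output $w_R$) with the fact that defection happens at round $r$, after which the aggregation over the now-empty or reduced agent set governs the remaining iterates. The cleanest resolution is to note that when both agents defect at the same round, $\cM_r$ becomes empty, the aggregation $h_r(\emptyset)$ contributes nothing further (or the algorithm halts), and so $w_R = w_{r-1}\in S^\star$; I would make this halting/empty-aggregation convention explicit.

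\textbf{Main obstacle.} The crux is handling the transition at the defection round and the definition of the returned model: I need to ensure that the very model that triggers defection is (or leads to) the reported $w_R$, so that $w_R\in S^\star$. With $F_1=F_2$ this is forced because satisfaction of one agent is satisfaction of both, collapsing the window in which a harmful ``premature'' single-agent defection (the gray trajectory in Figure~\ref{fig:intersection}) could occur. I would therefore spend the bulk of the argument pinning down that, under the rational-agent protocol of Figure~\ref{fig:flowchart}, simultaneous satisfaction implies the output is exactly the first iterate in $S^\star$, and then invoke convergence only to guarantee that such a round $r\le R$ exists (so that the agents do in fact defect rather than the algorithm terminating with $w_R\notin S^\star$ because it never reached $S^\star$).
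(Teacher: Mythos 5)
Your proof is correct, and it reaches the conclusion by a genuinely more degenerate construction than the paper's. The paper also exploits containment of sublevel sets, but with a \emph{strict} inclusion $S_1^\star \subset S_2^\star$: it takes a linear classification problem where agent 1's marginal is uniform on the unit ball and agent 2's is supported on $\{\pm \be_1\}$, so that any model satisfying agent 1 automatically satisfies agent 2, hence $S^\star = S_1^\star$. There, agent 2 may defect strictly before the iterate reaches $S^\star$, and the paper argues this is still benign because agent 1 remains and a convergent algorithm driven by agent 1 alone still lands in $S_1^\star = S^\star$ --- a step that implicitly extends the definition of ``convergent'' (which is stated only for the no-defection run) to the post-defection dynamics. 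Your choice $F_1 = F_2$, i.e.\ $S_1^\star = S_2^\star = S^\star$, is the boundary case of the same principle and sidesteps that subtlety entirely: both agents are satisfied at exactly the same round, so the first iterate triggering defection already lies in $S^\star$, the agent set becomes empty, the aggregate lives in $\mathrm{Span}(\emptyset)=\{0\}$, and the iterate freezes at a point of $S^\star$. You correctly identify that convergence is then needed only to make the claim non-vacuous (to ensure a defection actually occurs), not to control what happens after one. What you lose relative to the paper is illustrative force: your example shows defections are benign only because they cannot happen before the model is already good, whereas the paper's example exhibits a genuinely heterogeneous problem in which an agent can defect well before the target set is reached and the defection is nonetheless harmless. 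Both constructions satisfy the literal statement; do make the empty-aggregation convention explicit as you propose, since Algorithm~\ref{alg:icfo} does not spell it out.
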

Consider a $2$-dimensional linear classification problem where each point $x\in \R^2$ is labeled by $w^* = \ind{\be_1^\top x\leq 0}$. There are two agents where the marginal data distribution of agent $1$ is uniform over the unit ball $\{x|\norm{x}^2 \leq 1\}$ while agent $2$'s marginal data distribution is a uniform distribution over $\{\pm \be_1\}$. In this case, any linear model $w$ satisfying agent $1$, i.e., $F_1(w)\leq \epsilon$ (for any appropriate loss function) will also satisfy agent $2$. Therefore, for any convergent ICFO algorithm,  if there is a defection, it must be either agent 2's defection or the defection of both agents. In either case, the defection is benign for any convergent ICFO algorithm. This example can be extended to a more general case where agent $1$'s $\epsilon$-sub level set is a subset of agent $2$, i.e., $S^*_1\subset S^*_2$.

In practice, the ideal scenario where defections are benign is uncommon, as most defections tend to have a negative impact (refer to Figure~\ref{fig:defection_cartoon} for an illustration). Thus, the extent to which defections are harmful is largely algorithm-dependent. To gain insights into which algorithms can mitigate the adverse effects of defections, we will explore the roles of initialization, step size, and aggregation methods that characterize an ICFO algorithm (c.f., Algorithm \ref{alg:icfo}). To begin understanding the role of initialization, we characterize a \textit{``bad region''} for a given algorithm and problem instance.

\begin{figure*}
     \centering
     \begin{subfigure}[b]{0.45\textwidth}
        \centering
        \includegraphics[width=\textwidth]{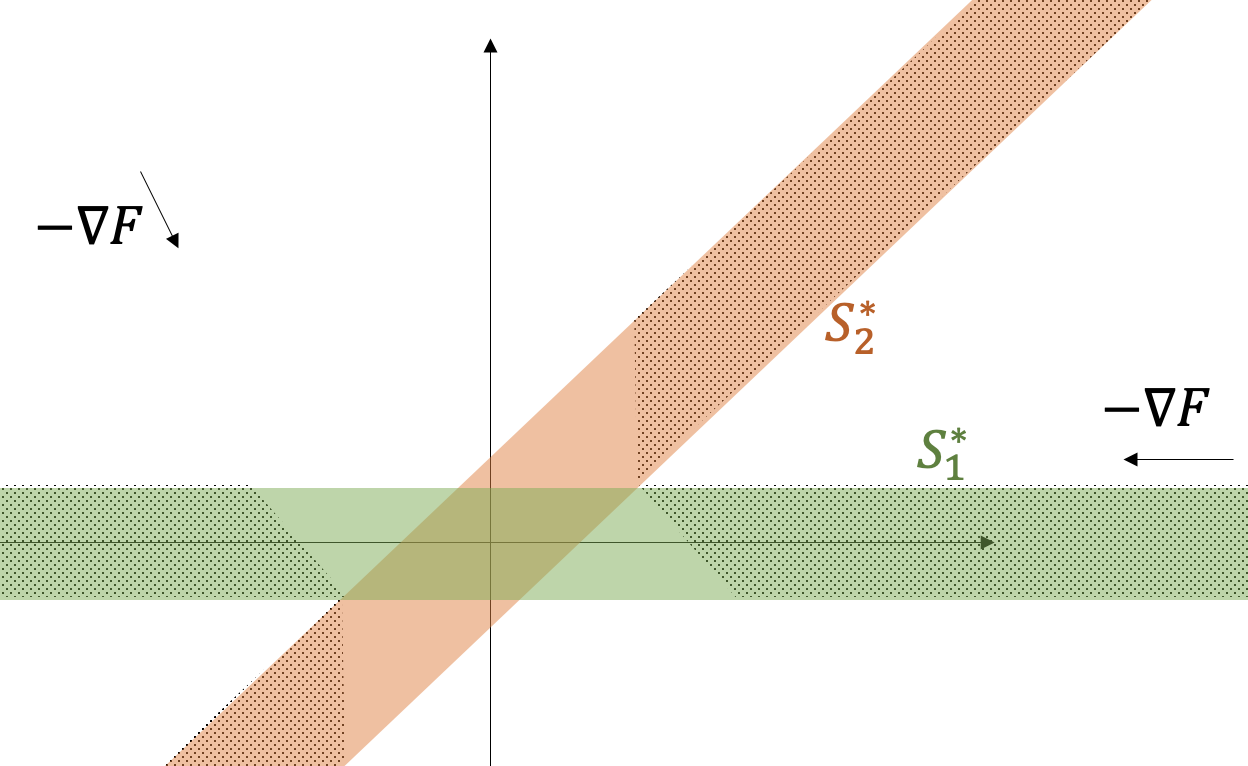}
         \caption{Contour illustration for Observation~\ref{obs:bad_region}.}
         \label{fig:example-def}
     \end{subfigure}
     \hfill
     \begin{subfigure}[b]{0.45\textwidth}
         \centering
         \includegraphics[width=\textwidth]{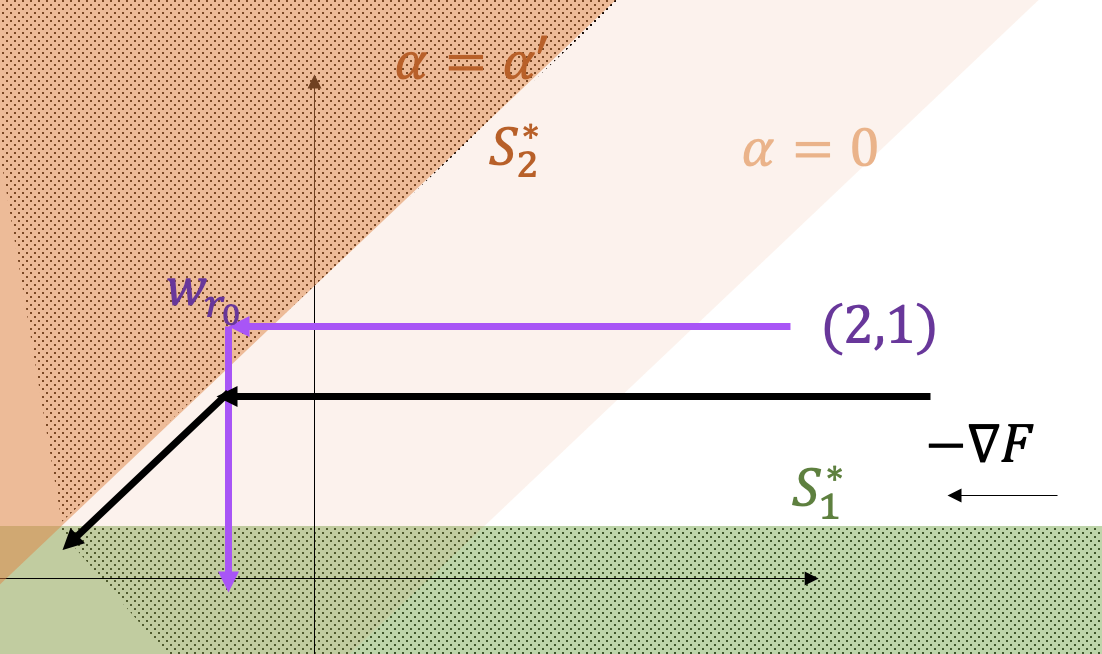}
         \caption{Contour illustration for Observation~\ref{obs:aggre}.
        }
         \label{fig:aggre}
     \end{subfigure}
        \caption{Illustrations of the contours of the examples presented in Section \ref{sec:defections_hurt}.}
        \label{fig:defection}
\end{figure*}

\begin{definition}[Bad Region]\label{def:bad_region}
For any given problem $\{F_m\}_{m\in[M]}$, we define a subset of the domain, denoted by $\www_{bad}(\{F_m\}_{m\in[M]})\subseteq\www$, as a ``bad region'' of this problem, if for any ICFO algorithm $\aaa$, for every initialization $w_0\in \www_{bad}$, and for any step-size $\eta_{1:R}$, the instance $\aaa(w_0, \eta_{1:R})$ outputs the model $w_R\notin S^\star$. 
\end{definition}
Note that every problem instance characterizes a bad region, but can this region ever be non-empty? \textit{yes}.
\begin{observation}\label{obs:bad_region}
    There exists a problem $\{F_1, F_2\}$ with two agents in which $\www_{bad}(\{F_1, F_2\})$ is non-empty.
    Specifically, there exists a $w_0\in S_1^*\cup S_2^*$ such that for any ICFO algorithm $\cA$ with any step-size $\eta_{1:R}$, when initialized at $w_0$, $\cA(w_0,\eta_{1:R})$ will converge to a model $w_R\notin S^*$, especially $F(w_R) \geq 1/2 >0 =F(w^*)$.
\end{observation}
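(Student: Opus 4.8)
The plan is to exhibit an explicit two-agent instance in which a single forced defection confines every subsequent iterate to an affine line that misses $S^\star$, \emph{regardless of the aggregation rule $h_r$ and the step sizes $\eta_{1:R}$}. Writing the model as $w=(a,b)\in\R^2$ on a bounded convex domain $\www$, I would take $F_2(w)=\tfrac{\kappa}{2}a^2$, which depends only on the first coordinate, and the tilted quadratic $F_1(w)=\tfrac12\big[(a-\alpha b)^2+\beta b^2\big]$, for constants $\kappa,\alpha,\beta>0$ to be fixed. Both are convex, $H$-smooth (constant Hessians), and $L$-Lipschitz on the bounded $\www$ with finite though possibly large constants, and they share the minimizer $w^\star=(0,0)$ with $F_1(w^\star)=F_2(w^\star)=0$, so Assumption~\ref{ass:realizable} holds. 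The decisive structural feature is that $\nabla F_2(w)=(\kappa a,0)$ is always parallel to $\be_1$.

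The key step is to pick the initialization $w_0=(\alpha b_0,\,b_0)$ with $b_0\neq 0$ chosen so that $\beta b_0^2\le 2\epsilon$. Then $F_1(w_0)=\tfrac12\beta b_0^2\le\epsilon$, so agent $1$ is satisfied and defects in the very first round, while $F_2(w_0)=\tfrac{\kappa}{2}\alpha^2 b_0^2>\epsilon$ (for $\kappa,\alpha$ large) keeps agent $2$ in the collaboration; in particular $w_0\in S_1^\star\subseteq S_1^\star\cup S_2^\star$ as required. From round $1$ onward only agent $2$ remains, so any aggregation rule must return a vector in $\mathrm{Span}\{\nabla F_2(\cdot)\}\subseteq\mathrm{Span}\{\be_1\}$. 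Hence every update alters only the first coordinate, and for \emph{every} step-size schedule the iterates stay on the line $\{b=b_0\}$. I would then show this line misses $S^\star$: the slice $S_1^\star\cap\{b=b_0\}$ is an interval centered at $a=\alpha b_0$ of half-width $\sqrt{2\epsilon-\beta b_0^2}$, whereas $S_2^\star\cap\{b=b_0\}$ is the strip $|a|\le\sqrt{2\epsilon/\kappa}$; taking $\alpha$ (or $\kappa$) large enough that $\alpha b_0-\sqrt{2\epsilon-\beta b_0^2}>\sqrt{2\epsilon/\kappa}$ makes the two slices disjoint, so $S^\star\cap\{b=b_0\}=\emptyset$ and therefore $w_R\notin S^\star$.

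For the quantitative claim I would minimize $F$ along the forbidden line. Since $w_R=(a_R,b_0)$ for some $a_R$, a one-variable computation gives
\[
F(w_R)=\tfrac12\big(F_1(w_R)+F_2(w_R)\big)\ \ge\ \tfrac12\min_{a}\Big[\tfrac12(a-\alpha b_0)^2+\tfrac{\kappa}{2}a^2+\tfrac12\beta b_0^2\Big]=\tfrac12\Big(\tfrac{\kappa\,\alpha^2 b_0^2}{2(1+\kappa)}+\tfrac12\beta b_0^2\Big),
\]
whose leading term is $\approx\tfrac14\alpha^2 b_0^2$ for large $\kappa$; fixing $\epsilon,\beta,b_0$ and taking $\alpha,\kappa$ large pushes this above $1/2$. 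This yields $F(w_R)\ge 1/2>0=F(w^\star)$ and completes the bad-region witness.

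The main obstacle is not the algebra but arranging the argument to hold \emph{simultaneously} over all ICFO algorithms and all step-size schedules. The device that buys this uniformity is making $F_2$ a function of a single linear coordinate, so that after the forced first-round defection the reachable set collapses to a fixed affine line that is independent of $h_r$ and $\eta_{1:R}$. The remaining care is purely geometric and quantitative: tilting $S_1^\star$ through $\alpha$ so that its slice at the defection height $b_0$ lies entirely to one side of the $S_2^\star$ strip while still containing $w_0$, and simultaneously keeping the shared minimum at the origin so that realizability is preserved.
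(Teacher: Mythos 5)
Your proposal is correct and uses essentially the same mechanism as the paper's proof: initialize so that one agent is already satisfied and defects immediately, after which the lone remaining agent's gradient spans a fixed one-dimensional direction, confining every iterate (for any aggregation rule and step sizes) to an affine line that misses $S^\star$ and on which $F\ge 1/2$. The only difference is cosmetic — the paper instantiates this with the piecewise-linear pair $F_1(w)=|(0,1)^\top w|$, $F_2(w)=|(1,-1)^\top w|$ initialized at $(1,1)$ (remarking they can be smoothed), whereas you use smooth tilted quadratics with tunable constants to achieve the same geometry.
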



Consider an example with two agents with $\www = \R^2$,  $F_1(w) = \abs{(0,1)^\top w}$ and $F_2(w) = \abs{(1,-1)^\top w}$ as illustrated in Figure~\ref{fig:example-def}.   In this example we have the optimal model $w^* = (0,0)$ and the subgradients $\nabla F_1(w) = \sign((0,1)^\top w) \cdot (0,1)$ and $\nabla F_2(w) = \sign((1,-1)^\top w)\cdot (1,-1)$. If the algorithm is initialized at $w_0 = (1,1)$, agent $2$ defects given $w_0$ and an ICFO algorithm can only update in the direction of $\nabla F_1(w) = (0,\sign((0,1)^\top w))$. It will converge to a $w_R \in \{(1,\beta)|\beta\in \R\}$ and incurs $F(w_R)\geq \frac{1}{2}$ since $F(w) \geq \frac{1}{2}$ for all $w\in \{(1,\beta)|\beta\in \R\}$. 
Note that the dotted region in Figure~\ref{fig:example-def}, which is a subset of $S_1^*\cup S_2^*\setminus S^*$, is the bad region of this example. 
If initialized at any point in the dotted region of, one agent would defect immediately and the algorithm can only make updates according to the remaining agent and leads to a $w_R\notin S^*$.
Note that the functions in this example can be smoothed, and the above observation remains valid.

Clearly, no ICFO algorithm can avoid harmful defections when initialized in the bad region. But again, in practice, the algorithm is unlikely to be initialized in the target sub-level sets of one of the agents, as these sets are challenging to find. The more interesting question is: \textit{can specific algorithms avoid harmful defections when initialized in the good region, i.e., outside the bad region?} The answer is \textit{yes}.
But, we will now show that the aggregation function is vital. 
We find that any ICFO algorithm with uniform aggregation---i.e., putting equal weights on all agents' updates---cannot avoid harmful defections even when initialized in a good region.
\begin{observation}\label{obs:aggre}
    For ICFO algorithm $\cA$ with uniform aggregation with any step sizes $\eta_{1:R}$ and any initialization $w_0$, there exists a problem $\{F_1, F_2\}$ with two agents,
    such that $\cA(w_0,\eta_{1:R})$ will output a model $w_R\notin S^*$.
\end{observation}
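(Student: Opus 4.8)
The goal is to construct, for an \emph{arbitrary} choice of initialization $w_0$ and step sizes $\eta_{1:R}$ made by the adversary (the algorithm designer), a two-agent problem $\{F_1,F_2\}$ on which uniform aggregation is forced into a harmful defection. The plan is to reuse the geometry of Observation~\ref{obs:bad_region}: take loss functions whose gradients point in two fixed, non-parallel directions $g_1$ and $g_2$, so that the uniform-aggregation update from outside the sublevel sets always moves along the \emph{bisector} direction $g_1+g_2$ (up to sign/scale). The key structural fact I would exploit is that uniform aggregation, by definition, outputs a vector in $\mathrm{Span}\{\nabla F_1, \nabla F_2\}$ with equal weights, so the entire trajectory before any defection is pinned to a predictable line determined only by $w_0$ and the two gradient directions — \emph{independent} of the step sizes. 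This is what lets me defeat the ``any $\eta_{1:R}$'' quantifier.

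The construction I would carry out: since $w_0$ and $\eta_{1:R}$ are given first, I place the two sublevel sets $S_1^\star$ and $S_2^\star$ so that their intersection $S^\star=S_1^\star\cap S_2^\star$ is small and lies \emph{off} the bisector ray emanating from $w_0$, while one agent's sublevel set $S_2^\star$ (say) straddles that ray closer to $w_0$. Concretely I would use piecewise-linear (or smoothed) losses of the form $F_1(w)=\abs{\inner{a_1}{w-c_1}}$ and $F_2(w)=\abs{\inner{a_2}{w-c_2}}$, choosing the normals $a_1,a_2$ and offsets $c_1,c_2$ as functions of $w_0$. While both agents are active, each update is $w_r=w_{r-1}-\tfrac{\eta_r}{2}(\nabla F_1+\nabla F_2)$, which keeps $w_r$ on the fixed bisector line through $w_0$ regardless of the $\eta_r$'s. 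I would then argue that this line enters $S_2^\star$ (triggering agent~$2$'s defection at some round) \emph{before} it ever meets $S^\star$; the precise round depends on the step sizes, but the \emph{ordering} of which set is hit first is a purely geometric, step-size-independent fact. After agent~$2$ defects, only $\nabla F_1$ remains, so all subsequent updates move along $g_1$, confining $w_R$ to a line that misses $S^\star$ entirely, giving $w_R\notin S^\star$.

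The main obstacle is handling the quantifier order correctly and uniformly. Because $\eta_{1:R}$ is quantified \emph{universally} and chosen adversarially, I cannot control \emph{where} on the bisector the iterate sits at defection time, only that it stays on the line; so the argument must show the defection-then-miss conclusion holds for \emph{every} point at which the trajectory could cross into $S_2^\star$. The clean way to do this is to make $S_1^\star$ a thin slab (the zero set of $F_1$ is a line, its $\epsilon$-sublevel set a narrow strip) whose direction $g_1$ is such that \emph{no} translate of the post-defection line $\{w_0' + t\,g_1\}$ starting from any bisector point inside $S_2^\star$ can reach $S^\star$. Making this geometric separation robust — i.e., a genuine positive-measure gap, not a measure-zero coincidence — is where the care lies, and smoothing the absolute-value losses (as flagged at the end of Observation~\ref{obs:bad_region}) without destroying the ``gradients point in essentially fixed directions'' property is the secondary technical point. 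I would verify that a single fixed pair $\{F_1,F_2\}$, with parameters depending only on the \emph{direction} of the bisector at $w_0$ (not on the $\eta_r$'s), simultaneously works for all step-size sequences, completing the proof.
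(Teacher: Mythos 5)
Your plan diverges from the paper's in one structural respect: you try to exhibit a \emph{single} problem (depending only on $w_0$) that defeats \emph{every} step-size sequence, whereas the paper exploits the quantifier order --- the problem is chosen \emph{after} $\eta_{1:R}$. Concretely, the paper runs the algorithm on a one-parameter family $\{P_\alpha\}_{\alpha\ge 0}$ of hinge-loss problems that return \emph{identical} first-order information along the whole pre-defection trajectory (which, by uniform aggregation, moves in the fixed direction $(-1,0)$), observes where agent~$2$ first defects, and only then selects the member $P_{\alpha'}$ for which that defection point lies in the bad region of Observation~\ref{obs:bad_region}. This indistinguishability-plus-post-hoc-selection step is exactly what absorbs the ``any $\eta_{1:R}$'' quantifier, and it is the step your proposal deliberately forgoes.

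The stronger, step-size-uniform statement you aim for runs into a genuine obstruction that your proposal does not address. If the losses are two-sided (your $F_m(w)=\abs{\inner{a_m}{w-c_m}}$), the sublevel sets are slabs, and an adversarially large step can jump clean across the slab $S_2^\star$ without the iterate ever landing inside it: agent~$2$ then does \emph{not} defect, $\sgn(\inner{a_2}{w-c_2})$ flips, and the uniform-aggregation direction changes from $-(a_1+a_2)/2$ to $(a_2-a_1)/2$ --- so your key claim that the pre-defection trajectory is ``pinned to a predictable line independent of the step sizes'' is false for this loss family. If instead you use one-sided hinges to rule out overshooting (gradient vanishes only at the minimum, so crossing the boundary forces defection at the next landing), then $S_2^\star$ becomes a half-plane, the set of possible defection points is an entire sub-ray of the bisector, and the union of the post-defection lines $\{p+t\,a_1\}$ over all such $p$ is a half-plane that necessarily meets the (unbounded) wedge $S^\star$; one can then exhibit step sizes that defect deep inside $S_2^\star$ and subsequently land $w_R$ in $S^\star$. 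So in both natural instantiations the ``every possible defection point leads to a miss'' step fails, and the fix is precisely to let the problem depend on the realized defection point --- i.e., the paper's argument. I would recommend either adopting the indistinguishable-family device or supplying a construction that provably closes the overshoot/half-plane dichotomy above; as written, the proposal has a gap at its central geometric claim.
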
   

We build another slightly different example from the above for this observation. As illustrated in Figure~\ref{fig:aggre}, we consider a set of problems $\{F_1(w) = \max((0,1)^\top w,0), F_2(w) = \max((1,-1)^\top w+\alpha,0)|\alpha \geq 0\}$. For all $\alpha \geq 0$ illustrated in Figure~\ref{fig:aggre}, the problem characterized by $\alpha$ is denoted by $P_\alpha$. Suppose we initialize at a $w_0$ which is not in the union of $\epsilon$-sub level sets $S_1^*\cup S_2^*$ of $P_\alpha$ for all $\alpha$. More specifically, suppose the algorithm is initialized at $w_0= (2,1)$.
This assumption is w.l.o.g. since we can always shift the set of problems.
When no agents defect, the average gradient $\nabla F(w) = (1/2,0)$ is a constant, and any ICFO algorithm can only move in the direction of $(-1,0)$ until agent $2$ defects. 
Among all choices of $\alpha$, agent $2$ will defect earliest in the problem $P_0$ (since any model good for agent $2$ for some $\alpha>0$ is also good for agent $2$ for $\alpha = 0$). 
The algorithm will follow the purple trajectory in Figure~\ref{fig:aggre}. 
Before agent $2$'s defection, the first order oracle returns the same information in all problems $\{P_\alpha|\alpha \geq 0\}$ and thus,  the algorithm's updates are identical. 
Denote by $r_0$ the round at which agent $2$ is defecting in $P_0$. Then there exists an $\alpha'$ (illustrated in the figure) s.t. $w_{r_0}$ lies in the bad region of problem $P_{\alpha'}$. It is easy to check that Observation~\ref{obs:bad_region} also holds in problem $P_{\alpha'}$.
Therefore, in problem $P_{\alpha'}$, the algorithm will output a final model $w_R\notin S^*$. Especially, we have $F(w_R) = 1/2 >0=F(w^*)$.

Based on the discussion so far, it is clear that if we insist on using an ICFO algorithm, we need to consider non-uniform aggregation to avoid harmful defections. One potential strategy is that when an algorithm reaches a $w$ with low $F_2(w)$ and high $F_1(w)$, we can project $\nabla F_1(w)$ into the orthogonal space of $\nabla F_2(w)$ and update in the projected direction. Figure~\ref{fig:aggre} provides a visual representation of the effectiveness of the projection technique, which is shown by the black trajectory within the context of this example. This not only showcases the benefits of projection but also serves as a motivation behind the development of our algorithm in the next section. But before we describe our algorithm, we state the final observation of this section, which would force us to constrain the class of problem instances on which we can hope to avoid defections with the above non-uniform aggregation strategy. Let us recall the definition of the orthogonal complement of a subspace. 
\begin{definition}[Orthogonal Complement]
    Let $V$ be a vector space with an inner-product $\inner{\cdot}{\cdot}$. For any subspace $U$ of $V$, we define the orthogonal complement as $U^\perp := \{v\in V: \inner{v}{u} = 0,\ \forall u\in U\}$. We also define the projection operator onto a subspace $U$ by $\Pi_{U}:V\to U$. 
\end{definition}
  We use this to define a class of \textit{``good'' problems} for which we can avoid defections.
\begin{definition}[Minimal Heterogeneous Problems]\label{def:hetero}
    A problem $\{F_m:\rr^d\to \rr\}$ is called minimal heterogeneous if for all $w\in \www\setminus \www^\star$, all non-zero vectors in $\{\nabla F_m(w): m\in[M]\}$ are linearly independent. We denote by $\fff_\text{hetero}$ the class of all minimal heterogeneous problems.
\end{definition}
In most first-order algorithms, to avoid oscillation and make sure the optimization procedure converges, we set step sizes to be small. For instance, for smooth problems, the step size is set inversely proportional to the smoothness to stabilize the algorithm against the sharpest point on the optimization trajectory. With this in mind, we next observe that there exists a problem not in the class $\fff_\text{hetero}$ such that any ICFO with small step sizes cannot avoid harmful defections.
  
\begin{observation}[Role of Minimal Heterogeneity]
    For any ICFO algorithm $\aaa$ with any initialization $w_0$ and any step-size sequence $\eta_{1:R}$, there exists a non-minimal heterogeneous problem with two agents $\{F_1, F_2\}\notin \fff_\text{hetero}$ for which the initialization, such that there exists a constant $c\in (0,1)$ (which can depend on the problem) such that scaling step sizes with $c$ leads to $w_R\notin S^\star$, i.e., $\aaa(w_0, c\cdot \eta_{1:R})$ will output $w_R\notin S^\star$.
\end{observation}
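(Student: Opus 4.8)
The plan is to exhibit a single explicit two–agent quadratic problem that lies outside $\fff_\text{hetero}$ and on which the \emph{forced} first move of any ICFO algorithm, together with a small enough step size, locks in a harmful defection. Take $\www\subseteq\R^2$ (a large bounded ball, so that the Lipschitz Assumption~\ref{ass:realizable} is met; cap the quadratics outside it), write $w=(x,y)$, and set $F_1(w)=\tfrac12 x^2$ and $F_2(w)=\tfrac12\,w^\top A w$ with $A=\left(\begin{smallmatrix}1&-2\\-2&5\end{smallmatrix}\right)\succ 0$. Both are convex and $H$–smooth, they share the minimizer $0$ with $F_1(0)=F_2(0)=0$, so the problem is realizable with $\www^\star=\{0\}$. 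It is non–minimal heterogeneous: since $\nabla F_1=(x,0)$ and $\nabla F_2=Aw=(x-2y,\,-2x+5y)$, one has $\det[\nabla F_1,\nabla F_2]=x(5y-2x)$, which vanishes with \emph{both} gradients nonzero on the punctured line $\Gamma=\{y=\tfrac25 x\}\setminus\{0\}$; there $\nabla F_1\parallel\nabla F_2$ (both horizontal), so $\{F_1,F_2\}\notin\fff_\text{hetero}$.

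The geometric facts I would record are: (i) $\nabla F_1$ is \emph{always} horizontal, so once agent $2$ has left, every ICFO update keeps $y$ frozen and drives $x\to 0$; (ii) maximizing $x$ over the ellipse $S_2^\star=\{w^\top Aw\le 2\epsilon\}$ forces $Aw\parallel(1,0)$, i.e.\ $w\propto A^{-1}(1,0)=(5,2)$, so the rightmost point $q$ of $S_2^\star$ lies exactly on $\Gamma$, with $q_x=\sqrt{10\epsilon}>\sqrt{2\epsilon}$; and (iii) the horizontal line through $q$ misses $S^\star$ entirely, since on it every point of $S_2^\star$ has $|x|>\sqrt{2\epsilon}$ and hence lies outside the strip $S_1^\star=\{|x|\le\sqrt{2\epsilon}\}$, while $F_2(0,q_y)=\tfrac52 q_y^2=4\epsilon>\epsilon$. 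Thus any iterate that enters $S_2^\star$ while staying on this horizontal line does so with $F_1$ still large, and can never afterwards reach $S^\star$ because $y$ is henceforth frozen.

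The argument then runs as follows. Given $\aaa,w_0,\eta_{1:R}$, translate the construction so that the given $w_0$ sits on $\Gamma$ just to the right of $q$, at horizontal distance $\delta$ from $\partial S_2^\star$, still in the good region. Because $\nabla F_1(w_0)\parallel\nabla F_2(w_0)$, the span $\mathrm{Span}\{\nabla F_1(w_0),\nabla F_2(w_0)\}$ is one–dimensional and horizontal, so \emph{every} aggregation rule makes the round–$1$ update purely horizontal: $y$ is unchanged after round $1$. I would choose $c\in(0,1)$ so the resulting step $c\,\eta_1\|h_1\|$ exceeds $\delta$ (the iterate crosses into $S_2^\star$) but is shorter than the chord of $S_2^\star$ on that horizontal line (no overshoot); since $w_0$ may be taken arbitrarily close to $\partial S_2^\star$, such a $c$ exists. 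At round $2$ agent $2$ sees a model inside $S_2^\star$ with $F_1$ large and defects, while agent $1$ stays; from then on only the horizontal $\nabla F_1$ drives the iterates, $y$ stays frozen at $q_y$, and they converge to $(0,q_y)\notin S^\star$, giving $F(w_R)=\tfrac12 F_2(0,q_y)=2\epsilon>0$.

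The main obstacle is making the forcing robust to \emph{every} ICFO instance, not just a descent method. Two points need care. First, the forced first move is horizontal but its \emph{sign} is the algorithm's; I would rule out a move away from $S_2^\star$ using the convergence hypothesis—an instance that is convergent for irrational agents must take a genuine leftward progress step at $w_0$, as otherwise it could never approach $S^\star$. Second, the calibration of $c$ is the crux: the forced step must land in the window $(\delta,\ \delta+\text{chord})$, because overshooting past the far side of $S_2^\star$ would instead enter $S_1^\star$ and cause a \emph{benign} agent–$1$ defection (whose residual $F_2$–dynamics flow to $0\in S^\star$). This is precisely why shrinking $\eta_{1:R}$ by $c$ is essential: large steps can vault over the moat into the benign regime, whereas a small enough $c$ pins the forced step just inside the tip of $S_2^\star$, locking in the harmful defection. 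I would finish by noting that placing $w_0$ within $\delta<c\,\eta_1\|h_1\|$ of $\partial S_2^\star$ and using continuity in $c$ makes this landing window nonempty.
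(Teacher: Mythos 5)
Your example is a legitimate member of the complement of $\fff_\text{hetero}$ satisfying the paper's assumptions (restrict $\www$ to a ball for Lipschitzness), and your endgame is sound: once an iterate with $y$-coordinate near $q_y$ enters $S_2^\star$ while $F_1$ is still large, agent $2$ defects, every later update lies in $\mathrm{Span}\{\nabla F_1(w)\}$, which is horizontal everywhere, and the whole line $y=q_y$ misses $S^\star$, so $w_R\notin S^\star$ no matter what the algorithm does afterwards. The gap is in the midgame: you cannot force the first update to land inside $S_2^\star$. At $w_0\in\Gamma$ the span of the two gradients is the horizontal axis, so the aggregation rule must output $h_1=(a,0)$, but the sign and magnitude of $a$ are the algorithm's to choose, not yours. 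If $a\le 0$, or if $\eta_1|a|<\delta$ so that no $c\in(0,1)$ can stretch the step to reach $\partial S_2^\star$, then $w_1$ lies outside $S_2^\star$ and off $\Gamma$; there the two gradients are linearly independent, the span is all of $\R^2$, and the algorithm regains full freedom to steer into $S^\star$ (for instance through a neighborhood of a corner of the lens $S_1^\star\cap S_2^\star$, where a single small step passes from the region with no defection trigger directly into $S^\star$). The parallelism that constrains the update holds only on the measure-zero set $\Gamma$, so your construction binds the algorithm for exactly one round, and one round is not enough.

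Your two proposed repairs do not close this. Convergence does not pin down the sign of $h_1$: a convergent instance may take a zero or rightward first step and correct course at round $2$ once the gradients decouple; moreover the observation does not assume convergence, and convergence under $\eta_{1:R}$ says nothing about $c\,\eta_{1:R}$. Calibrating $c$ only shrinks the step, so it needs $\eta_1\|h_1\|>\delta$ to begin with, and you cannot secure this by translating $w_0$ closer to $\partial S_2^\star$, because the translation changes the oracle outputs at $w_0$ and hence changes $h_1$, which is an arbitrary (possibly discontinuous, adversarial) function of those outputs. The paper's construction sidesteps all of this by a separation rather than a forcing argument: agent $1$'s $\epsilon$-sublevel set is arranged as a positive-width barrier between $w_0$ and $S_2^\star$, so for small enough $c$ \emph{every} trajectory either never reaches $S_2^\star$ (failing outright) or must first enter $S_1^\star\setminus S^\star$, trigger agent $1$'s defection, and thereafter be confined to $\mathrm{Span}\{\nabla F_2\}$, whose flow exits $S_1^\star$; no control over any individual update is needed. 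To salvage your quadratic example you would need an analogous positive-width barrier between $w_0$ and one of the sublevel sets, rather than the one-shot constraint at $\Gamma$.
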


\begin{figure}[!htbp]
    \centering
    \includegraphics[width=0.7\textwidth]{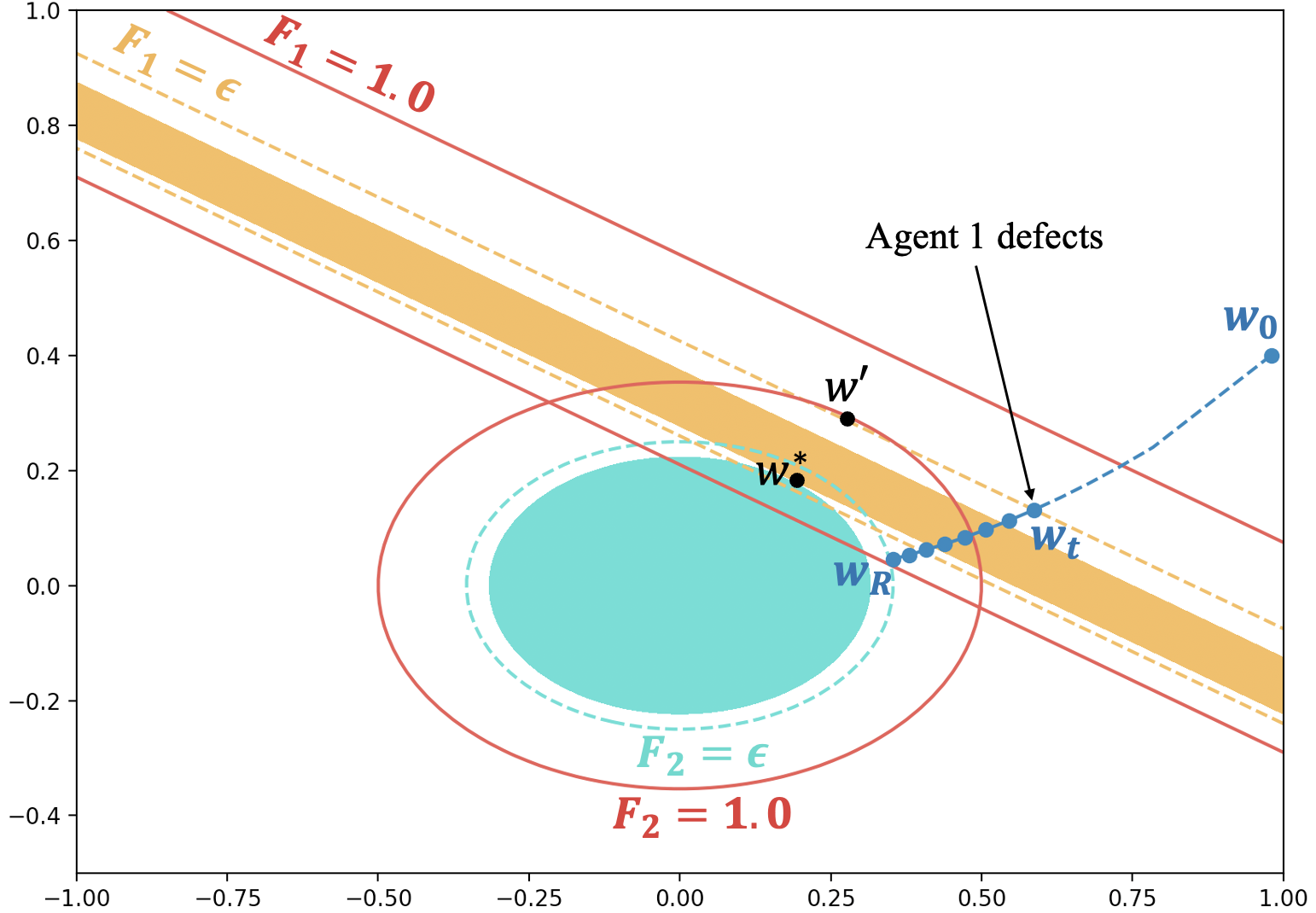}
    \caption{
An example of bad problem.}
    \label{fig:defection_example}
\end{figure}
We provide an example in Fig~\ref{fig:defection_example}, where we plot the contour of $\{F_1,F_2\}$. Agent 1's loss function $F_1$ is piece-wise linear, while agent 2's loss function $F_2$ is quadratic. Both functions are truncated to have a minimum value of zero and smoothed. 
The figure highlights zero-loss regions: filled orange for $F_1$ and green for $F_2$. 
Their shared optimum is $w^\star$ where $F_m(w^\star) = 0$. Assumptions \ref{ass:cvx_smth}, \ref{ass:lip}, and \ref{ass:realizable} are satisfied.
But this problem does not belong to the class of minimal heterogeneous problems as the gradients $\nabla F_1(w')$ and $\nabla F_2(w')$ are parallel at $w'$.
Dashed contour lines indicate the $\epsilon$ level set ($F_m(w) = \epsilon$), and solid lines represent the $1$ level set ($F_m(w) = 1$). 
Starting from $w_0$, any trajectory with small step sizes must pass the orange region to reach the $\epsilon$-sub level set of $F_2$. 
When the model updates to $w_t$ at time $t$, agent 1 would defect as $F_1(w_t) \leq \epsilon$. Subsequent updates follow $\nabla F_2(\cdot)$, and with a small step size, the model converges to $w_R$. This final model deviates from agent 1's $\epsilon$ level set, resulting in a poor performance with $F_1(w_R) = 1$.



\section{Disincentivizing Defections through Different Aggregation Method}\label{sec:alg}

In this section, we state our algorithm \textsc{ADA-GD} in Algorithm \ref{alg:reweight}. At our method's core is an adaptive aggregation approach for the gradients received from agents, which disincentivizes participating agents' defection during the training.

\begin{algorithm}
\caption{Adaptive Defection-aware Aggregation for Gradient Descent (\textsc{ADA-GD})}\label{alg:reweight}
\textbf{Parameters:} step size $\eta$, initialization $w_{0}$\\
    \For{$t=1,2,\ldots$}{
    Compute $\nabla F_m(w_{t-1})$ for all $m\in[M]$\\
    \textbf{Predict defecting agents:} $D = \{m\in[M]: F_m(w_{t-1}) - \eta\norm{\nabla F_m(w_{t-1})}\leq \epsilon + \delta\}$\label{alg-step:defecting} \Comment{$\delta$ slack}\\ 
    \textbf{Predict non-defecting agents:} $ND = [M]\setminus D$\label{alg-step:non-defecting}\\
    \If{Both $D$ and $ND$ are non-empty}{
    Compute $\nabla F_{ND}(w_{t-1}) = \sum_{m\in ND}\nabla F_m(w_{t-1})$\\
    Let $P = Span\{\nabla F_n(w_{t-1}): n\in D\}^\perp$\\
    {
    Project $\nabla F_{ND}(w_{t-1})$ and normalize $\ngrad{ND}(w_{t-1}) = \frac{\Pi_P\rb{\nabla F_{ND}(w_{t-1})}}{\norm{\Pi_P\rb{\nabla F_{ND}(w_{t-1})}}}$\\
    $g_t = -\min\cb{\norm{\Pi_P\rb{\nabla F_{ND}(w_{t-1})}}, 1}\cdot\ngrad{ND}(w_{t-1})$\Comment{Case 1}\label{alg-step:some-defect}\\
    }
    }
    \If{$D$ is empty}{
    {
    $g_t = -\min\{\norm{\nabla F(w_{t-1})}, 1\}\cdot\frac{\nabla F(w_{t-1})}{\norm{\nabla F(w_{t-1})}}$ \Comment{Case 2}\label{alg-step:no-defect}\\
    }
    }
    \If{$ND$ is empty }{
    \Return $\hat{w}= w_{t-1}$ \Comment{Case 3}\label{alg-step:all-defect}
    }
    $w_t = w_{t-1} + \eta g_t$
    }  
\end{algorithm}

For minimal heterogeneous problems, i.e., problems in $\fff_{hetero}$ satisfying assumptions \ref{ass:cvx_smth}, \ref{ass:lip}, \ref{ass:realizable}, we show that our algorithm is legal (i.e., any denominator in Algorithm~\ref{alg:reweight} is non-zero) and converges to an approximately optimal model.
\begin{theorem}\label{thm:two_agents}
   Suppose $\{F_m\}_{m\in[M]}\in \fff_{hetero}$, satisfy Assumptions \ref{ass:cvx_smth}, \ref{ass:lip}, \ref{ass:realizable}. If we choose $w_0\notin \cup_{m\in[M]}S_m^\star$, $0 < \delta \leq \epsilon < 1$ and $\eta\leq \min\cb{\frac{\delta}{L}, \sqrt{\frac{\delta}{2H}}, \frac{1}{MH}}$, then no agents will defect and Algorithm~\ref{alg:reweight} will output $\hat w$ that satisfies $F(\hat w) \leq \epsilon + 3\delta$.
\end{theorem}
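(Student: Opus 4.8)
The plan is to establish the three facts bundled in the statement in order of increasing difficulty: (i) the algorithm is \emph{legal}, i.e.\ every denominator it forms is nonzero; (ii) \emph{no agent ever defects}; and (iii) upon returning, $F(\hat w)\le\epsilon+3\delta$; with the \emph{finite termination} needed for (iii) as the real crux. Legality I would dispatch first using minimal heterogeneity. Whenever an index $m$ has $\nabla F_m(w_{t-1})=\bZero$, convexity plus Assumption~\ref{ass:realizable} forces $F_m(w_{t-1})=\min F_m=0\le\epsilon+\delta$, so $m\in D$; hence every index in $ND$, and every nonzero generator of $P$, carries a nonzero gradient. A denominator is only formed in Cases~1 and~2, where $ND\ne\emptyset$ guarantees some $F_m(w_{t-1})>\epsilon+\delta>0$ and thus $w_{t-1}\notin\www^\star$, so Definition~\ref{def:hetero} applies and these nonzero gradients are linearly independent. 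In Case~2 this gives $\nabla F(w_{t-1})=\frac1M\sum_m\nabla F_m(w_{t-1})\ne\bZero$ (a nonempty set of linearly independent vectors cannot sum to zero), and in Case~1 it gives $\Pi_P(\nabla F_{ND}(w_{t-1}))\ne\bZero$, since $\nabla F_{ND}=\sum_{m\in ND}\nabla F_m$ lying in $Span\{\nabla F_n:n\in D\}$ would be a nontrivial linear dependence among the (independent) active gradients.

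Next I would prove the invariant $F_m(w_{t-1})>\epsilon$ for all $m$ and all rounds before the return, by induction; the base case is the hypothesis $w_0\notin\cup_m S_m^\star$. In both cases $\norm{g_t}\le 1$, so $\norm{w_t-w_{t-1}}\le\eta$. For a protected index $n\in D$ in Case~1 the update lies in $P=Span\{\nabla F_n\}^\perp$, so $\inner{\nabla F_n(w_{t-1})}{w_t-w_{t-1}}=0$ and convexity (Assumption~\ref{ass:cvx_smth}) yields $F_n(w_t)\ge F_n(w_{t-1})>\epsilon$: protected agents never decrease. For $m\in ND$ (and for all agents in Case~2, where $D=\emptyset$), Lipschitzness (Assumption~\ref{ass:lip}) with $\eta\le\delta/L$ gives $F_m(w_t)\ge F_m(w_{t-1})-L\eta\ge F_m(w_{t-1})-\delta>\epsilon$, since membership in $ND$ forces $F_m(w_{t-1})>\epsilon+\delta$. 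Thus the invariant is preserved, and because Case~3 returns $w_{t-1}$ while it still holds, no agent has defected. Claim~(iii) then follows from the termination condition: when Case~3 fires, $ND=\emptyset$, so every $m$ satisfies $F_m(w_{t-1})-\eta\norm{\nabla F_m(w_{t-1})}\le\epsilon+\delta$, and $\eta\norm{\nabla F_m}\le\eta L\le\delta$ gives $F_m(w_{t-1})\le\epsilon+2\delta$, whence $F(\hat w)\le\epsilon+2\delta\le\epsilon+3\delta$.

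The crux is showing Case~3 is reached in finitely many rounds. The engine of progress is that the chosen direction is a descent direction for the \emph{active} objective. In Case~2, $H$-smoothness of $F$ with $\eta\le 1/(MH)$ gives $F(w_t)\le F(w_{t-1})-\tfrac{\eta}{2}\min\{\norm{\nabla F(w_{t-1})}^2,1\}$. In Case~1, the same computation applied to $F_{ND}=\sum_{m\in ND}F_m$ (which is $MH$-smooth), together with $\inner{\nabla F_{ND}}{\Pi_P\nabla F_{ND}}=\norm{\Pi_P\nabla F_{ND}}^2$, gives $F_{ND}(w_t)\le F_{ND}(w_{t-1})-\tfrac{\eta}{2}\min\{\norm{\Pi_P\nabla F_{ND}(w_{t-1})}^2,1\}$, where legality guarantees the projected gradient is nonzero. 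The complementary bookkeeping controls the protected agents: in Case~1 the upper smoothness bound with orthogonality yields $F_n(w_t)\le F_n(w_{t-1})+\tfrac{H}{2}\eta^2\le F_n(w_{t-1})+\delta/4$ for $n\in D$, which is exactly where the cap $\eta\le\sqrt{\delta/(2H)}$ enters.

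I expect the main obstacle to be precisely that the active sets $D$ and $ND$ change from round to round, so neither $F$ nor $F_{ND}$ is globally monotone: an index leaving $ND$ for $D$ only helps, but an index re-entering $ND$ from $D$ re-injects up to $\epsilon+2\delta$ into $F_{ND}$. The plan is therefore a case-based accounting over maximal blocks of consecutive rounds on which $D$ is constant: on each block $F_{ND}$ decreases at a fixed rate, every transition that shrinks $ND$ is monotone progress, and the bounded per-round growth $\delta/4$ of protected losses—together with the fact that any index whose loss exceeds $\epsilon+2\delta$ is forced out of $D$—limits how often and how far an index can oscillate back into $ND$. Combining the guaranteed per-round decrease with the boundedness of each $F_m\ge 0$ and of the total budget returned by oscillations yields a finite bound on the number of rounds, after which $ND$ must be empty and Case~3 fires. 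This transition and oscillation control, rather than any single inequality, is the delicate heart of the argument.
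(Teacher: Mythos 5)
Your first three components are correct and in places cleaner than the paper's own argument: your legality claim matches the paper's Lemma~\ref{lem:legal} (via the linear-independence argument of Lemma~\ref{lem:ass_hetero}); your no-defection invariant for agents in $D$ (convexity plus orthogonality) is exactly the paper's, and for agents in $ND$ you replace the paper's two-step comparison against $F_m(w_{t-1}-\eta\ngrad{m}(w_{t-1}))$ (Lemma~\ref{lem:defection_detect}) with a one-line Lipschitz bound using $\eta\le\delta/L$, which is valid and simpler; and your Case-3 accuracy bound $F(\hat w)\le\epsilon+2\delta$ is a direct, slightly tighter version of Lemma~\ref{lem:case1b_2b_3}.

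The gap is in termination, which you correctly identify as the crux but do not actually prove. Two problems. First, your premise that ``neither $F$ nor $F_{ND}$ is globally monotone'' is false for $F$: because $g_t$ is orthogonal to every $\nabla F_n(w_{t-1})$ with $n\in D$, the first-order term in the $H$-smoothness upper bound on $F(w_t)-F(w_{t-1})$ reduces to $\frac{\eta}{M}\inner{\nabla F_{ND}(w_{t-1})}{g_t}=-\frac{\eta}{M}\min\cb{\norm{\Pi_P\rb{\nabla F_{ND}(w_{t-1})}},1}\cdot\norm{\Pi_P\rb{\nabla F_{ND}(w_{t-1})}}<0$, so $F$ itself strictly decreases at every step of Cases~1 and~2 regardless of how $D$ changes (the paper's Lemma~\ref{lem:case1a_2a_progress}). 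This structural fact makes your block/oscillation bookkeeping unnecessary: the paper simply notes that $F$ is monotone and bounded below, hence convergent, and rules out a positive limit by a subsequence argument (an infinite subsequence of Case-2 steps forces $\norm{\nabla F}\to0$ and hence $F\to 0$ by convexity; an infinite subsequence of Case-1 steps with a fixed $ND=S$ forces $\norm{\Pi_P\rb{\nabla F_{ND}}}\to0$, which by minimal heterogeneity forces $\nabla F_{ND}\to0$ and hence $F_{ND}\to0$, contradicting $F_{ND}>\abs{ND}(\epsilon+\delta)$ along that subsequence). Second, even within your framework, the step ``on each block $F_{ND}$ decreases at a fixed rate'' is unjustified: the per-round decrease scales with $\min\cb{\norm{\Pi_P\rb{\nabla F_{ND}}}^2,1}$, and minimal heterogeneity only guarantees this projection is \emph{nonzero} at each iterate, not bounded away from zero along the trajectory; so you cannot bound the length of a block, let alone the number of oscillations, and your claimed ``finite bound on the number of rounds'' would amount to a convergence rate --- something the paper explicitly states is open and does not prove. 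You should replace the block accounting with the monotonicity-of-$F$ plus subsequence argument, or else supply the missing uniform lower bound on the projected gradient, which would be a genuinely new contribution.
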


Now we outline the high-level ideas behind Algorithm~\ref{alg:reweight}.
Intuitively, an agent is close to defection if a direction $u$ exists such that she would defect after receiving $w_{t-1}-\eta u$.Hence, the server has to carefully tune the update direction to avoid the defection of this agent. Suppose no agent has defected upon receiving $w_{t-1}$. Then, the server will receive the first-order information $\ooo_m(w_{t-1})$ from all agents and determine a direction to update $w_{t-1}$ to $w_t$. In Algorithm~\ref{alg:reweight}, the server first predicts which agents are close to defection through a linear approximation (see line \ref{alg-step:defecting}) and calls them `defecting' agents (although these agents might not defect because of the slack $\delta$ in the precision in line \ref{alg-step:defecting}).

If all agents are `defecting' (case 3), the server will output the current model $w_{t-1}$ as the final model (see line~\ref{alg-step:all-defect}). The output model is approximately optimal since each agent has a small $F_m(w_{t-1})$ when they are `defecting'.
Suppose no agent is `defecting' (case 2). In that case, the server is certain that all agents will not defect after the update and thus will update in the steepest descending direction, i.e., the gradient of the average loss $\nabla F(w_{t-1})$ (see line~\ref{alg-step:no-defect}).
If there exist both `defecting' and `non-defecting' agents (case 1), the server will aggregate the gradients from non-defecting agents and project them to the orthogonal complement of the space spanned by the gradients of the `defecting' agents to guarantee that `defecting' agents will not defect. Then, the server will update the current model using the normalized projected gradients (see line~\ref{alg-step:some-defect}).
By induction, we can show that no agents will defect. Furthermore, we can prove that we will make positive progress in decreasing the average loss at each round, and as a result, our algorithm will ultimately converge to an approximately optimal model. Our analysis is inherently case-based, making it difficult to replicate a simple distributed gradient descent analysis that proceeds by showing that we move in a descent direction at each iteration. Consequently, offering a convergence rate poses a significant challenge and remains an open question.

Note that Algorithm \ref{alg:reweight} fits in the class of ICFO algorithms specified in Section~\ref{sec:setup}, as orthogonalization and normalization return an output in the linear span of the machines' gradients. In Figure \ref{fig:our_algorithm}, we compare \textsc{ADA-GD} against \textsc{FedAvg}, demonstrating the benefit of adaptive aggregation. 

\begin{remark}[Heterogeneous Agent Goals]
    Throughout the paper, we considered the setting where each agent has the same precision parameter $\epsilon$. However, it is possible to accommodate the setting where agent $m$ has a precision parameter $\epsilon_m$. To do this, we need to change line \ref{alg-step:defecting} in Algorithm \ref{alg:reweight} to have a different rule to predict defection for each agent. The proof undergoes a simple modification accordingly. 
\end{remark}

\begin{figure}[!t]
     \centering
     \begin{subfigure}[b]{0.48\textwidth}
        \centering
        \includegraphics[width=\textwidth]{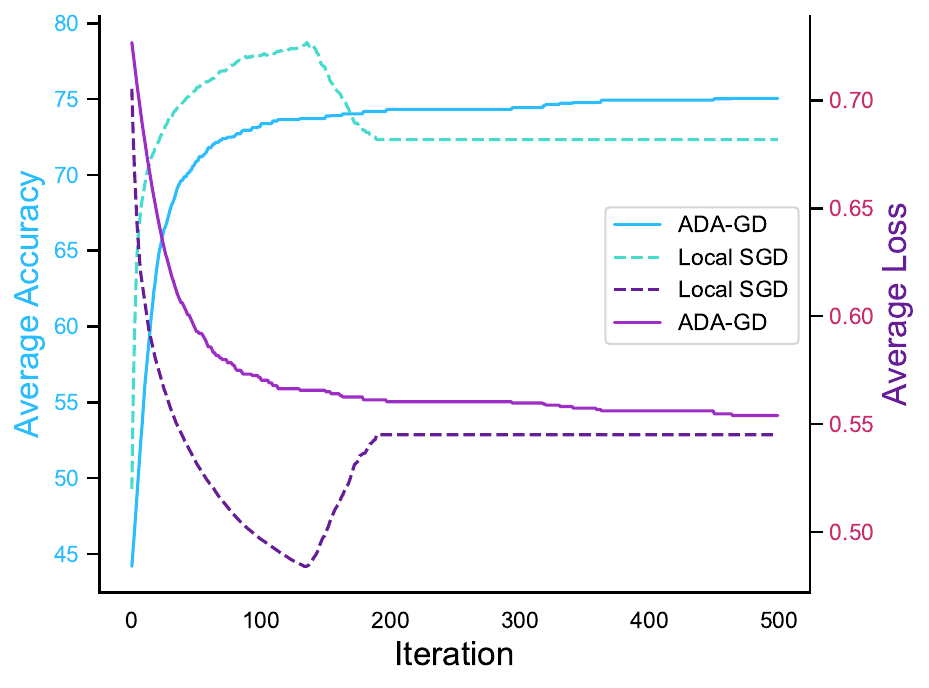}
         \caption{
         Both algorithms use $\eta=0.001$ and $\epsilon=0.4$.}
         \label{fig:comparison_1}
     \end{subfigure}
     \hfill
     \begin{subfigure}[b]{0.48\textwidth}
         \centering
         \includegraphics[width=\textwidth]{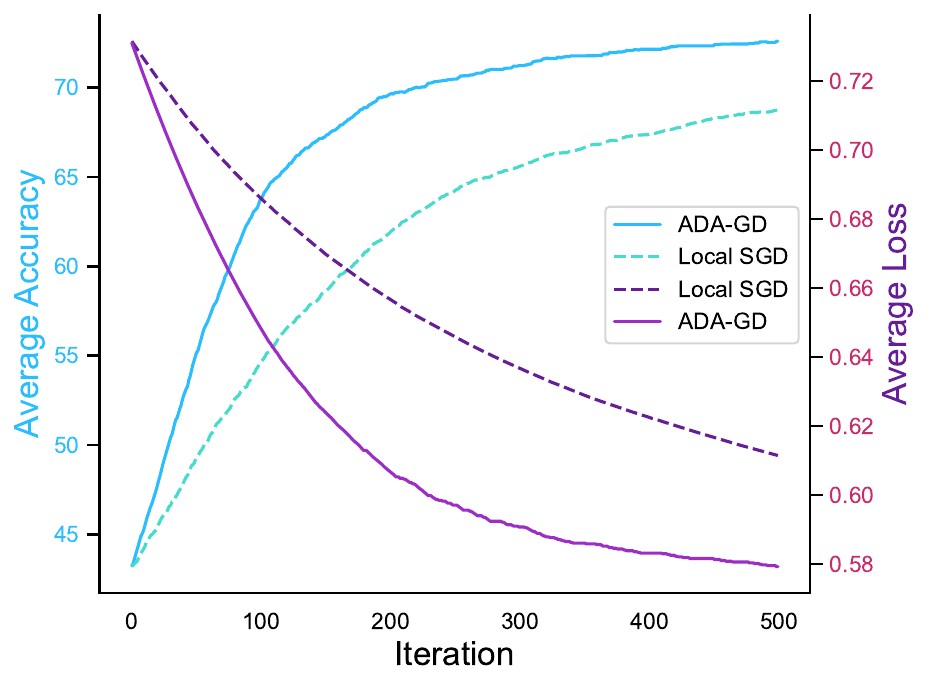}
         \caption{
         $\epsilon=0.5$, $\eta_{\textsc{LSGD}}=2e^{-5}$, $\eta_{\textsc{ADA-GD}}=2e^{-4}$.}
         \label{fig:comparison_2}
     \end{subfigure}
        \caption{We study the performance of our method and federated averaging (local SGD) for binary classification on two classes of CIFAR-10 \citep{krizhevsky2009learning} with two agents. The data heterogeneity $q=0.9$ and local update step $K=1$ for both experiments. Figure \ref{fig:comparison_1} shows that Local SGD results in agent defection around the $130$-th iteration, leading to a considerable decay in the accuracy of the final model. In sharp contrast, our proposed method effectively eliminates defection, enabling the continuous improvement of the model throughout the training process. Remark that it's unfair to compare the highest accuracy of Local SGD with our method as Local SGD is not defection aware and could not simply stop at the highest point. In Figure \ref{fig:comparison_2}, we study the performance of both algorithms with the largest step size such that no agents will defect. We observe that \textsc{ADA-GD} can avoid defection even when employing a considerably larger step size than Local SGD. As a result, this enables us to attain a significantly improved final model. These results support our theoretical findings and validate the effectiveness of our method.}
        \label{fig:our_algorithm}
        \vspace{-1em}
\end{figure}

\section{Discussion}
In this work, we initiate the study of incentives of heterogeneous agents in the multi-round federated learning. We find that defections are an unavoidable part of federated learning and can have drastic consequences for the performance/robustness of the final model. We theoretically and empirically characterize the effects of defection and demarcate between benign and harmful defections. We underline the importance of adaptive aggregation in avoiding defections and propose an algorithm \textsc{ADA-GD} with a provable guarantee and a promising empirical performance. There are several open questions and avenues for future work. 

\noindent\textbf{Convergence Rate.} Regarding theoretical analysis, we only provide an asymptotic convergence guarantee for Algorithm \ref{alg:reweight}. The non-asymptotic guarantee is an interesting open question. The main difficulty lies in that there are several phases and we analyze each phase separately. The standard technique for analyzing first-order methods cannot be applied in this case. 

\noindent\textbf{First Order Oracle.} 
We assume access to exact first-order oracles. Using stochastic oracles is common in practice, and constructing an algorithm based on stochastic oracles presents an intriguing question.

\noindent\textbf{Multiple Local Updates.} It is unclear how our Algorithm \ref{alg:reweight} can integrate these local steps. Including local steps complicates the defection model since agents might defect before communication occurs. This complication makes orthogonalization challenging in Algorithm \ref{alg:reweight}. A naive strategy is to make the defection prediction rule more stringent in Line \ref{alg-step:defecting}, while ensuring the orthogonalization happens for the aggregate local update direction on each machine.


\noindent\textbf{Approximately Realizable Setting.} In this work, we focus on the realizable setting, where there exists $w^*\in \www$ such that $F_m(w^*) = 0$ for all $m\in [M]$. However, the next natural question is: can we get similar results when $w^*$ is only approximately optimal for all agents?

\noindent\textbf{Non-convex Optimization.} We focus on the convex optimization setting. However, a natural follow-up question is how to avoid defections in non-convex federated optimization. Our theory doesn't capture this setting while we perform experiments in the non-convex setting. 

\noindent\textbf{Other Rational Behaviors.} We restrict to the setting where the server cannot save the intermediate models and wants the final model to be as good as possible. It is also interesting to consider settings where the server can save multiple intermediate models and use the best one when a new agent from the population arrives.

\section{Other Related Work}\label{sec:related}

Designing incentive mechanisms for federated learning has received much attention recently, but as illustrated in this paper it is a hard problem. The main difficulty lies in an \textit{``information asymmetry"} \citep{kang2019incentive}. The server does not know the available computation resources and the data sizes on the devices for training. Furthermore, it doesn't know the local data quality of a device and can't estimate it using common metrics such as data Shapley \citep{ghorbani2019data} as it doesn't have access to raw data. As a result, the server may incur a high cost when incentivizing the devices to encourage truthfulness or avoid defections. Or worse, it might not even be able to incentivize desired behavior. In this final section, we survey recent advances in solving this problem. We divide the survey into four parts,
\begin{enumerate}
    \item In section \ref{sec:valuation}, we survey papers that tackle the problem of \textit{agent selection} based on several factors, but primarily the value of the data the agent can provide. This is an important step because for cross-device FL to work, the agents defining problem \eqref{eq:relaxed} should (i) represent the meta distribution $\ppp$ well, (ii) have enough data samples/computational power to converge to a solution for \eqref{eq:relaxed}.
    \item In section \ref{sec:collection}, we survey the papers that assume the pool of agents is fixed, but the agents are self-serving and want to contribute the least amount of data possible. This is because data collection and privacy costs are involved with sharing data. The works in this category also tackle the "free-rider" problem \citep{karimireddy2022mechanisms} so a handful of devices don't collect most of the data at equilibrium. 
    \item In section \ref{sec:csfl}, we survey papers relevant to a simpler problem called \textit{cross-silo FL} \citep{kairouz2019advances} where $\ppp$ is supported on a finite set of machines, and as a result, we can directly consider problem \eqref{eq:relaxed}. Defection is not a problem in this setting, and we are more interested in incentivizing the machines to contribute higher-quality model updates.
    \item Finally, in section \ref{sec:cdfl}, we survey the very limited work in the general problem we are interested in solving, i.e., avoiding defections in cross-device FL.
\end{enumerate}
There is another orthogonal line of research on fairness in FL (see \cite{ezzeldin2021fairfed} and the references within, for example). 

\subsection{Data Valuation and Agent Selection}\label{sec:valuation}

There are two main series of works that are most relevant. The first one used contract theory to study the design of incentive mechanisms under a principal-agent model. 
   
\cite{kang2019incentive} proposed a contract theory-based incentive mechanism for federated learning in mobile networks. They considered the principal-agent model that there are $M$ types of agents, where the agent's local data quality defines the type. The server does not know the type of each device and only has a prior distribution over all agent types. It aims to design an incentive-compatible mechanism that incentivizes every agent to behave truthfully while maximizing the server's expected reward.  \cite{tian2021contract} extended this model by considering the agent's willingness to participate as part of the type. As a result, they proposed a two-dimensional contract model for the incentive mechanism design. \cite{cong2020vcg} considered a similar model where the data quality and cost define the agent's type. The server decides payment and data acceptance rates based on the reported agent type. The authors proposed implementing the payment and data acceptance rate functions with neural networks and showed they satisfy proper properties such as incentive compatibility.  Another related work is by \cite{zeng2020fmore}, who proposed a multi-dimension auction mechanism with multiple winners for agent selection. The authors presented a first-price auction mechanism and analyzed the server's equilibrium state. 

Besides applying contract theory to incentivize agents' truthful behavior, there has also been another series of work that focuses on designing reputation-based agent selection schemes to select high-quality agents. Specifically, \cite{kang2019incentive} proposed a model that works as follows: the server computes the reputation of every agent and selects agents for the federated learning based on the reputation scores, and the reputation of every agent is updated by a management system after finishing an FL task, where a consortium block-chain implements the reputation management system. \cite{zhang2021incentive} extended the reputation-based agent selection scheme with a reverse auction mechanism to select agents by combining their bids and comprehensive reputations. \cite{xu2020reputation} proposed a different reputation mechanism that computes the reputation of every agent based on the correlation between their report and the average report. Each agent will be rewarded according to their reputation, and those with low reputations will be removed. 

Finally, it is also worth mentioning two other related works. \cite{richardson2020budget} designed another data valuation method. They collect an independent validation data set and value data through an influence function, defined as the loss decreases over the validation data set when the data is left out. And \cite{hu2020trading} cast the problem as a Stackelberg game and choose a subset of agents, which they claim are most likely to provide high-quality data. They provide extensive experiments to test their idea. 

\subsection{Incentivizing Agent Participation and Data Collection}\label{sec:collection}

Usually, the utility of participating agents is a function of model accuracy, computation/sample collection cost, and additional constant cost, e.g., communication cost and payment to the server. In a simple case where the model accuracy is a function of the total number of input samples, \cite{karimireddy2022mechanisms, zhang2022enabling} show that when the server gives the same global model to every agent like federated averaging \citep{mcmahan2016communication} does, a Nash Equilibrium exists for individual sample contributions. But, at this Nash Equilibrium, agents with high sampling costs will be ``free-riders" and will not contribute any data, while agents with low sampling costs will contribute most of the total data.

To alleviate this situation, \cite{karimireddy2022mechanisms} propose a mechanism to incentivize agents to contribute more by customizing the final model's accuracy for every agent, which means sending different models to different agents.
They show that their mechanism is ``data-maximizing" in the face of rational agents. \cite{sim2020collaborative} study a similar setting as \cite{karimireddy2022mechanisms} and aim to provide model-based rewards. Specifically, the authors use information gain as a metric for data valuation and show their reward scheme satisfies some desirable properties. \cite{zhang2022enabling} also study an infinitely repeated game, where the utility is the discounted cumulative utility, and propose a cooperative strategy to achieve the minimum number of free riders.

\cite{zhan2020learning} formulate the problem as a Stackelberg game, where the server decides the payment amount first, and agents decide the amount of data they are willing to contribute. The server's utility is the model accuracy minus the payment, while the agents' utility is the payment (proportional to the contributing data size) minus the computation cost. In this setting, the agents do not care about accuracy, which differs from our case.

In a bit of orthogonal work, \cite{cho2022federate} propose changing objective \eqref{eq:relaxed} itself to maximize the number of agents that benefit from collaboration. They provide some preliminary theoretical guarantees for a simple mean estimation problem. While in our setting, we assume the server can get the model from agents. \cite{liu2020incentives} study the problem of eliciting truthful reports of the learned model from the agents by designing proper scoring rules. Specifically, the authors consider two settings where the server has a ground truth verification data set or only has access to features. The authors demonstrate the connections between this question and proper scoring rule and peer predictions (i.e., information elicitation without verification) and test the performance with real-world data sets.

\subsection{Maximizing Data Quality in Cross-silo FL}\label{sec:csfl}

\cite{kairouz2019advances} demarcate between two prominent federated learning paradigms: cross-silo and cross-device. We have already discussed an example of cross-device Fl: training on mobile devices. Cross-silo FL captures the traditional training in data centers or between big organizations with similar interests. One example is a collaboration between medical institutions to improve their models without leaking sensitive patient information \cite{bergen2012genome}. In cross-silo FL, usually, the agents initiate the FL process and pay the central server for global aggregation. As a result, defection is not an issue in cross-silo FL because, ultimately, the goal is to develop better individual models. Unfortunately, there can still be ``free-riding" behavior in the cross-silo setting \citep{zhang2022enabling, richardson2020budget} as the devices have incentives to contribute less to maximize their own benefit. Therefore, maximizing the data quality is one of the main problems in cross-silo FL.

\cite{xu2021gradient} propose a heuristic Shapley score based on the gradient information from each agent. The score is calculated after communication by comparing the alignment of an agent's gradient with the aggregate gradient. Then the agents with a high score are provided an un-tarnished version of the aggregated gradient, while the agents with a lower score only get a noisy version of the gradient. This incentivizes devices to provide higher-quality gradient updates to get a final model close to the model of the server. There are, unfortunately, no guarantees showing this won't hurt the optimization of objective \eqref{eq:relaxed}, or it at least provably maximizes data quality in any sense. A similar idea has also been used in \cite{shi2022fedfaim}. \cite{zheng2021fl} also propose an auction mechanism modeled using a neural network that decides the appropriate perturbation rule for agents' gradients and an aggregation rule that helps recover a good final guarantee despite this perturbation. \cite{richardson2020budget} take a different approach, and instead of perturbing the model updates, they make budget-bound monetary payments to devices. Their metric is like the leave-one-out metric in data valuation but for model updates. Finally, \cite{tang2021incentive} formulate a social welfare maximization problem for cross-silo FL and propose an incentive mechanism with preliminary theoretical guarantees.

\subsection{Towards Avoiding Defections in Cross-device FL}\label{sec:cdfl}
There doesn't exist any theoretical work for our proposed problem, i.e., avoiding agent defection while optimizing problem \eqref{eq:relaxed} using an iterative algorithm with several communication rounds. There are, however, some empirical insights in other works.     

The most relevant work is about \textsc{MW-FED} algorithm \citep{blum2021one}, an algorithm that fits into the intermittent communication model and explicitly slows down the progress of devices closer to their target accuracies. Specifically, the algorithm asks the devices to report their target accuracies at the beginning of training. Then after each communication round, the devices report their validation accuracy on the current model. The server then uses a multiplicative weight update rule to devise a sample load for each device for that communication round. Intuitively, devices closer to their target accuracies get a lighter sample load and vice-versa. Practically this ensures that the devices are all satisfied at roughly the same point, thus avoiding any incentives to defect. While \textsc{MW-FED} hasn't been analyzed in the context of federated learning, it is well-known that it has an optimal sample complexity for optimizing distributed learning problems where the goal is to come up with a single best model for all agents, much like problem \eqref{eq:relaxed}.

\section*{Acknowledgements}
We would like to thank Rad Niazadeh, Jason Hartline, Avrim Blum, and Nati Srebro for several useful discussions. This collaboration began as a project for the graduate course on data economics, which was a part of the \textit{Institute for Data, Econometrics, Algorithms, and Learning} (IDEAL). This work was supported in part by the National Science Foundation under grants 2212968 and 2216899, by the Simons Foundation under the Simons Collaboration on the Theory of Algorithmic Fairness, by the Defense Advanced Research Projects Agency under cooperative agreement HR00112020003. The views expressed in this work do not necessarily reflect the position or the policy of the Government and no official endorsement should be inferred. Approved for public release; distribution is unlimited.
\bibliographystyle{iclr2024_conference}
\bibliography{sample}

\begin{thebibliography}{42}
\providecommand{\natexlab}[1]{#1}
\providecommand{\url}[1]{\texttt{#1}}
\expandafter\ifx\csname urlstyle\endcsname\relax
  \providecommand{\doi}[1]{doi: #1}\else
  \providecommand{\doi}{doi: \begingroup \urlstyle{rm}\Url}\fi

\bibitem[Apple()]{apple}
Apple.
\newblock Designing for privacy - wwdc19 - videos.
\newblock URL \url{https://developer.apple.com/videos/play/wwdc2019/708}.

\bibitem[Bergen \& Petryshen(2012)Bergen and Petryshen]{bergen2012genome}
Sarah~E Bergen and Tracey~L Petryshen.
\newblock Genome-wide association studies of schizophrenia: does bigger lead to
  better results?
\newblock \emph{Current opinion in psychiatry}, 25\penalty0 (2):\penalty0
  76--82, 2012.

\bibitem[Blum et~al.(2017)Blum, Haghtalab, Procaccia, and
  Qiao]{blum2017collaborative}
Avrim Blum, Nika Haghtalab, Ariel~D Procaccia, and Mingda Qiao.
\newblock Collaborative pac learning.
\newblock \emph{Advances in Neural Information Processing Systems}, 30, 2017.

\bibitem[Blum et~al.(2021)Blum, Haghtalab, Phillips, and Shao]{blum2021one}
Avrim Blum, Nika Haghtalab, Richard~Lanas Phillips, and Han Shao.
\newblock One for one, or all for all: Equilibria and optimality of
  collaboration in federated learning.
\newblock In \emph{International Conference on Machine Learning}, pp.\
  1005--1014. PMLR, 2021.

\bibitem[B{\"o}rgers(2015)]{borgers2015introduction}
Tilman B{\"o}rgers.
\newblock \emph{An introduction to the theory of mechanism design}.
\newblock Oxford University Press, USA, 2015.

\bibitem[Cho et~al.(2022)Cho, Jhunjhunwala, Li, Smith, and
  Joshi]{cho2022federate}
Yae~Jee Cho, Divyansh Jhunjhunwala, Tian Li, Virginia Smith, and Gauri Joshi.
\newblock To federate or not to federate: Incentivizing client participation in
  federated learning.
\newblock \emph{arXiv preprint arXiv:2205.14840}, 2022.

\bibitem[Cong et~al.(2020)Cong, Yu, Weng, Qu, Liu, and Yiu]{cong2020vcg}
Mingshu Cong, Han Yu, Xi~Weng, Jiabao Qu, Yang Liu, and Siu~Ming Yiu.
\newblock A vcg-based fair incentive mechanism for federated learning.
\newblock \emph{arXiv preprint arXiv:2008.06680}, 2020.

\bibitem[Dekel et~al.(2012)Dekel, Gilad-Bachrach, Shamir, and
  Xiao]{dekel2012optimal}
Ofer Dekel, Ran Gilad-Bachrach, Ohad Shamir, and Lin Xiao.
\newblock Optimal distributed online prediction using mini-batches.
\newblock \emph{Journal of Machine Learning Research}, 13\penalty0 (1), 2012.

\bibitem[Ezzeldin et~al.(2021)Ezzeldin, Yan, He, Ferrara, and
  Avestimehr]{ezzeldin2021fairfed}
Yahya~H Ezzeldin, Shen Yan, Chaoyang He, Emilio Ferrara, and Salman Avestimehr.
\newblock Fairfed: Enabling group fairness in federated learning.
\newblock \emph{arXiv preprint arXiv:2110.00857}, 2021.

\bibitem[Ghorbani \& Zou(2019)Ghorbani and Zou]{ghorbani2019data}
Amirata Ghorbani and James Zou.
\newblock Data shapley: Equitable valuation of data for machine learning.
\newblock In \emph{International Conference on Machine Learning}, pp.\
  2242--2251. PMLR, 2019.

\bibitem[Hu \& Gong(2020)Hu and Gong]{hu2020trading}
Rui Hu and Yanmin Gong.
\newblock Trading data for learning: Incentive mechanism for on-device
  federated learning.
\newblock In \emph{GLOBECOM 2020-2020 IEEE Global Communications Conference},
  pp.\  1--6. IEEE, 2020.

\bibitem[Huang et~al.(2021)Huang, Wu, Yang, and Shen]{huang2021federated}
Ruiquan Huang, Weiqiang Wu, Jing Yang, and Cong Shen.
\newblock Federated linear contextual bandits.
\newblock \emph{Advances in neural information processing systems},
  34:\penalty0 27057--27068, 2021.

\bibitem[Huber et~al.(2001)Huber, Herrmann, and Morgan]{huber2001gaining}
Frank Huber, Andreas Herrmann, and Robert~E Morgan.
\newblock Gaining competitive advantage through customer value oriented
  management.
\newblock \emph{Journal of consumer marketing}, 18\penalty0 (1):\penalty0
  41--53, 2001.

\bibitem[Kairouz et~al.(2019)Kairouz, McMahan, Avent, Bellet, Bennis, Bhagoji,
  Bonawitz, Charles, Cormode, Cummings, et~al.]{kairouz2019advances}
Peter Kairouz, H~Brendan McMahan, Brendan Avent, Aur{\'e}lien Bellet, Mehdi
  Bennis, Arjun~Nitin Bhagoji, Keith Bonawitz, Zachary Charles, Graham Cormode,
  Rachel Cummings, et~al.
\newblock Advances and open problems in federated learning. corr.
\newblock \emph{arXiv preprint arXiv:1912.04977}, 2019.

\bibitem[Kang et~al.(2019)Kang, Xiong, Niyato, Xie, and
  Zhang]{kang2019incentive}
Jiawen Kang, Zehui Xiong, Dusit Niyato, Shengli Xie, and Junshan Zhang.
\newblock Incentive mechanism for reliable federated learning: A joint
  optimization approach to combining reputation and contract theory.
\newblock \emph{IEEE Internet of Things Journal}, 6\penalty0 (6):\penalty0
  10700--10714, 2019.

\bibitem[Karimireddy et~al.(2022)Karimireddy, Guo, and
  Jordan]{karimireddy2022mechanisms}
Sai~Praneeth Karimireddy, Wenshuo Guo, and Michael~I Jordan.
\newblock Mechanisms that incentivize data sharing in federated learning.
\newblock \emph{arXiv preprint arXiv:2207.04557}, 2022.

\bibitem[Krizhevsky et~al.(2009)Krizhevsky, Hinton,
  et~al.]{krizhevsky2009learning}
Alex Krizhevsky, Geoffrey Hinton, et~al.
\newblock Learning multiple layers of features from tiny images.
\newblock \emph{Citeseer}, 2009.

\bibitem[Li et~al.(2019)Li, Milletar{\`\i}, Xu, Rieke, Hancox, Zhu, Baust,
  Cheng, Ourselin, Cardoso, et~al.]{li2019privacy}
Wenqi Li, Fausto Milletar{\`\i}, Daguang Xu, Nicola Rieke, Jonny Hancox, Wentao
  Zhu, Maximilian Baust, Yan Cheng, S{\'e}bastien Ourselin, M~Jorge Cardoso,
  et~al.
\newblock Privacy-preserving federated brain tumour segmentation.
\newblock In \emph{International workshop on machine learning in medical
  imaging}, pp.\  133--141. Springer, 2019.

\bibitem[Liu \& Wei(2020)Liu and Wei]{liu2020incentives}
Yang Liu and Jiaheng Wei.
\newblock Incentives for federated learning: a hypothesis elicitation approach.
\newblock \emph{arXiv preprint arXiv:2007.10596}, 2020.

\bibitem[McMahan \& Ramage(2017)McMahan and Ramage]{mcmahan_ramage_2017}
Brendan McMahan and Daniel Ramage.
\newblock Federated learning: Collaborative machine learning without
  centralized training data, Apr 2017.
\newblock URL
  \url{https://ai.googleblog.com/2017/04/federated-learning-collaborative.html}.

\bibitem[McMahan et~al.(2016{\natexlab{a}})McMahan, Moore, Ramage, Hampson, and
  y~Arcas]{mcmahan2016communication}
H~Brendan McMahan, Eider Moore, Daniel Ramage, S~Hampson, and B~Ag{\"u}era
  y~Arcas.
\newblock Communication-efficient learning of deep networks from decentralized
  data (2016).
\newblock \emph{arXiv preprint arXiv:1602.05629}, 2016{\natexlab{a}}.

\bibitem[McMahan et~al.(2016{\natexlab{b}})McMahan, Moore, Ramage, and
  y~Arcas]{mcmahan2016federated}
H~Brendan McMahan, Eider Moore, Daniel Ramage, and Blaise~Ag{\"u}era y~Arcas.
\newblock Federated learning of deep networks using model averaging.
\newblock \emph{arXiv preprint arXiv:1602.05629}, 2016{\natexlab{b}}.

\bibitem[Myerson(1981)]{myerson1981optimal}
Roger~B Myerson.
\newblock Optimal auction design.
\newblock \emph{Mathematics of operations research}, 6\penalty0 (1):\penalty0
  58--73, 1981.

\bibitem[Nguyen et~al.(2022)Nguyen, Do, Tran, Nguyen, Duong, Phan, Tjiputra,
  and Tran]{nguyen2022deep}
Anh Nguyen, Tuong Do, Minh Tran, Binh~X Nguyen, Chien Duong, Tu~Phan, Erman
  Tjiputra, and Quang~D Tran.
\newblock Deep federated learning for autonomous driving.
\newblock In \emph{2022 IEEE Intelligent Vehicles Symposium (IV)}, pp.\
  1824--1830. IEEE, 2022.

\bibitem[Patel et~al.(2023)Patel, Wang, Saha, and Srebro]{patel2022distributed}
Kumar~Kshitij Patel, Lingxiao Wang, Aadirupa Saha, and Nathan Srebro.
\newblock Federated online and bandit convex optimization.
\newblock In \emph{International Conference on Machine Learning}, 2023.

\bibitem[Paulik et~al.(2021)Paulik, Seigel, Mason, Telaar, Kluivers, van Dalen,
  Lau, Carlson, Granqvist, Vandevelde, et~al.]{paulik2021federated}
Matthias Paulik, Matt Seigel, Henry Mason, Dominic Telaar, Joris Kluivers,
  Rogier van Dalen, Chi~Wai Lau, Luke Carlson, Filip Granqvist, Chris
  Vandevelde, et~al.
\newblock Federated evaluation and tuning for on-device personalization: System
  design \& applications.
\newblock \emph{arXiv preprint arXiv:2102.08503}, 2021.

\bibitem[Powell(2019)]{powell2019nvidia}
Kimberly Powell.
\newblock Nvidia clara federated learning to deliver ai to hospitals while
  protecting patient data.
\newblock \emph{Nvidia Blog}, 2019.

\bibitem[Richardson et~al.(2020)Richardson, Filos-Ratsikas, and
  Faltings]{richardson2020budget}
Adam Richardson, Aris Filos-Ratsikas, and Boi Faltings.
\newblock Budget-bounded incentives for federated learning.
\newblock In \emph{Federated Learning}, pp.\  176--188. Springer, 2020.

\bibitem[Roth et~al.(2020)Roth, Chang, Singh, Neumark, Li, Gupta, Gupta, Qu,
  Ihsani, Bizzo, et~al.]{roth2020federated}
Holger~R Roth, Ken Chang, Praveer Singh, Nir Neumark, Wenqi Li, Vikash Gupta,
  Sharut Gupta, Liangqiong Qu, Alvin Ihsani, Bernardo~C Bizzo, et~al.
\newblock Federated learning for breast density classification: A real-world
  implementation.
\newblock In \emph{Domain Adaptation and Representation Transfer, and
  Distributed and Collaborative Learning: Second MICCAI Workshop, DART 2020,
  and First MICCAI Workshop, DCL 2020, Held in Conjunction with MICCAI 2020,
  Lima, Peru, October 4--8, 2020, Proceedings 2}, pp.\  181--191. Springer,
  2020.

\bibitem[Shi et~al.(2022)Shi, Zhang, Yao, Lyu, Chen, Wang, Wang, and
  Li]{shi2022fedfaim}
Zhuan Shi, Lan Zhang, Zhenyu Yao, Lingjuan Lyu, Cen Chen, Li~Wang, Junhao Wang,
  and Xiang-Yang Li.
\newblock Fedfaim: A model performance-based fair incentive mechanism for
  federated learning.
\newblock \emph{IEEE Transactions on Big Data}, 2022.

\bibitem[Shiffman et~al.(2021)Shiffman, Zarate, Deshpande, Yeluri, and
  Peiravi]{shiffman_zarate_deshpande_yeluri_peiravi_2021}
Gary Shiffman, Juan Zarate, Nikhil Deshpande, Raghuram Yeluri, and Parviz
  Peiravi.
\newblock Federated learning through revolutionary technology " consilient, Feb
  2021.
\newblock URL
  \url{https://consilient.com/white-paper/federated-learning-through-revolutionary-technology/}.

\bibitem[Sim et~al.(2020)Sim, Zhang, Chan, and Low]{sim2020collaborative}
Rachael Hwee~Ling Sim, Yehong Zhang, Mun~Choon Chan, and Bryan Kian~Hsiang Low.
\newblock Collaborative machine learning with incentive-aware model rewards.
\newblock In \emph{International Conference on Machine Learning}, pp.\
  8927--8936. PMLR, 2020.

\bibitem[Tang \& Wong(2021)Tang and Wong]{tang2021incentive}
Ming Tang and Vincent~WS Wong.
\newblock An incentive mechanism for cross-silo federated learning: A public
  goods perspective.
\newblock In \emph{IEEE INFOCOM 2021-IEEE Conference on Computer
  Communications}, pp.\  1--10. IEEE, 2021.

\bibitem[Tian et~al.(2021)Tian, Chen, Liu, Xiong, Leung, and
  Miao]{tian2021contract}
Mengmeng Tian, Yuxin Chen, Yuan Liu, Zehui Xiong, Cyril Leung, and Chunyan
  Miao.
\newblock A contract theory based incentive mechanism for federated learning.
\newblock \emph{arXiv e-prints}, pp.\  arXiv--2108, 2021.

\bibitem[Woodworth et~al.(2020)Woodworth, Patel, and
  Srebro]{woodworth2020minibatch}
Blake~E Woodworth, Kumar~Kshitij Patel, and Nati Srebro.
\newblock Minibatch vs local sgd for heterogeneous distributed learning.
\newblock \emph{Advances in Neural Information Processing Systems},
  33:\penalty0 6281--6292, 2020.

\bibitem[Xu \& Lyu(2020)Xu and Lyu]{xu2020reputation}
Xinyi Xu and Lingjuan Lyu.
\newblock A reputation mechanism is all you need: Collaborative fairness and
  adversarial robustness in federated learning.
\newblock \emph{arXiv preprint arXiv:2011.10464}, 2020.

\bibitem[Xu et~al.(2021)Xu, Lyu, Ma, Miao, Foo, and Low]{xu2021gradient}
Xinyi Xu, Lingjuan Lyu, Xingjun Ma, Chenglin Miao, Chuan~Sheng Foo, and Bryan
  Kian~Hsiang Low.
\newblock Gradient driven rewards to guarantee fairness in collaborative
  machine learning.
\newblock \emph{Advances in Neural Information Processing Systems},
  34:\penalty0 16104--16117, 2021.

\bibitem[Zeng et~al.(2020)Zeng, Zhang, Wang, and Chu]{zeng2020fmore}
Rongfei Zeng, Shixun Zhang, Jiaqi Wang, and Xiaowen Chu.
\newblock Fmore: An incentive scheme of multi-dimensional auction for federated
  learning in mec.
\newblock In \emph{2020 IEEE 40th International Conference on Distributed
  Computing Systems (ICDCS)}, pp.\  278--288. IEEE, 2020.

\bibitem[Zhan et~al.(2020)Zhan, Li, Qu, Zeng, and Guo]{zhan2020learning}
Yufeng Zhan, Peng Li, Zhihao Qu, Deze Zeng, and Song Guo.
\newblock A learning-based incentive mechanism for federated learning.
\newblock \emph{IEEE Internet of Things Journal}, 7\penalty0 (7):\penalty0
  6360--6368, 2020.

\bibitem[Zhang et~al.(2021)Zhang, Wu, and Pan]{zhang2021incentive}
Jingwen Zhang, Yuezhou Wu, and Rong Pan.
\newblock Incentive mechanism for horizontal federated learning based on
  reputation and reverse auction.
\newblock In \emph{Proceedings of the Web Conference 2021}, pp.\  947--956,
  2021.

\bibitem[Zhang et~al.(2022)Zhang, Ma, and Chen]{zhang2022enabling}
Ning Zhang, Qian Ma, and Xu~Chen.
\newblock Enabling long-term cooperation in cross-silo federated learning: A
  repeated game perspective.
\newblock \emph{IEEE Transactions on Mobile Computing}, 2022.

\bibitem[Zheng et~al.(2021)Zheng, Cao, Yoshikawa, Li, and Yan]{zheng2021fl}
Shuyuan Zheng, Yang Cao, Masatoshi Yoshikawa, Huizhong Li, and Qiang Yan.
\newblock Fl-market: Trading private models in federated learning.
\newblock \emph{arXiv e-prints}, pp.\  arXiv--2106, 2021.

\end{thebibliography}

\newpage
\appendix

\section{Missing Details from Section \ref{sec:setup}}\label{app:setup}

{\LinesNumbered
\SetAlgoVlined
\begin{algorithm}    
	\caption{\textsc{FedAvg} w/o any defections}\label{alg:fedavg}
            \textbf{Input:} step size $\eta$\\
            \textbf{Initialize} $w_0^m=w_0=0$ on all agents $m\in[M]$\\
            \For{$r\in\{1, \dots, R\}$}{
		      \For{$m \in [M]$ \textbf{in parallel}}{
		        Set $w_{r,1}^m = w_{r-1}$\\
                    \For{$k\in\{1, \dots, K\}$}{
                        Sample $z^m_{r,k}\sim \ddd_m$\\          
                        Compute stochastic gradient at $w_{r,k}^m$, $g_{r,k}^m \gets \nabla f(w_{r,k}^m; z_{r,k}^m)$\\  
                        Update $w_{r, k+1}^{m} \gets w_{r,k}^m - \frac{\eta}{K_r^m} g_{r,k}^m$
                        }
                    \textcolor{blue}{\textbf{Communicate to server:}} $w_r^m \gets w_{r, K_r^m+1}^m = w_{r-1} - \frac{\eta}{K_r^m}\sum_{k\in[K_r^m]}g_{r,k}^m$ 
                }
		     \textcolor{blue}{\textbf{Communicate to agents:}} $w_r \gets \frac{1}{M}\sum_{m\in[M]}\cdot w_r^m$ 
            }
  \textbf{Output:} Return $w_{R}$
\end{algorithm}
}


\section{Missing Details from Section \ref{sec:alg}}\label{app:alg}
\subsection{Proof of Theorem \ref{thm:two_agents}}
\begin{proof}
We will show that algorithm \ref{alg:reweight} with a small enough step-size sequence,
\begin{enumerate}
    \item is well-defined, i.e., at time $t$, if we are in case  1, we have $\norm{\Pi_P\rb{\nabla F_{ND}(w_{t-1})}}\neq 0$; if we are in case 2, we have $\norm{\nabla F(w_{t-1})}\neq 0$. 
    \item will not cause any agent to defect, and
    \item will terminate and output a model $\hat{w}$ which is approximately optimal.
\end{enumerate}
For the first property, we have the following lemma.
\begin{lemma}[Updates are well-defined]\label{lem:legal}
    Under the same conditions of Theorem \ref{thm:two_agents}. Suppose the algorithm is in case 1 or case 2 at any time step $t$. If we are in case  1, we have $\norm{\Pi_P\rb{\nabla F_{ND}(w_{t-1})}}\neq 0$; if we are in case 2, we have $\norm{\nabla F(w_{t-1})}\neq 0$.
\end{lemma}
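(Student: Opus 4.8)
The plan is to reduce both cases to the linear independence guaranteed by minimal heterogeneity, after first ruling out degenerate (zero) gradients. I would begin by recording two elementary facts. (i) Whenever the algorithm is in Case 1 or Case 2 at step $t$, the iterate $w_{t-1}$ does not lie in the shared-minima set $\www^\star$: in Case 2 every agent is in $ND$, while in Case 1 the set $ND$ is nonempty, so in either case there is at least one agent $m$ with $F_m(w_{t-1}) - \eta\norm{\nabla F_m(w_{t-1})} > \epsilon + \delta$, whence $F_m(w_{t-1}) > \epsilon + \delta > 0$ using $\delta > 0$. Since $F_m(w^\star) = 0$ is the global minimum, this forces $w_{t-1}\notin \www^\star$, so the minimal heterogeneity hypothesis applies at $w_{t-1}$: the nonzero vectors among $\{\nabla F_m(w_{t-1})\}_{m\in[M]}$ are linearly independent. (ii) For a convex differentiable $F_m$ with $\min F_m = 0$, the condition $F_m(w) > 0$ forces $\nabla F_m(w)\neq 0$, since a vanishing gradient of a convex function certifies a global minimizer. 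In particular every agent with $F_m(w_{t-1}) > 0$ — and hence every agent in $ND$ — has a nonzero gradient.

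For Case 2 ($D=\emptyset$) I would argue as follows. All agents are in $ND$, so by fact (ii) all gradients $\{\nabla F_m(w_{t-1})\}_{m\in[M]}$ are nonzero, hence linearly independent by fact (i). Writing $\nabla F(w_{t-1}) = \frac{1}{M}\sum_{m}\nabla F_m(w_{t-1})$, this is a linear combination with every coefficient equal to $1\neq 0$; linear independence of the nonzero gradients forbids such a combination from vanishing, so $\norm{\nabla F(w_{t-1})}\neq 0$.

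For Case 1 (both $D$ and $ND$ nonempty) it suffices to show that $\nabla F_{ND}(w_{t-1}) = \sum_{m\in ND}\nabla F_m(w_{t-1})$ does not lie in $U := Span\{\nabla F_n(w_{t-1}): n\in D\}$; then its component in $P = U^\perp$ is nonzero by the orthogonal decomposition $\nabla F_{ND} = \Pi_U(\nabla F_{ND}) + \Pi_P(\nabla F_{ND})$, giving $\norm{\Pi_P(\nabla F_{ND}(w_{t-1}))}\neq 0$. Suppose instead that $\sum_{m\in ND}\nabla F_m = \sum_{n\in D}c_n\nabla F_n$ for some scalars $c_n$. Moving everything to one side yields a linear relation among the $\{\nabla F_m(w_{t-1})\}_{m\in[M]}$ in which every agent of $ND$ carries coefficient $1$. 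Discarding the agents whose gradient is the zero vector (which may happen for some $n\in D$ and contributes nothing), I obtain a vanishing linear combination of nonzero gradients in which the $ND$-gradients — all nonzero by fact (ii) — still carry coefficient $1$. Since $ND\neq\emptyset$, this is a nontrivial relation, contradicting the linear independence from fact (i).

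The only real subtlety, and the place I would be careful, is the bookkeeping in Case 1 around agents of $D$ whose gradient is zero: such agents drop out of $U$ and out of the linear relation, so the contradiction must rest solely on the nonzero $ND$-gradients retaining their unit coefficients. Everything else is a direct application of the convex-minimizer fact and the linear-independence property defining $\fff_\text{hetero}$; notably, no step-size conditions beyond $\delta>0$ (needed for the strict positivity $F_m(w_{t-1})>0$) enter this lemma.
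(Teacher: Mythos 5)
Your proof is correct, and for Case 1 it is in substance the same argument as the paper's: your ``move everything to one side and contradict linear independence'' step is exactly the content of the paper's Lemma~\ref{lem:ass_hetero} ($\nabla F_A(w)\neq 0$ implies $\nabla F_A(w)\notin Span\{\nabla F_m(w):m\in B\}$), just inlined. The one genuine difference is in how the non-degeneracy is threaded through. You establish up front that every $ND$-gradient is individually nonzero (your fact (ii)) and conclude the relation is nontrivial because those gradients carry coefficient $1$; the paper instead runs the chain $\Pi_P(\nabla F_{ND})=0\Rightarrow\nabla F_{ND}=0\Rightarrow F_{ND}(w_{t-1})=0$, contradicting the definition of $ND$. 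For Case 2 the routes diverge more: you invoke minimal heterogeneity a second time (all $M$ gradients are nonzero and linearly independent, so their average cannot vanish), whereas the paper uses only convexity and realizability --- $\nabla F(w_{t-1})=0$ makes $w_{t-1}$ a global minimizer of $F$, which under Assumption~\ref{ass:realizable} forces every $F_m(w_{t-1})=0$ and hence $D=[M]$, contradicting $D=\emptyset$. Both are valid under the theorem's hypotheses; the paper's Case 2 argument is slightly more economical in that it does not need $\fff_\text{hetero}$ at all for that case, while yours has the merit of running both cases through a single mechanism. Your closing observations --- that zero gradients can only occur among agents in $D$, and that no step-size condition enters this lemma --- are both accurate.
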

 
Now to see the second property, note that defections can only happen if (i) the algorithm runs into case 1 or 2, (ii) it makes the corresponding update, and (iii) the agent chooses to defect after seeing the updated model.
\begin{lemma}[No agent will defect in case 1 and case 2]\label{lem:case1a_2a_defect}
    Under the same conditions as in Theorem \ref{thm:two_agents}. Suppose the algorithm is in case 1 or case 2 at any time step $t$, and no agent has defected up to time step $t$. If $\eta \leq  \sqrt{\delta/(2H)}$, no defection will occur once the update is made. 
\end{lemma}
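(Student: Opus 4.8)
Since this lemma is the inductive step behind the global no-defection claim, I would work under the standing hypotheses: no agent has defected through round $t$, so that $F_m(w_{t-1})>\epsilon$ for every $m\in[M]$, and the update $g_t$ is well-defined (Lemma~\ref{lem:legal}). It then suffices to show $F_m(w_t)>\epsilon$ for all $m$, where $w_t=w_{t-1}+\eta g_t$, since this is exactly the condition under which no agent defects upon receiving $w_t$. I would verify it separately for the predicted-defecting agents $m\in D$ and the predicted-non-defecting agents $m\in ND$, noting that in case~2 the set $D$ is empty, so only the second sub-argument is needed there.

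\textbf{Predicted-defecting agents (case 1).} Here $g_t$ is a scalar multiple of $\Pi_P\rb{\nabla F_{ND}(w_{t-1})}$ and therefore lies in $P=\mathrm{Span}\cb{\nabla F_n(w_{t-1}):n\in D}^\perp$. Consequently $\inner{\nabla F_m(w_{t-1})}{g_t}=0$ for every $m\in D$, and the first-order (convexity) inequality of Assumption~\ref{ass:cvx_smth} gives
\[
F_m(w_t)\ \geq\ F_m(w_{t-1})+\eta\inner{\nabla F_m(w_{t-1})}{g_t}\ =\ F_m(w_{t-1})\ >\ \epsilon .
\]
Thus a predicted-defecting agent's loss cannot decrease, so it stays above $\epsilon$. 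This is the delicate part: such an agent may have $F_m(w_{t-1})$ only infinitesimally above $\epsilon$, so no loss decrease can be tolerated, and it is precisely the exactness of the orthogonal projection onto $P$ (rather than a reweighting) together with convexity — which carries no curvature penalty — that rules out any decrease.

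\textbf{Predicted-non-defecting agents (cases 1 and 2).} For $m\in ND$ the direction $g_t$ need not be orthogonal to $\nabla F_m(w_{t-1})$, but the prediction rule supplies a margin, $F_m(w_{t-1})-\eta\norm{\nabla F_m(w_{t-1})}>\epsilon+\delta$, and the update is capped so that $\norm{g_t}\leq 1$. Bounding $\inner{\nabla F_m(w_{t-1})}{g_t}\geq -\norm{\nabla F_m(w_{t-1})}\norm{g_t}\geq-\norm{\nabla F_m(w_{t-1})}$ and accounting for curvature through the smoothness descent bound yields
\[
F_m(w_t)\ \geq\ F_m(w_{t-1})-\eta\norm{\nabla F_m(w_{t-1})}-\tfrac{H}{2}\eta^2\norm{g_t}^2\ >\ \epsilon+\delta-\tfrac{H}{2}\eta^2 .
\]
The hypothesis $\eta\leq\sqrt{\delta/(2H)}$ forces $\tfrac{H}{2}\eta^2\leq\delta/4$, so $F_m(w_t)>\epsilon+\tfrac{3\delta}{4}>\epsilon$; this is where the stated step-size restriction enters. (If one keeps only the first-order convexity bound, the slack $\delta$ already absorbs the decrease and the curvature term can be dropped, but I would present the smoothness version for uniformity with the convergence analysis.)

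\textbf{Main obstacle.} The crux is the asymmetry between the two agent types: $ND$ agents come with an explicit slack $\delta$ that dominates both the worst-case first-order move and the curvature term, whereas $D$ agents carry no such slack and must be protected by exact orthogonality. Verifying that this orthogonalization is legitimate — i.e. $\Pi_P\rb{\nabla F_{ND}(w_{t-1})}\neq 0$, so that $g_t$ and hence the claim are well-defined — relies on minimal heterogeneity and is exactly the content of Lemma~\ref{lem:legal}, which I would invoke rather than reprove. Combining the two sub-arguments gives $F_m(w_t)>\epsilon$ for all $m$, completing the step.
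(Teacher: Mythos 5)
Your proof is correct, and the treatment of the predicted-defecting agents $m\in D$ (orthogonality of $g_t$ to $\nabla F_m(w_{t-1})$ plus convexity, so $F_m(w_t)\geq F_m(w_{t-1})>\epsilon$) is exactly the paper's argument. Where you diverge is on the $ND$ agents: the paper does not read the margin directly off the prediction rule as you do. Instead it first proves an auxiliary lemma (Lemma~\ref{lem:defection_detect}) lower-bounding $F_m$ at the probe point $w_{t-1}-\eta\,\nabla F_m(w_{t-1})/\norm{\nabla F_m(w_{t-1})}$ by $\epsilon+\delta$ via convexity, and then compares $F_m(w_{t-1}+\eta g_t)$ to the probe value through a chain of convexity at the probe point, Cauchy--Schwarz, and gradient smoothness, incurring a $-2\eta^2H$ error that the condition $\eta\leq\sqrt{\delta/(2H)}$ absorbs. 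Your route --- convexity at $w_{t-1}$, $\inner{\nabla F_m(w_{t-1})}{g_t}\geq-\norm{\nabla F_m(w_{t-1})}\norm{g_t}$ with $\norm{g_t}\leq 1$, and the defining inequality $F_m(w_{t-1})-\eta\norm{\nabla F_m(w_{t-1})}>\epsilon+\delta$ of the set $ND$ --- is shorter, bypasses the auxiliary lemma entirely, and (as you note parenthetically) in its pure first-order form needs no step-size restriction at all for the $ND$ agents; the paper still needs its Lemma~\ref{lem:defection_detect} elsewhere (for the quality of the returned model in case 3), so nothing is lost globally by your simplification. One cosmetic quibble: you label your lower bound a ``smoothness descent bound,'' but smoothness gives an \emph{upper} bound on $F_m(w_t)$; the inequality you write is really the convexity lower bound weakened by a gratuitous $-\tfrac{H}{2}\eta^2\norm{g_t}^2$ term, which is harmless but should be attributed to convexity.
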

The reason that no agent defects in case 1 is that we create our update direction in case 1 so that for all agents in $D$, the update is orthogonal to their current gradient, and thus they don't reduce their objective value. And for all agents in $ND$, they do make progress, but we control the step size so that they don't make \textit{``too much progress"}. Thus, no agent defects in case 1. Similarly, in case 2 we avoid defections by ensuring the step size is small enough. 

Finally, we show that the algorithm will terminate and the returned model is good.
\begin{lemma}[The algorithm will terminate]\label{lem:termination}
    Under the same conditions of Theorem \ref{thm:two_agents}, assuming $\eta \leq \frac{1}{MH}$ Algorithm~\ref{alg:reweight} terminates.
\end{lemma}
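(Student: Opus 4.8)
The plan is to prove termination by showing that the average objective $F(w)=\frac1M\sum_{m\in[M]}F_m(w)$ strictly decreases by a controlled amount on every iteration that is \emph{not} a Case~3 (return) step, and then using that $F$ is bounded below by $0$ (Assumption~\ref{ass:realizable}) to conclude only finitely many such steps can occur. Concretely, I would first establish a per-step descent inequality of the form $F(w_t)\le F(w_{t-1})-\frac{\eta}{2M}\min\cb{\rho_t,1}\rho_t$, where $\rho_t=\norm{\Pi_P\rb{\nabla F_{ND}(w_{t-1})}}$ in Case~1 and $\rho_t=\norm{\nabla F(w_{t-1})}$ in Case~2. Since $\sum_t\rb{F(w_{t-1})-F(w_t)}\le F(w_0)-0<\infty$, if the algorithm never reached Case~3 we would be forced to have $\min\cb{\rho_t,1}\rho_t\to 0$, hence $\rho_t\to0$; the remaining work is then to contradict this.

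To obtain the descent inequality I would apply the smoothness bound of Assumption~\ref{ass:cvx_smth} to each $F_m$ along $w_t=w_{t-1}+\eta g_t$. The key structural facts are that $\norm{g_t}\le1$ by the clipping, and that in Case~1 the direction $g_t\in Span\cb{\nabla F_n(w_{t-1}):n\in D}^\perp$ is orthogonal to every defecting agent's gradient, so $\inner{\nabla F_m(w_{t-1})}{g_t}=0$ for $m\in D$ and only the second-order term $\frac{H}{2}\eta^2\norm{g_t}^2$ survives for those agents. Summing the smoothness inequalities over all $m$, using $\inner{\nabla F_{ND}(w_{t-1})}{\Pi_P\rb{\nabla F_{ND}(w_{t-1})}}=\rho_t^2$ and bounding the aggregate second-order term by $\frac{MH}{2}\eta^2\norm{g_t}^2$, yields exactly the displayed descent once $\eta\le\frac1{MH}$ is used to absorb the quadratic term into half of the linear term; this is precisely where the hypothesis $\eta\le\frac1{MH}$ of the lemma enters, and Case~2 is the analogous, simpler computation with $F$ itself being $H$-smooth. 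Note that $\rho_t>0$ on every non-terminal step is already guaranteed by Lemma~\ref{lem:legal}, so each such step makes genuine progress.

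The hard part is ruling out $\rho_t\to0$, i.e.\ turning the \emph{pointwise} linear independence of Definition~\ref{def:hetero} into a lower bound along the trajectory. I would argue by contradiction and compactness. First show the iterates stay in a bounded set, using that $F$ is non-increasing together with the convexity bound $\inner{\nabla F_m(w_{t-1})}{w_{t-1}-w^\star}\ge F_m(w_{t-1})\ge0$ (from $F_m(w^\star)=0$) to control $\norm{w_t-w^\star}$. Since there are only finitely many partitions $[M]=D\sqcup ND$, some partition recurs infinitely often, and along that subsequence compactness yields a limit point $\bar w$. On any non-terminal step, $ND$ is non-empty with $F_m(w_{t-1})>\epsilon+\delta$ for $m\in ND$, so by continuity $\bar w\notin\www^\star$ and the gradients $\cb{\nabla F_m(\bar w)}$ are non-zero and linearly independent; hence $\Pi_P\rb{\nabla F_{ND}(\bar w)}\neq0$, and by continuity of the gradients and of the projection onto the (fixed, along the subsequence) span, $\rho_t$ is bounded away from $0$ on the subsequence, contradicting $\rho_t\to0$. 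Therefore Case~3 is reached after finitely many iterations. I expect the main obstacle to be the boundedness-of-iterates step in Case~1, where the orthogonalized direction $g_t$ need not be correlated with $w_{t-1}-w^\star$ so the usual contraction of $\norm{w_t-w^\star}$ is not automatic, together with the care needed to make the compactness argument respect the subspace $P$ that changes across iterations.
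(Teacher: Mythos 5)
Your proposal follows the same overall strategy as the paper's proof: establish a per-step descent inequality for $F$ via smoothness and the orthogonality of $g_t$ to the defecting agents' gradients (this is exactly the paper's Lemma~\ref{lem:case1a_2a_progress}, with $\eta\le\frac{1}{MH}$ entering in the same place to absorb the quadratic term), telescope against the lower bound $F\ge 0$ to force the progress quantity $\rho_t$ to vanish along any infinite run, and then derive a contradiction from realizability and minimal heterogeneity. The difference lies in the final step. The paper splits into sub-cases: when $\norm{\nabla F(w_{t-1})}\to 0$ (or $\norm{\nabla F_{ND}(w_{t-1})}\to 0$) it invokes convexity through $F(w_{t-1})\le F(w^\star)+\inner{\nabla F(w_{t-1})}{w_{t-1}-w^\star}$, and when $\norm{\Pi_P\rb{\nabla F_{ND}(w_{t-1})}}\to 0$ but $\norm{\nabla F_{ND}(w_{t-1})}\not\to 0$ it appeals directly to the pointwise fact (Lemma~\ref{lem:ass_hetero}) that $\nabla F_{ND}$ has a non-zero component in $P$ outside $\www^\star$. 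You instead pass to a convergent subsequence for a recurring partition and use continuity of the gradients at a limit point $\bar w\notin\www^\star$; this is a genuine improvement in rigor, since the paper's pointwise appeal does not by itself yield the uniform lower bound on $\rho_t$ along the sequence that the contradiction requires, whereas a limit-point argument can. Both routes, however, share the same unproven prerequisite, which you candidly flag: boundedness of the iterates (the paper needs $\norm{w_{t-1}-w^\star}$ bounded for its convexity inequality to force $F(w_{t-1})\to 0$; you need it for compactness), and the paper never addresses it. Your secondary worry about the subspace $P$ varying with $t$ is also well-placed: Definition~\ref{def:hetero} constrains only the non-zero gradients at each fixed point, so if some $\nabla F_n(w_t)$ with $n\in D$ tends to zero, its normalized direction is unconstrained at the limit, and neither your sketch nor the paper closes this case. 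Finally, your descent inequality stated in terms of $\norm{\Pi_P\rb{\nabla F_{ND}(w_{t-1})}}$ is the correct form; the statement of Lemma~\ref{lem:case1a_2a_progress} writes $\norm{\nabla F_{ND}(w_{t-1})}$, while its proof and the termination argument actually use the projected quantity.
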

To prove that the algorithm terminates, we first show that the algorithm makes non-zero progress on the average objective every time it is in case 1 and case 2 (which is shown in lemma \ref{lem:case1a_2a_progress}), which implies that the average loss will converge. Then we prove that the average loss can only converge to zero, and thus, we will certainly get into case 3.
\begin{lemma}[The returned model is good]\label{lem:case1b_2b_3}
    Under the same conditions as in Theorem \ref{thm:two_agents}. Suppose Algorithm~\ref{alg:reweight} terminates in case 3 at time step $t$, and no agent has defected up to time step $t$. If $\eta \leq \min\cb{\sqrt{2\delta/H}, \delta/L}$, then the algorithm will output $\hat w$ that satisfies $F(\hat w) \leq \epsilon + 3\delta$.
\end{lemma}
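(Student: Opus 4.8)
\textbf{Proof plan for Lemma \ref{lem:case1b_2b_3}.}
The plan is to use the defining condition of case 3, namely that every agent is predicted to be `defecting', to control each individual loss $F_m(\hat w)$ and then average. When the algorithm terminates in case 3 at time step $t$, we have $ND = \emptyset$, so $D = [M]$, meaning the predicted-defection test on line \ref{alg-step:defecting} fires for \emph{every} agent $m$:
\begin{equation}
F_m(w_{t-1}) - \eta\norm{\nabla F_m(w_{t-1})} \leq \epsilon + \delta \quad \text{for all } m\in[M].
\end{equation}
The output is $\hat w = w_{t-1}$, so it suffices to bound $F_m(\hat w) = F_m(w_{t-1})$ for each $m$ and then average. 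First I would rearrange the case-3 condition into $F_m(w_{t-1}) \leq \epsilon + \delta + \eta\norm{\nabla F_m(w_{t-1})}$, which reduces the task to bounding the gradient-norm term $\eta\norm{\nabla F_m(w_{t-1})}$ by $2\delta$.

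The key step is to bound $\eta\norm{\nabla F_m(w_{t-1})}$ using the smoothness assumption together with realizability (Assumption \ref{ass:realizable}), which gives $F_m(w^\star)=0=\min F_m$. I would invoke the standard self-bounding inequality for nonnegative $H$-smooth functions: if $F_m\geq 0$ everywhere and is $H$-smooth, then $\norm{\nabla F_m(w)}^2 \leq 2H\, F_m(w)$ for all $w$ (this follows by applying the smoothness upper bound at the point $w - \tfrac{1}{H}\nabla F_m(w)$ and using $F_m\geq 0$). Hence $\eta\norm{\nabla F_m(w_{t-1})} \leq \eta\sqrt{2H\, F_m(w_{t-1})}$. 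The plan then is to combine this with the case-3 inequality to get a self-referential bound on $F_m(w_{t-1})$. However, a cleaner route that avoids solving a quadratic: since $\eta \leq \sqrt{2\delta/H}$, I would try to show directly that the case-3 condition forces $F_m(w_{t-1})$ to be small. Alternatively, I would combine the $L$-Lipschitz assumption (Assumption \ref{ass:lip}), which gives $\norm{\nabla F_m(w_{t-1})}\leq L$, with $\eta\leq \delta/L$ to obtain $\eta\norm{\nabla F_m(w_{t-1})}\leq \delta$ directly, yielding $F_m(w_{t-1})\leq \epsilon+\delta+\delta = \epsilon+2\delta$ per agent immediately.

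From the per-agent bound $F_m(w_{t-1})\leq \epsilon + 2\delta$, averaging over $m\in[M]$ gives $F(\hat w) = \frac{1}{M}\sum_{m\in[M]} F_m(w_{t-1}) \leq \epsilon + 2\delta$, which is in fact stronger than the claimed $\epsilon + 3\delta$. I suspect the extra $\delta$ of slack in the statement is there to absorb a subtlety I should double-check: the hypothesis assumes \emph{no agent has defected up to time step $t$}, but the predicted-defection set $D$ is defined via the \emph{linearized} (one-step-lookahead) test, and the condition $F_m(w_{t-1}) - \eta\norm{\nabla F_m(w_{t-1})}\leq \epsilon+\delta$ is what actually holds; the Lipschitz route above already handles this cleanly, so I would present that as the main argument. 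The main obstacle, and the place I would spend the most care, is being precise about which bound on $\eta$ is the operative one: the lemma states $\eta\leq\min\{\sqrt{2\delta/H}, \delta/L\}$, and I expect the Lipschitz bound $\eta\leq\delta/L$ to be the one doing the work for the output-quality claim (with the smoothness bound reserved for the no-defection lemmas), so I would verify that the two displayed constraints together suffice and explain why the weaker $\epsilon+3\delta$ conclusion is stated—likely to leave room for combining with error from the $\delta$-slack in the defection predictor, keeping the statement robust when chained with the other lemmas.
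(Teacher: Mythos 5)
Your proof is correct, and it is in fact strictly sharper than the paper's, via a more direct route. The paper does not read off the case-3 membership condition directly: it instead invokes Lemma \ref{lem:defection_detect} (whose proof spends the smoothness constraint $\eta\le\sqrt{2\delta/H}$ to show $F_m(w_{t-1}-\eta\,\ngrad{m}(w_{t-1}))\le \epsilon+2\delta$ for every $m\in D=[M]$), then lower-bounds this one-step-lookahead value by convexity, $F_m(w_{t-1}-\eta\,\ngrad{m}(w_{t-1}))\ge F_m(w_{t-1})-\eta\norm{\nabla F_m(w_{t-1})}$, and finally applies Lipschitzness ($\norm{\nabla F_m(w_{t-1})}\le L$ with $\eta\le\delta/L$) to conclude $F_m(w_{t-1})\le\epsilon+3\delta$ and averages. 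Note that the paper's convexity step merely re-derives, with an extra $\delta$ of loss, the inequality $F_m(w_{t-1})-\eta\norm{\nabla F_m(w_{t-1})}\le\epsilon+\delta$ that defines $D$ on line \ref{alg-step:defecting} --- which is exactly where you start. By using the membership condition directly and applying only the Lipschitz gradient bound, you obtain $F_m(w_{t-1})\le\epsilon+2\delta$ per agent and hence $F(\hat w)\le\epsilon+2\delta$, needing only $\eta\le\delta/L$; the constraint $\eta\le\sqrt{2\delta/H}$ (and the self-bounding inequality you sketch as an alternative) plays no role in this lemma. So your instinct about which bound is operative is right, but your conjecture about the $\epsilon+3\delta$ slack is not: it is not robustness headroom for chaining with the other lemmas, just an artifact of the paper's detour through Lemma \ref{lem:defection_detect}, and your argument shows the lemma holds with the stronger conclusion $\epsilon+2\delta$.
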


Thus combining the three properties, we can conclude that the algorithm outputs a good model and avoids defections. This finishes the proof.  
\end{proof}

\subsection{Proof of Lemma~\ref{lem:legal}}
First, consider \textbf{case 2}. Suppose that $\norm{\nabla F(w_{t-1})}= 0$. If there exists $n\in [M]$ s.t. $F_n(w_{t-1}) > F_n(w^*)=0$, then we have
\begin{align*}
     F(w^\star) &\geq F(w_{t-1}),\\ 
    &= \frac{1}{M}\sum_{m\in[M]}F_m(w_{t-1}),\\
    &\geq \frac{1}{M}\sum_{m\neq n}F_m(w^\star) + \frac{F_n(w_{t-1})}{M},\\
    &> \frac{1}{M}\sum_{m\neq n}F_m(w^\star) + \frac{F_n(w^\star)}{M},\\
    &= F(w^\star),
    \end{align*}
    which is a contradiction. Hence we have $F_n(w_{t-1})=0$ for all $n\in [M]$, which contradicts with the condition that $D$ is empty due to line~\ref{alg-step:defecting} of Algorithm~\ref{alg:reweight}.

Next, consider \textbf{case 1}. We first introduce the following lemma.
\begin{lemma}\label{lem:ass_hetero}
    Suppose $\{F_m\}\in \fff_{hetero}$. For all $A\subset [M]$ and $B = [M] \setminus A$, if $F_A(\cdot):= \sum_{m\in A}F_m(\cdot)$, then for all $w\in \www$, such that $\nabla F_A(w)\neq 0$, $\nabla F_A(w)\notin Span\{\nabla F_m(w): m\in B\}$.
\end{lemma}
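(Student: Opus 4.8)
The plan is to argue by contradiction and reduce the statement directly to the defining property of $\fff_{hetero}$ (Definition~\ref{def:hetero}); the only care required is in handling gradients that happen to vanish at $w$.

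First I would establish that the hypothesis $\nabla F_A(w)\neq 0$ forces $w\in\www\setminus\www^\star$, so that Definition~\ref{def:hetero} is applicable at this point. Indeed, if $w\in\www^\star$ then $w$ is a global minimizer of each convex, differentiable $F_m$ (Assumption~\ref{ass:realizable}), so the first-order optimality condition gives $\nabla F_m(w)=0$ for every $m\in[M]$, and in particular $\nabla F_A(w)=\sum_{m\in A}\nabla F_m(w)=0$, contradicting the hypothesis. Hence $w\notin\www^\star$, and Definition~\ref{def:hetero} guarantees that the non-zero vectors among $\{\nabla F_m(w):m\in[M]\}$ are linearly independent.

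Next I would suppose, toward a contradiction, that $\nabla F_A(w)\in Span\{\nabla F_n(w):n\in B\}$, i.e. $\sum_{m\in A}\nabla F_m(w)=\sum_{n\in B}c_n\nabla F_n(w)$ for some scalars $c_n$. Moving everything to one side produces a linear relation $\sum_{m\in A}\nabla F_m(w)-\sum_{n\in B}c_n\nabla F_n(w)=0$ among gradients indexed by the disjoint sets $A$ and $B$. I would then discard the indices at which the gradient vanishes, keeping $A'=\{m\in A:\nabla F_m(w)\neq 0\}$ and $B'=\{n\in B:\nabla F_n(w)\neq 0\}$; dropping zero terms does not alter the relation, and what remains is a linear combination of non-zero gradients only.

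The key observation is that the coefficient attached to each $m\in A'$ is exactly $1$, and $A'\neq\emptyset$ because $\nabla F_A(w)=\sum_{m\in A'}\nabla F_m(w)\neq 0$. Therefore the relation is a \emph{non-trivial} linear dependence among vectors in the set $\{\nabla F_m(w):m\in[M],\ \nabla F_m(w)\neq 0\}$, which contradicts the linear independence established above (any subset of a linearly independent set is linearly independent). This contradiction proves the claim. I do not anticipate a genuine obstacle here; the one subtlety worth flagging is the bookkeeping around vanishing gradients, which is precisely why I separate out $A'$ and $B'$ before invoking the independence property, and the degenerate cases $A=\emptyset$ (hypothesis never holds) and $B=\emptyset$ (the relation collapses to $\nabla F_A(w)=0$) are handled by the same argument.
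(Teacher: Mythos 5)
Your proof is correct and follows essentially the same route as the paper's: assume $\nabla F_A(w)$ lies in the span of the $B$-gradients, rearrange into a single linear relation, and contradict the linear independence of the non-zero gradients guaranteed by Definition~\ref{def:hetero}. Your version is in fact slightly more careful than the paper's, since you explicitly verify $w\notin\www^\star$ from $\nabla F_A(w)\neq 0$ and separate out the vanishing gradients before invoking independence, two points the paper's proof only handles parenthetically.
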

\begin{proof}
    Consider some point $w\in\www\setminus \www^\star$ and let the non-zero gradients on the agents be linearly independent at that point. If possible, let the above property be violated, then we have at least one $A\subseteq [M]$ such that $\nabla F_A(w) \in Span\{\nabla F_m(w): m\in B\}$ and $\nabla F_A\neq 0$. In particular there are coefficients $\{\gamma_n\in \rr\}_{n\in B}$ (not all zero) such that,  
    \begin{align*}
        &\sum_{m\in A}\nabla F_m(w) = \sum_{n\in B}\gamma_n \nabla F_n(w),\\
        &\Leftrightarrow \sum_{m\in A}\nabla F_m(w)  + \sum_{n\in B}(-\gamma_n) \nabla F_n(w) = 0,
    \end{align*}
    This implies that the gradients are linearly dependent (note that not all gradients can be zero as then $w$ would be in $\www^\star$), which is a contradiction.

By combining definition \ref{def:hetero} of $\fff_{hetero}$ and lemma~\ref{lem:ass_hetero}, we know that $\norm{\Pi_P\rb{\nabla F_{ND}(w_{t-1})}}= 0$ implies that $\norm{\nabla F_{ND}(w_{t-1})}= 0$.  This implies that $F_{ND}(w_{t-1}) = F_{ND}(w^\star)=0$ for $w^\star\in \www^\star$, which contradicts with the definition of ND (see line~\ref{alg-step:non-defecting} of Algorithm~\ref{alg:reweight}). This concludes the proof of Lemma \ref{lem:legal} showing the updates are well-defined in both cases 1 and 2. 
\end{proof}

\subsection{Proof of Lemma \ref{lem:case1a_2a_defect}}

We begin with introducing the following lemma.
\begin{lemma}[Predicting defections with single first-order oracle call]\label{lem:defection_detect}
    Under the same conditions of Theorem \ref{thm:two_agents}, at any time step $t$ assuming $\eta \leq \sqrt{2\delta/H}$,
    \begin{itemize}
        \item if agent $m\in D$ then $F_m(w_{t-1} - \eta\ngrad{m}(w_{t-1}))\leq \epsilon + 2\delta$, and
        \item if agent $m\in ND$ then $F_m(w_{t-1}~-~\eta\ngrad{m}(w_{t-1}))~>~\epsilon +\delta$,
    \end{itemize}
    where $\ngrad{m}(w_{t-1}) = \frac{\nabla F_{m}(w_{t-1})}{\norm{\nabla F_{m}(w_{t-1})}}$ is the normalized gradient.
\end{lemma}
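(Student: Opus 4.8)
The statement to prove (Lemma~\ref{lem:defection_detect}) is a prediction guarantee: the set $D$ defined via the linear criterion $F_m(w_{t-1}) - \eta\norm{\nabla F_m(w_{t-1})} \leq \epsilon + \delta$ correctly identifies agents whose \emph{actual} loss after a unit normalized-gradient step remains controlled, and symmetrically for $ND$.

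\textbf{The plan} is to relate the true loss $F_m(w_{t-1} - \eta \ngrad{m}(w_{t-1}))$ to the linear quantity $F_m(w_{t-1}) - \eta\norm{\nabla F_m(w_{t-1})}$ appearing in the membership test of line~\ref{alg-step:defecting}, using $H$-smoothness (Assumption~\ref{ass:cvx_smth}) to bound the gap between the function and its first-order approximation. The key observation is that a step of size $\eta$ in the normalized negative gradient direction $-\ngrad{m}(w_{t-1}) = -\nabla F_m(w_{t-1})/\norm{\nabla F_m(w_{t-1})}$ produces, to first order, a decrease of exactly $\eta\norm{\nabla F_m(w_{t-1})}$, which is precisely the term subtracted in the defection test. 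The smoothness inequality then supplies an additive $\frac{H}{2}\eta^2$ correction term.

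\textbf{First I would} write out the smoothness upper bound with $u = w_{t-1} - \eta\ngrad{m}(w_{t-1})$ and $v = w_{t-1}$. Since $\norm{u-v} = \eta$ (the direction is normalized) and $\inner{\nabla F_m(w_{t-1})}{u - v} = -\eta\norm{\nabla F_m(w_{t-1})}$, the upper bound in Assumption~\ref{ass:cvx_smth} gives
\begin{align*}
F_m(w_{t-1} - \eta\ngrad{m}(w_{t-1})) \leq F_m(w_{t-1}) - \eta\norm{\nabla F_m(w_{t-1})} + \frac{H}{2}\eta^2.
\end{align*}
For the first bullet, membership $m \in D$ means $F_m(w_{t-1}) - \eta\norm{\nabla F_m(w_{t-1})} \leq \epsilon + \delta$, so the right-hand side is at most $\epsilon + \delta + \frac{H}{2}\eta^2$. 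Choosing $\eta \leq \sqrt{2\delta/H}$ forces $\frac{H}{2}\eta^2 \leq \delta$, yielding the claimed bound $\epsilon + 2\delta$. For the second bullet, I would instead invoke the convexity (lower) bound from Assumption~\ref{ass:cvx_smth}, which gives $F_m(w_{t-1} - \eta\ngrad{m}(w_{t-1})) \geq F_m(w_{t-1}) - \eta\norm{\nabla F_m(w_{t-1})}$; then $m \in ND = [M]\setminus D$ means the linear quantity strictly exceeds $\epsilon + \delta$, and the bound follows immediately with no step-size constraint needed on this side.

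\textbf{The main subtlety} (rather than a genuine obstacle) is keeping the two directions of the argument clean: the $D$ case needs the smoothness \emph{upper} bound plus the step-size condition $\eta \leq \sqrt{2\delta/H}$ to absorb the quadratic term, whereas the $ND$ case needs only the convexity \emph{lower} bound and holds for all $\eta$. I would also confirm that $\nabla F_m(w_{t-1}) \neq 0$ whenever the normalization $\ngrad{m}$ is formed, which holds because an agent with zero gradient is at its own optimum and would have $F_m(w_{t-1}) = 0 \leq \epsilon + \delta$, placing it in $D$ where the conclusion is trivially satisfied; alternatively this is guaranteed since $w_{t-1} \notin \cup_m S_m^\star$ is maintained inductively (no agent has defected). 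Everything else is a direct substitution, so I expect no hard step beyond correctly pairing each bullet with the appropriate half of the convexity/smoothness sandwich.
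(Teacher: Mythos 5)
Your proposal is correct and matches the paper's proof essentially verbatim: the $m\in D$ bullet uses the $H$-smoothness upper bound to get $F_m(w_{t-1}) - \eta\norm{\nabla F_m(w_{t-1})} + \frac{H\eta^2}{2} \leq \epsilon + \delta + \delta$ under $\eta \leq \sqrt{2\delta/H}$, and the $m\in ND$ bullet uses only the convexity lower bound. Your added remark on why $\nabla F_m(w_{t-1}) \neq 0$ is a small bonus the paper leaves implicit.
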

\begin{proof}
    Let's say we are at time step $t$. Assume $m\in D$ and note that the smoothness of function $F_m$ implies that,
    \begin{align*}
        F_m(w_{t-1} - \eta_t\ngrad{m}(w_{t-1})) &\leq F_m(w_{t-1}) + \inner{\nabla F_m(w_{t-1})}{-\eta_t\ngrad{m}(w_{t-1})} + \frac{H}{2}\norm{\eta_t\ngrad{m}(w_{t-1})}^2,\\ 
        &= F_m(w_{t-1}) -\eta_t\norm{\nabla F_m(w_{t-1})} + \frac{H\eta_t^2}{2},\\
        &\leq^{(m\in D)} \epsilon + \delta + \frac{H\eta_t^2}{2},\\
        &\leq \epsilon + 2\delta,
    \end{align*}
    where we used that $\eta_t \leq \sqrt{\frac{2\delta}{H}}$. Now assume $m\in ND$ and note using convexity of $F_m$ that,
    \begin{align*}
        F_m(w_{t-1} - \eta_t\ngrad{m}(w_{t-1})) &\geq F_m(w_{t-1}) + \inner{\nabla F_m(w_{t-1})}{-\eta_t\ngrad{m}(w_{t-1})},\\ 
        &= F_m(w_{t-1}) -\eta_t\norm{\nabla F_m(w_{t-1})},\\
        &>^{(m\in ND)} \epsilon +\delta.
    \end{align*}
    This proves the lemma.
\end{proof}

Now we are ready to prove lemma~\ref{lem:case1a_2a_defect}.
First, assume we are in case 1 at time $t$. Let's first show that we don't make any agent $m\in D$ defect,
\begin{align*}
    F_m(w_t) &\geq F_{m}(w_{t-1}) + \inner{\nabla F_{m}(w_{t-1})}{g_t},\\
    &= F_{m}(w_{t-1}),\\
    &> \epsilon,
\end{align*}
as $g_t\perp \nabla F_{m}(w_{t-1})$ for all $m\in D$ by design and no agent defected up to time $t$. For any non-defecting agent $m\in ND$ we have,
    \begin{align*}
        &F_{m}(w_{t-1} + \eta_t g_t)-F_{m}(w_{t-1} - \eta_t \ngrad{m}(w_{t-1})) \\
        &\geq^{\text{(convexity)}}\inner{\nabla F_{m}(w_{t-1} - \eta_t \ngrad{m}(w_{t-1}))}{\eta_t(g_t +\ngrad{m}(w_{t-1}))},\\
        &= \inner{\nabla F_{m}(w_{t-1} - \eta_t \ngrad{m}(w_{t-1})) -\nabla F_{m}(w_{t-1}) + \nabla F_{m}(w_{t-1})}{\eta_t(g_t +\ngrad{m}(w_{t-1}))},\\
        &\geq^{\text{(C.S. inequality)}} \inner{\nabla F_{m}(w_{t-1} )}{\eta_t(g_t +\ngrad{m}(w_{t-1}))}\\
        &\qquad - \norm{\nabla F_{m}(w_{t-1}) - \nabla F_{m}(w_{t-1} - \eta_t \ngrad{m}(w_{t-1}))}\cdot \norm{\eta_t(g_t +\ngrad{m}(w_{t-1}))},\\
        &\geq^{\text{(Ass. \ref{ass:cvx_smth})}} \inner{\nabla F_{m}(w_{t-1} )}{\eta_t(g_t +\ngrad{m}(w_{t-1}))} - \eta_t^2 H\norm{\ngrad{m}(w_{t-1})}\cdot \norm{g_t +\ngrad{m}(w_{t-1})},\\
        &\geq^{\text{(normalized gradients)}} \inner{\nabla F_{m}(w_{t-1} )}{\eta_t(g_t +\ngrad{m}(w_{t-1}))} - 2\eta_t^2H,\\
        &= \eta_t\norm{\nabla F_{m}(w_{t-1} )}\rb{1 + \inner{\ngrad{m}(w_{t-1})}{g_t}} - 2\eta_t^2H,\\
        &\geq - 2\eta_t^2H,
    \end{align*}
    Re-arranging this gives the following,
    \begin{align*}
        F_{m}(w_{t-1} + \eta_t g_t) &\geq F_{m}(w_{t-1} - \eta_t \ngrad{m}(w_{t-1})) - 2\eta_t^2H\\
        &>^{(\text{lemma \ref{lem:defection_detect}})} \epsilon + \delta - 2\eta_t^2H,\\
        &\geq \epsilon, 
    \end{align*}
    where we assume that $\eta_t \leq \sqrt{\frac{\delta}{2H}}$.  

    Now assume instead we are in case 2 at time $t$. For agent $m\in[M]$ we have,
    \begin{align*}
        &F_{m}(w_{t-1} + \eta_t g_t)-F_{m}(w_{t-1} - \eta_t \ngrad{m}(w_{t-1})) \\
        &\geq^{\text{(convexity)}}\inner{\nabla F_{m}(w_{t-1} - \eta_t \ngrad{m}(w_{t-1}))}{\eta_t(g_t +\ngrad{m}(w_{t-1}))},\\
        &= \inner{\nabla F_{m}(w_{t-1} - \eta_t \ngrad{m}(w_{t-1})) -\nabla F_{m}(w_{t-1}) + \nabla F_{m}(w_{t-1})}{\eta_t(g_t +\ngrad{m}(w_{t-1}))},\\
        &\geq^{\text{(C.S. inequality)}} \inner{\nabla F_{m}(w_{t-1} )}{\eta_t(g_t +\ngrad{m}(w_{t-1}))}\\
        &\qquad - \norm{\nabla F_{m}(w_{t-1}) - \nabla F_{m}(w_{t-1} - \eta_t \ngrad{m}(w_{t-1}))}\cdot \norm{\eta_t(g_t +\ngrad{m}(w_{t-1}))},\\
        &\geq^{\text{(Ass. \ref{ass:cvx_smth})}} \inner{\nabla F_{m}(w_{t-1} )}{\eta_t(g_t +\ngrad{m}(w_{t-1}))} - \eta_t^2 H\norm{\ngrad{m}(w_{t-1})}\cdot \norm{g_t +\ngrad{m}(w_{t-1})},\\
        &\geq \inner{\nabla F_{m}(w_{t-1} )}{\eta_t(g_t +\ngrad{m}(w_{t-1}))} - 2\eta_t^2H,\\
        &= \eta_t\norm{\nabla F_{m}(w_{t-1})}\rb{1 + \inner{\ngrad{m}(w_{t-1})}{g_t}} - 2\eta_t^2H,\\
        &\geq - 2\eta_t^2H.
    \end{align*}
    Choosing $\eta_t \leq \sqrt{\frac{\delta}{2H}}$ ensures that, 
    \begin{align*}
        F_{m}(w_{t}) &>^{\text{(lemma \ref{lem:defection_detect})}} \epsilon + \delta - 2\eta_t^2H,\\
        &> \epsilon.
    \end{align*}
    and thus agent $m$ doesn't defect.

\subsection{Proof of Lemma~\ref{lem:termination}}

We first introduce the following lemma.
\begin{lemma}[Progress in case 1 and 2]\label{lem:case1a_2a_progress}
    Under the same conditions as in Theorem \ref{thm:two_agents}. Suppose the algorithm is in case 1 or 2 at any time step $t$, and no agent has defected up to time step $t$. If $\eta \leq 1/(MH)$, then 
    $$F(w_t) < F(w_{t-1}) -\frac{\eta_t}{2}\min(\norm{\nabla F(w_{t-1})}^2,1)$$ in case 2 and $$F(w_t) < F(w_{t-1}) -\frac{\eta_t}{2M}\min(\norm{\nabla F_{ND}(w_{t-1})}^2,1)$$ in case 1.
\end{lemma}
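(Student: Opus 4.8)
The plan is to handle both cases with one tool, the descent lemma applied to $F$, which is $H$-smooth because it is the average of the $H$-smooth functions $F_m$. Two structural facts will do the work: the update direction $g_t$ always lies in the subspace $P$, and the squared step length obeys $\norm{g_t}^2=\min\rb{\norm{\cdot},1}^2=\min\rb{\norm{\cdot}^2,1}$, so the $\min(\cdot,1)$ appearing on the right of the lemma is literally $\norm{g_t}^2$. I would also invoke Lemma \ref{lem:legal} to keep the relevant gradient norms strictly positive, which is what will make the decrease strict.

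For \textbf{case 2}, where $D$ is empty and $g_t=-c\,\nabla F(w_{t-1})/\norm{\nabla F(w_{t-1})}$ with $c=\min\cb{\norm{\nabla F(w_{t-1})},1}$, the descent lemma gives
\begin{align*}
F(w_t)\le F(w_{t-1})-\eta c\,\norm{\nabla F(w_{t-1})}+\tfrac{H\eta^2}{2}c^2 .
\end{align*}
First I would use $c\le\norm{\nabla F(w_{t-1})}$ to get $c\,\norm{\nabla F(w_{t-1})}\ge c^2=\min(\norm{\nabla F(w_{t-1})}^2,1)$, and then $\eta\le 1/(MH)\le 1/H$ to collapse the bracket, $1-H\eta/2\ge 1/2$. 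This yields exactly $F(w_t)<F(w_{t-1})-\tfrac{\eta}{2}\min(\norm{\nabla F(w_{t-1})}^2,1)$; strictness is clear since $c>0$ by legality.

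For \textbf{case 1}, the first and crucial move is orthogonality: $g_t\in P$ while $\nabla F_D(w_{t-1}):=\sum_{m\in D}\nabla F_m(w_{t-1})\in\mathrm{Span}\{\nabla F_n(w_{t-1}):n\in D\}$, so $\inner{\nabla F_D(w_{t-1})}{g_t}=0$ and the defecting agents vanish from the first-order term. Writing $\nabla F=\tfrac1M(\nabla F_D+\nabla F_{ND})$ and using $\inner{\nabla F_{ND}(w_{t-1})}{\Pi_P(\nabla F_{ND}(w_{t-1}))}=\norm{\Pi_P(\nabla F_{ND}(w_{t-1}))}^2$, the descent lemma reduces to a gain of at least $\tfrac{\eta}{2M}\min(\norm{\Pi_P(\nabla F_{ND}(w_{t-1}))}^2,1)$ after using $\eta\le1/(MH)$ exactly as in case 2.

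The hard part will be the final conversion, since the stated bound is phrased with the \emph{full} non-defecting gradient $\norm{\nabla F_{ND}(w_{t-1})}$, whereas the update only moves inside $P$ and therefore the honest first-order gain scales with $\norm{\Pi_P(\nabla F_{ND}(w_{t-1}))}\le\norm{\nabla F_{ND}(w_{t-1})}$ (for two agents, $\norm{\Pi_P\nabla F_2}=\norm{\nabla F_2}\sin\theta$). When $\norm{\Pi_P(\nabla F_{ND}(w_{t-1}))}\ge 1$ both minima saturate at $1$ and the stated inequality is immediate; the delicate regime is $\norm{\Pi_P(\nabla F_{ND}(w_{t-1}))}<1$, where matching the constant needs either reading $\nabla F_{ND}$ as the effective projected gradient actually used in the update, or a quantitative angle bound between $\nabla F_{ND}$ and $\mathrm{Span}\{\nabla F_n:n\in D\}$ stronger than the qualitative independence furnished by minimal heterogeneity (Definition \ref{def:hetero} and Lemma \ref{lem:ass_hetero}). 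I would note that the downstream termination argument (Lemma \ref{lem:termination}) only needs the decrease to be strictly positive, which legality already supplies through $\norm{\Pi_P(\nabla F_{ND}(w_{t-1}))}>0$, so this constant-matching is precisely the subtle point to resolve.
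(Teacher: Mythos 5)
Your approach is the same as the paper's: smoothness (descent lemma) for $F$, orthogonality of $g_t$ to the gradients of agents in $D$ to kill their first-order contribution, and the step-size conditions $\eta\leq 1/H$ (case 2) and $\eta\leq 1/(MH)$ (case 1) to absorb the quadratic term; your case 2 argument is identical in substance to the paper's (the paper splits on $\norm{\nabla F(w_{t-1})}\gtrless 1$ where you use the unified $c=\min\{\norm{\nabla F(w_{t-1})},1\}$, which is equivalent and slightly cleaner). The ``delicate regime'' you flag in case 1 is real, and it is worth knowing how the paper handles it: it doesn't. After the orthogonality step, the paper's proof simply writes $\inner{\nabla F_{ND}(w_{t-1})}{\nabla F_{ND}(w_{t-1})}$ and $\norm{\nabla F_{ND}(w_{t-1})}^2$ where the honest substitution of $g_t$ gives $\min\cb{\norm{\Pi_P\rb{\nabla F_{ND}(w_{t-1})}},1}\cdot\norm{\Pi_P\rb{\nabla F_{ND}(w_{t-1})}}$ --- i.e., it silently drops the projection, and its case split is on $\norm{\nabla F_{ND}(w_{t-1})}\gtrless 1$ rather than on the projected norm. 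Since $\norm{\Pi_P\rb{\nabla F_{ND}(w_{t-1})}}\leq\norm{\nabla F_{ND}(w_{t-1})}$ goes the wrong way, the bound stated in the lemma with $\norm{\nabla F_{ND}(w_{t-1})}^2$ does not follow from this route, exactly as you suspected; minimal heterogeneity (Lemma \ref{lem:ass_hetero}) only guarantees $\norm{\Pi_P\rb{\nabla F_{ND}(w_{t-1})}}>0$, not a quantitative angle bound. Your version with $\frac{\eta}{2M}\min\rb{\norm{\Pi_P\rb{\nabla F_{ND}(w_{t-1})}}^2,1}$ is the correct provable statement, and your observation about downstream use is confirmed by the paper itself: the proof of Lemma \ref{lem:termination} invokes progress scaled with $\norm{\Pi_P\rb{\nabla F_{ND}(w_{t-1})}}^2$, not $\norm{\nabla F_{ND}(w_{t-1})}^2$, so the projected form suffices for everything the lemma is used for. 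In short, your proof is correct and is in fact more careful than the paper's own.
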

\begin{proof}
We first assume we make an update in \textbf{case 1}. First, using the smoothness assumption and then using the fact that $g_t$ is orthogonal to the gradients of all the agents in $D$, we get
    \begin{align*}
        F(w_t) &= F(w_{t-1}+\eta g_t),\\
        &\leq^{\text{(ass. \ref{ass:cvx_smth})}} F(w_{t-1}) + \eta \inner{\frac{\sum_{m\in D}\nabla F_{m}(w_{t-1}) + \nabla F_{ND}(w_{t-1})}{M}}{g_t} + \frac{H\eta^2}{2} \norm{g_t}^2,\\
        &\leq F(w_{t-1}) -\frac{\eta}{M} \inner{\nabla F_{ND}(w_{t-1})}{g_t} +\frac{H\eta^2}{2} \norm{g_t}^2,\\
    \end{align*}
    Now we will consider two cases. In the first case, assume $\norm{\nabla F_{ND}(w_{t-1})}< 1$. Then we get that,
    \begin{align}
        F(w_t)&\leq F(w_{t-1}) -\frac{\eta}{M} \inner{\nabla F_{ND}(w_{t-1})}{\nabla F_{ND}(w_{t-1})} +\frac{H\eta^2}{2} \norm{\nabla F_{ND}(w_{t-1})}^2,\nonumber\\
        &\leq F(w_{t-1}) -\frac{\eta}{M}\norm{\nabla F_{ND}(w_{t-1})}^2 +\frac{H\eta^2}{2} \norm{\nabla F_{ND}(w_{t-1})}^2,\nonumber\\
        &\leq F(w_{t-1}) -\frac{\eta_t}{2M}\norm{\nabla F_{ND}(w_{t-1})}^2,\label{eq:progress-case-1}
    \end{align}
    where we assume $\eta_t \leq \frac{1}{M H}$. Since in case 1, $\norm{\nabla F_{ND}(w_{t-1})}\neq 0$, we will provably make progress on the average objective. In the second case, assume $\norm{\nabla F_{ND}(w_{t-1})}\geq 1$. Then we will get that,
    \begin{align*}
        F(w_t)&\leq F(w_{t-1}) -\frac{\eta_t}{M} \norm{\nabla F_{ND}(w_{t-1})} + \frac{H\eta_t^2}{2},\\
        &\leq F(w_{t-1}) -\frac{\eta_t}{M} +\frac{H\eta_t^2}{2},\\
        &\leq F(w_{t-1}) -\frac{\eta_t}{2M},
    \end{align*}
    where we assume $\eta_t \leq \frac{1}{M H}$. This finishes the proof.

    Next, we assume we make an update in \textbf{case 2}. Note that using smoothness,
    \begin{align*}
        F(w_{t}) &= F(w_{t-1} + \eta_t g_t ),\\
        &\leq F(w_{t-1}) +\eta_t \inner{\nabla F(w_{t-1})}{g_t} + \frac{H\eta_t^2}{2}\norm{g_t}^2.
    \end{align*}
    Let's first assume $\norm{\nabla F(w_{t-1})}\geq 1$. Using the definition of $g_t$ we get,
    \begin{align*}
        F(w_{t}) &\leq F(w_{t-1}) -\eta_t \inner{\nabla F(w_{t-1})}{\frac{\nabla F(w_{t-1})}{\norm{\nabla F(w_{t-1})}}} + \frac{H\eta_t^2}{2}\norm{g_t}^2,\\
        &= F(w_{t-1}) -\eta_t \norm{\nabla F(w_{t-1})} + \frac{H\eta_t^2}{2},\\
        &\leq F(w_{t-1}) -\eta_t + \frac{H\eta_t^2}{2},\\
        &\leq F(w_{t-1}) -\frac{\eta_t}{2},  
    \end{align*}
    where we assume $\eta_t \leq \frac{1}{H}$. Now let's consider the case when $\norm{\nabla F(w_{t-1})}< 1$.
    \begin{align}
        F(w_{t}) &\leq F(w_{t-1}) -\eta_t \inner{\nabla F(w_{t-1})}{\nabla F(w_{t-1})} + \frac{H\eta_t^2}{2}\norm{\nabla F(w_{t-1})}^2,\nonumber\\
        &= F(w_{t-1}) -\eta_t \norm{\nabla F(w_{t-1})}^2 + \frac{H\eta_t^2}{2}\norm{\nabla F(w_{t-1})}^2,\nonumber\\
        &\leq F(w_{t-1}) -\frac{\eta_t}{2}\norm{\nabla F(w_{t-1})}^2, \label{eq:progress-case-2} 
    \end{align}
    where we assume $\eta_t \leq \frac{1}{H}$. Note that $\norm{\nabla F(w_{t-1})}\neq 0$ in this case, which means the algorithm makes non-zero progress on the average objective.
\end{proof}
We are ready to prove Lemma \ref{lem:termination}. By Lemma~\ref{lem:case1a_2a_progress}, we show that the algorithm makes non-zero progress on the average loss every time it is in case 1 and case 2. This implies that Algorithm~\ref{alg:reweight} will converge as the functions are bounded from below. 

We must prove that the average loss will also converge to $0$ when the algorithm converges. In this case, if the algorithm never terminates, then there won't be a time step $t$ satisfying that $F_m(w_t)< \epsilon + \delta$ for all $m\in [M]$, otherwise we will get into case 3 and then terminate. We can prove that this does not happen by contradiction. If possible, assume the algorithm never terminates.  Suppose the average loss will converge to $F(w^*)+ v =v$ for some $v>0$. That is to say, for any subsequence, $F(w_t)$ converges to $v$. Then, we list all possible subsequences as follows.
\begin{enumerate}
    \item For a subsequence where the algorithm is in case 2, we will make at least progress scaled with $\|\nabla F(w_{t-1})\|_2^2$. It implies that $\|\nabla F(w_{t-1})\|_2$ would converge to zero for $t$ in this subsequence. Thus applying convexity of $F$ we get that $$F(w_{t-1}) \leq F(w^*)+ \nabla F(w_{t-1}) ^\top (w_{t-1} - w^\star) \to F(w^\star) = 0.$$ This is a contradiction. 
    \item For any subset $S$ of $[M]$, for the subsequence where the algorithm is in case 1 and the predicted non-defecting set is $ND=S$, we will make at least progress scaled with $\|\Pi_P\left(\nabla F_{ND}(w_{t-1})\right)\|_2^2$. It implies that $\|\Pi_P\left(\nabla F_{ND}(w_{t-1})\right)\|_2^2$ would converge to zero for $t$ in this subsequence. There are two possible cases:
\begin{itemize}
    \item Let $\|\nabla F_{ND}(w_{t-1})\|_2^2$ also converge to zero for this sub-sequence. In this case again applying convexity of $F_{ND}$ we get that, $$F_{ND}(w_{t-1}) \leq F_{ND}(w^\star)+ \nabla F_{ND}(w_{t-1}) ^\top (w_{t-1} - w^\star) \to F(w^\star) = 0$$ converges to zero. This is again a contradiction.
    \item Let $\|\nabla F_{ND}(w_{t-1})\|_2^2$ not converge to zero for this subsequence. Then this would violate assumption 4, as everywhere outside the set $\mathcal{W}^\star$, $\nabla F_{ND}(w_{t-1})$ must have a non-zero component in $P$.
\end{itemize}
\end{enumerate}
Therefore, the algorithm can not converge to a point with a function value $F(w^*)+ v =v$. We are done with the proof.


\subsection{Proof of Lemma \ref{lem:case1b_2b_3}}

 Let's say that Algorithm~\ref{alg:reweight} terminates in Case 3 at time $t$ then we have for all $m \in D = [M]$,
\begin{align*}
    \epsilon + 2\delta &\geq^{(\text{Lemma \ref{lem:defection_detect}})} F_{m}(w_{t-1} - \eta_t \ngrad{m}(w_{t-1})),\\ 
    &\geq^{(\text{Convexity})} F_{m}(w_{t-1}) + \inner{\nabla F_{m}(w_{t-1})}{-\eta_t\ngrad{m}(w_{t-1})},\\
    &= F_{m}(w_{t-1}) + \inner{\nabla F_{m}(w_{t-1})}{-\eta_t \frac{\nabla F_{m}(w_{t-1})}{\norm{\nabla F_{m}(w_{t-1})}}},\\
    &= F_{m}(w_{t-1}) -\eta_t\norm{\nabla F_{m}(w_{t-1})},\\
    &\geq^{(Ass. \ref{ass:lip})} F_{m}(w_{t-1}) -\eta_tL.
\end{align*}
Assuming $\eta_t \leq \frac{\delta}{L}$ we get that for all $m\in[M]$,
\begin{align*}
    F_m(w_{t-1})  &\leq \epsilon + 3\delta,
\end{align*}
which proves the claim.

\begin{figure}[tbh]
    \centering
    \includegraphics[width=\textwidth]{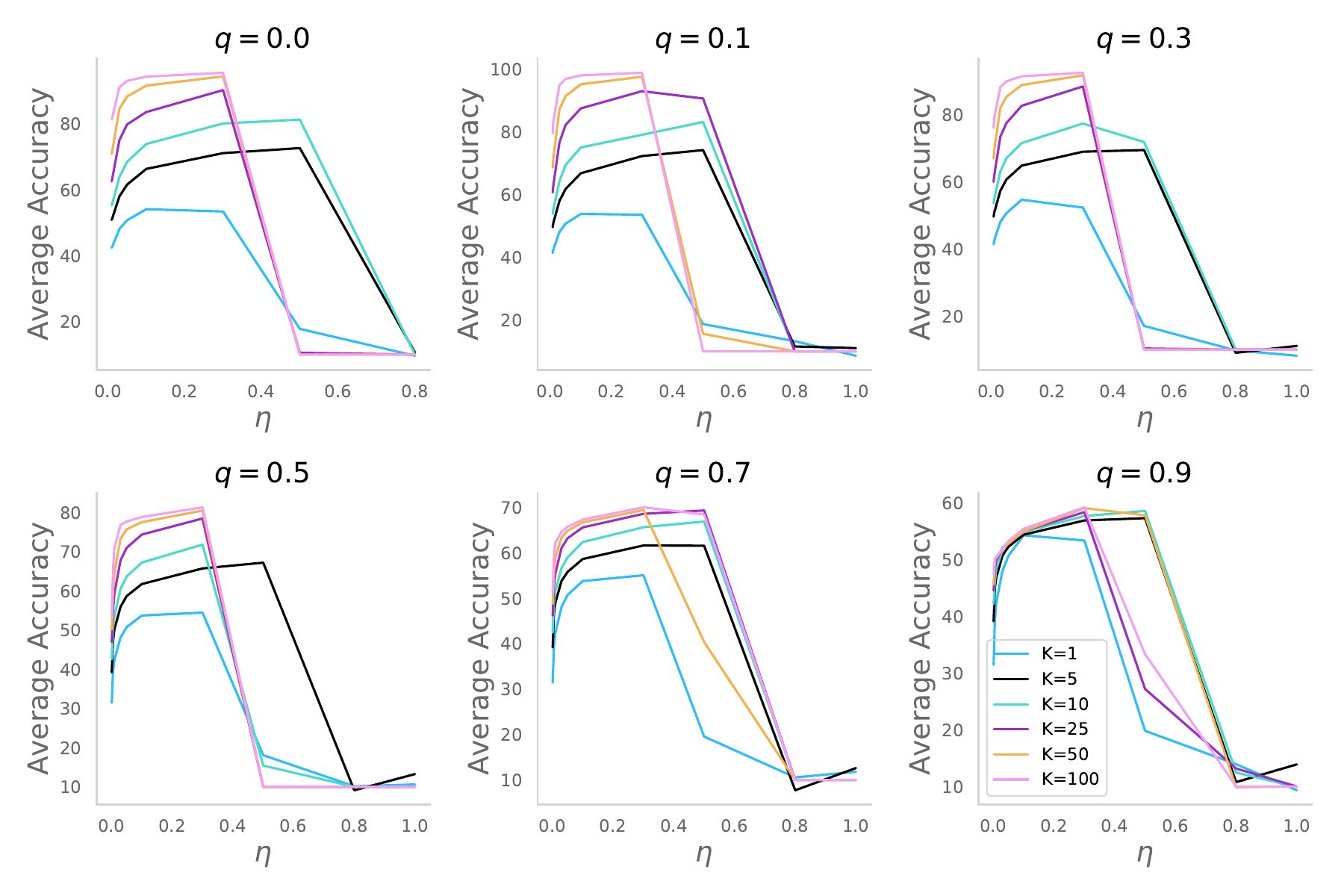}
    \caption{Fine-tuning the step-size $\eta$ for different data heterogeneity $q$ (across different plots) and the number of local update steps $K$ (different curves in each plot). The required precision $\epsilon =0$ during the fine-tuning phase.}
    \label{fig:fine_tune}
\end{figure}

\section{More Details on Experiments}\label{sec:exp}

\paragraph{Generating Data with Heterogeneity $q$.}  Denote the dataset $\ddd = \{\ddd_1,\cdots, \ddd_n\}$ where $n$ is the number of devices. To create a dataset with heterogeneity $q \in [0,1]$ for every device, we first pre-process the dataset of every device such that $|\ddd_1|=\cdots=|\ddd_n|$. Then for every device $i$, we let that device keep $(1-q)\cdot \ddd_i$  samples from their own dataset and generate a union dataset $\hat{\ddd}$ with the remaining samples from all devices, i.e. $\hat{\ddd} = q \cdot \ddd_1 \cup \cdots \cup q \cdot \ddd_n$. We use $q \cdot \ddd_i$ to denote a random split of $q$ portion from the dataset $\ddd_i$. Finally, the data with heterogeneity $q$ for every device $i$ is generated by 
\[
\hat{\ddd}_i = (1-q)\cdot \ddd_i \cup \frac{1}{n} \cdot \hat{\ddd}.
\]

\paragraph{Additional Experimental Results.} We present our fine-tuning process for finding the step size for different settings in Figure \ref{fig:fine_tune}. In Figures \ref{fig:defection_train_full} ---\ref{fig:defection_worker_full}, we also present additional findings with more variations in data heterogeneity $q$ and the number of local update steps $K$ besides the results we presented in Figure \ref{fig:defection_combined} of the main paper.

\begin{figure}[tbh]
    \centering
    \includegraphics[width=0.9\textwidth]{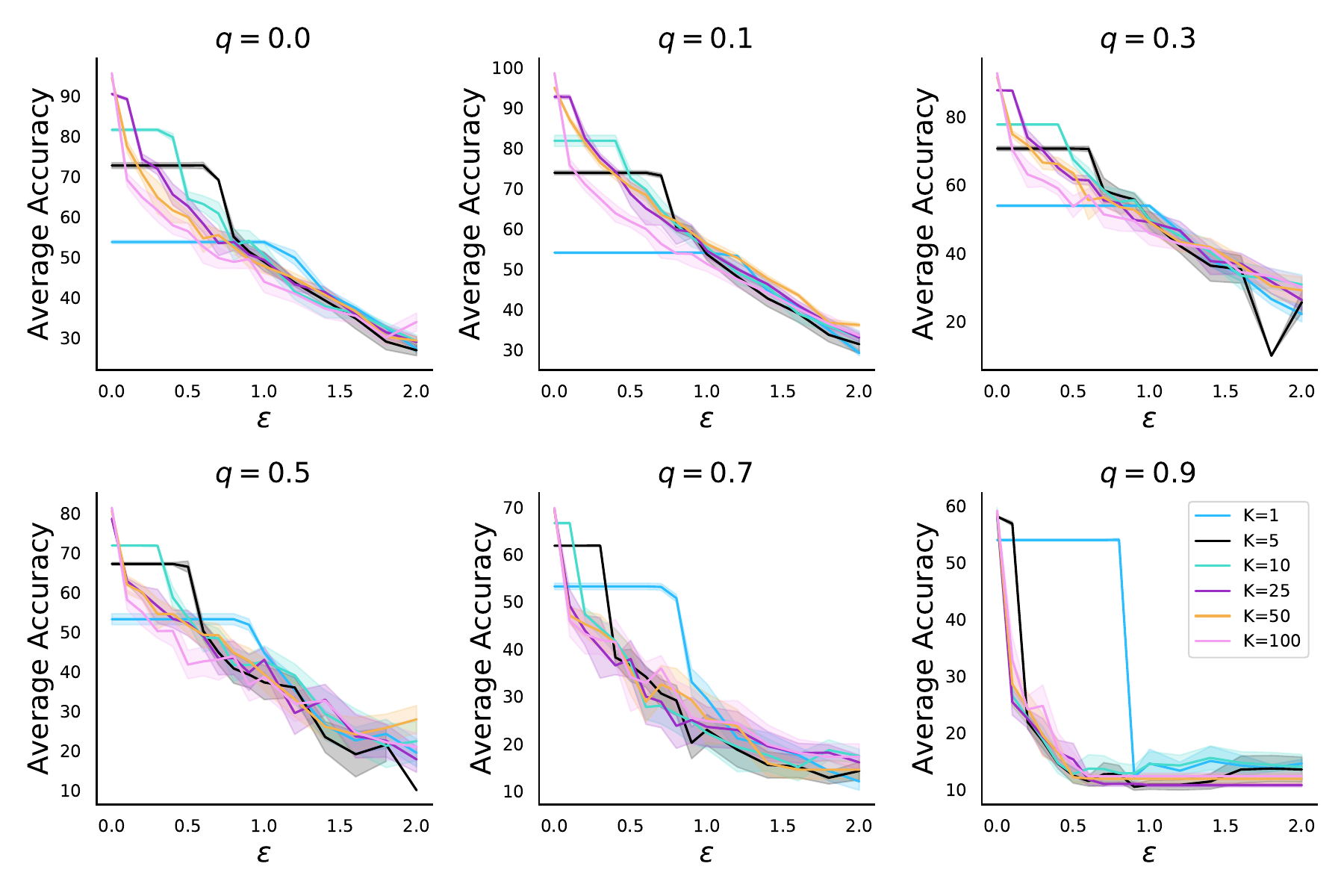}
    \caption{Additional findings on the effect of defection on average accuracy}
    \label{fig:defection_train_full}
\end{figure}

\begin{figure}[tbh]
    \centering
    \includegraphics[width=\textwidth]{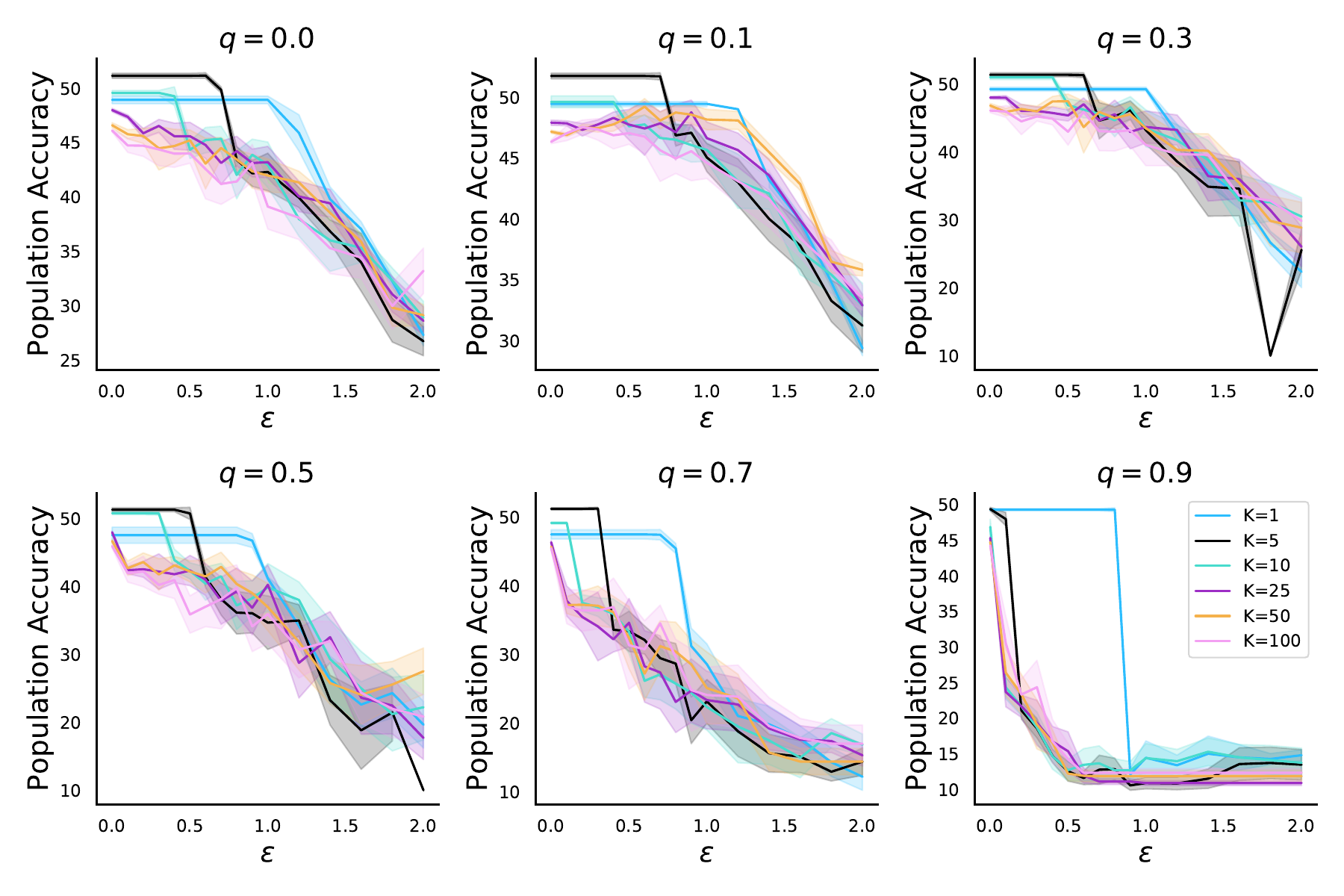}
    \caption{Additional findings on the effect of defection on population accuracy}
    \label{fig:defection_test_full}
\end{figure}

\begin{figure}[tbh]
    \centering
    \includegraphics[width=0.9\textwidth]{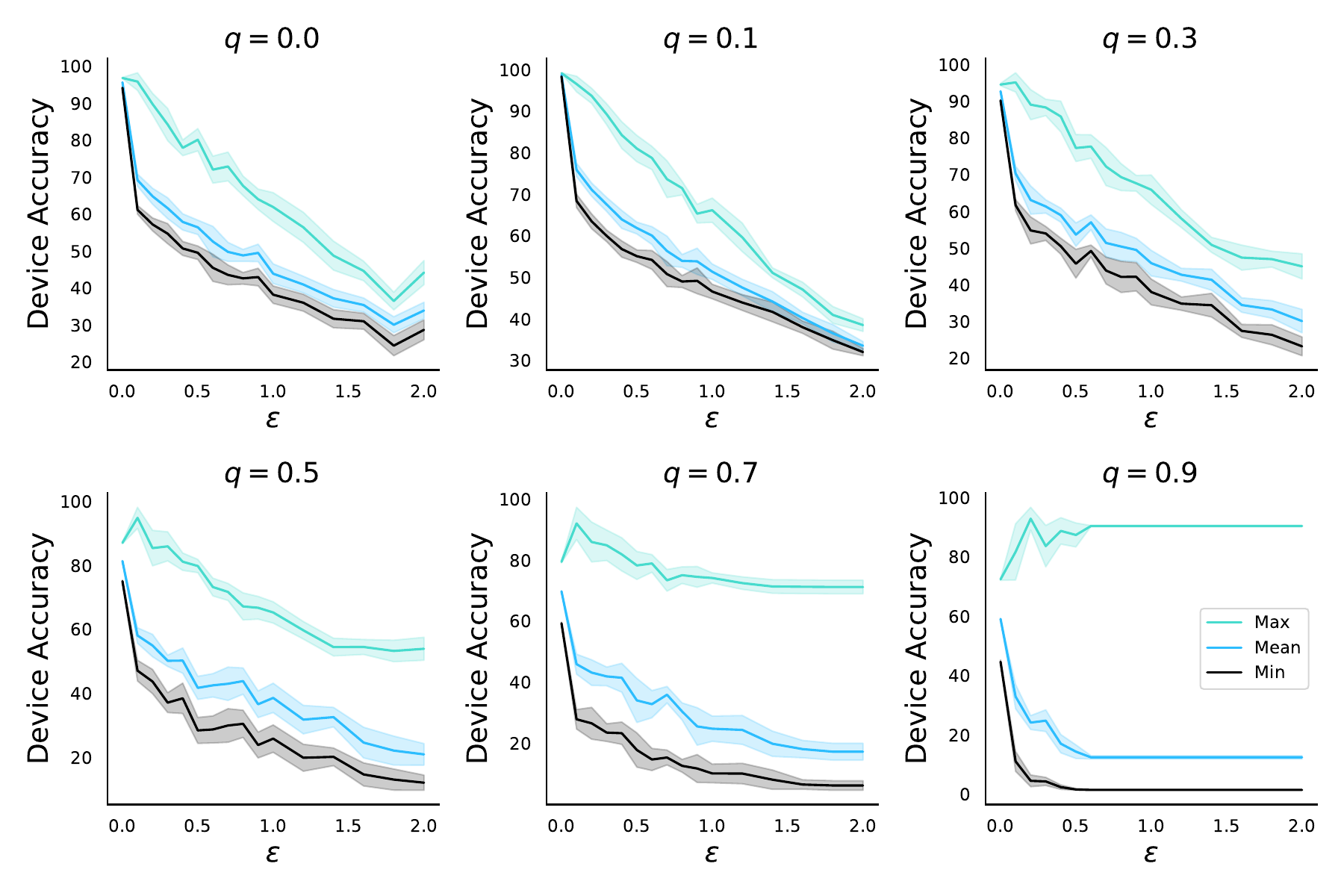}
    \caption{Additional findings on the effect of defection on the min, mean, and max device accuracies}
    \label{fig:defection_worker_full}
\end{figure}

\begin{figure}[!t]
     \centering
     \begin{subfigure}[b]{0.48\textwidth}
        \centering
        \includegraphics[width=\textwidth]{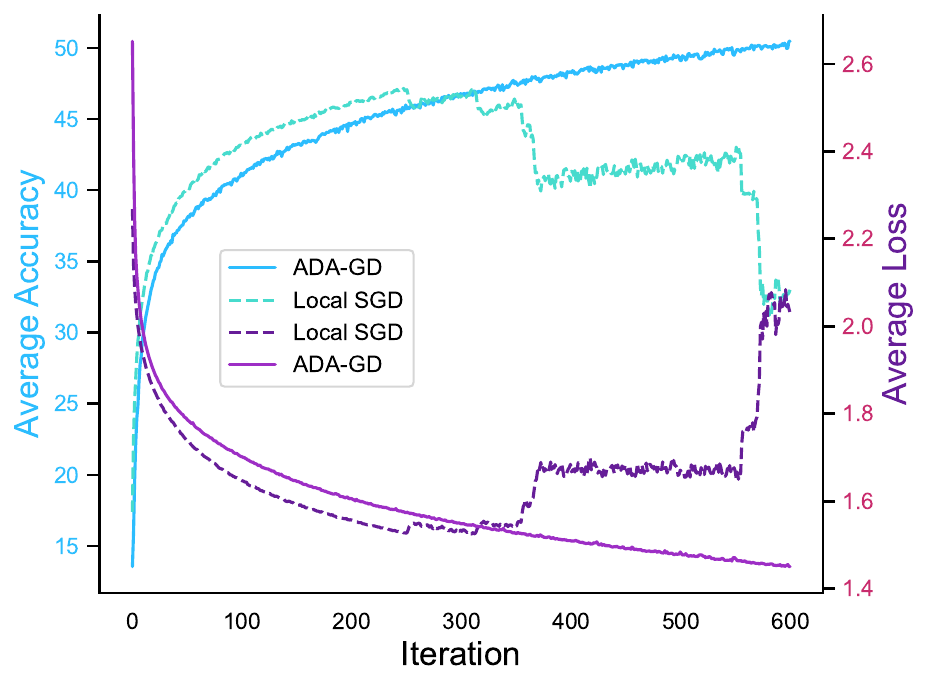}
         \caption{
         Both algorithms use $\eta=0.08$ and $\epsilon=0.2$.}
         \label{AppendFig:comparison_1}
     \end{subfigure}
     \hfill
     \begin{subfigure}[b]{0.48\textwidth}
         \centering
         \includegraphics[width=\textwidth]{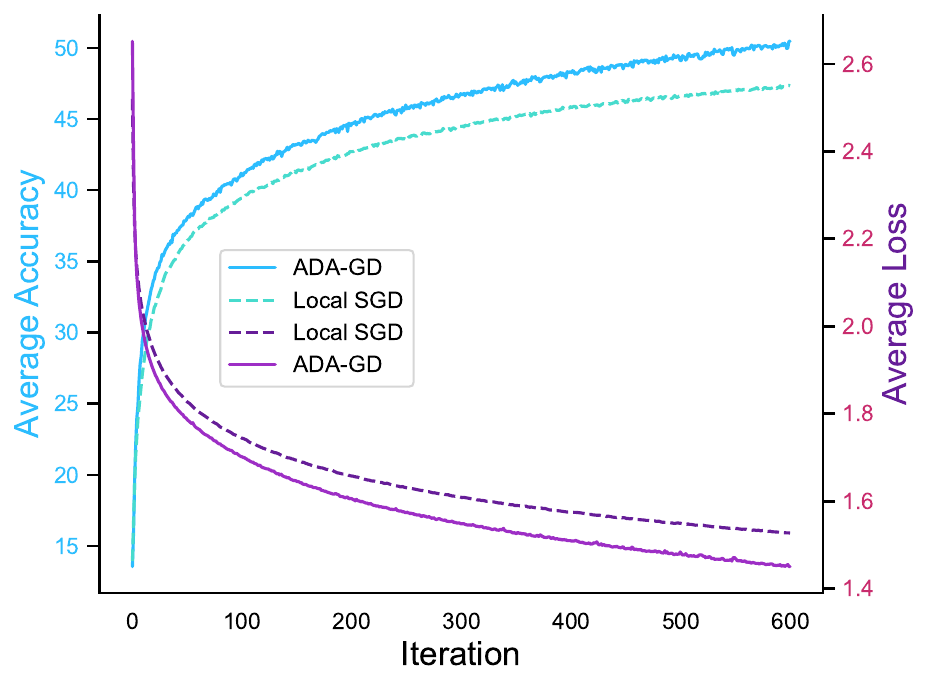}
         \caption{
         $\epsilon=0.2$, $\eta_{\textsc{ADA-GD}}=0.08$, $\eta_{\textsc{LSGD}}=0.01$.}
         \label{AppendFig:comparison_2}
     \end{subfigure}
        \caption{Additional findings on the performance of our method (ADA-GD) and federated averaging (Local SGD) for classification on 10 classes of CIFAR-10 \citep{krizhevsky2009learning} with 10 agents. The data heterogeneity $q=0.9$ and local update step $K=5$ for both experiments. Figure \ref{AppendFig:comparison_1} compares the performance of ADA-GD and Local-SGD under the same $\eta$ and $\epsilon$; Figure \ref{AppendFig:comparison_2} compares the performance of Local SGD with the largest step size where no agents will defect.}
        \label{AppendFig:our_algorithm}
\end{figure}


\end{document}